\def\eqref#1{equation~\ref{#1}}
\def\1{\bm{1}}
\def\vzero{{\bm{0}}}
\def\vone{{\bm{1}}}
\def\vb{{\bm{b}}}
\def\ve{{\bm{e}}}
\def\vh{{\bm{h}}}
\def\vk{{\bm{k}}}
\def\vq{{\bm{q}}}
\def\vs{{\bm{s}}}
\def\vt{{\bm{t}}}
\def\vv{{\bm{v}}}
\def\vw{{\bm{w}}}
\def\vx{{\bm{x}}}
\def\vy{{\bm{y}}}
\def\vz{{\bm{z}}}
\def\mA{{\bm{A}}}
\def\mD{{\bm{D}}}
\def\mE{{\bm{E}}}
\def\mH{{\bm{H}}}
\def\mI{{\bm{I}}}
\def\mK{{\bm{K}}}
\def\mP{{\bm{P}}}
\def\mV{{\bm{V}}}
\def\mW{{\bm{W}}}
\def\mX{{\bm{X}}}
\DeclareMathAlphabet{\mathsfit}{\encodingdefault}{\sfdefault}{m}{sl}
\SetMathAlphabet{\mathsfit}{bold}{\encodingdefault}{\sfdefault}{bx}{n}
\def\gD{{\mathcal{D}}}
\def\gF{{\mathcal{F}}}
\def\gL{{\mathcal{L}}}
\def\gR{{\mathcal{R}}}
\def\gS{{\mathcal{S}}}
\def\gT{{\mathcal{T}}}
\def\sR{{\mathbb{R}}}
\DeclareMathOperator*{\argmin}{arg\,min}
\theoremstyle{plain}
\newtheorem{theorem}{Theorem}[section]
\newtheorem{lemma}[theorem]{Lemma}
\theoremstyle{definition}
\theoremstyle{remark}
\title{Towards Understanding How Transformers Learn In-context Through a Representation Learning Lens}
\author{%
  Ruifeng Ren \\
	Gaoling School of Artificial Intelligence\\
	Renmin University of China\\
  	Beijing, China\\
  \texttt{renruifeng920@ruc.edu.cn} \\
  \And
  Yong~Liu\thanks{Corresponding Author} \\
  Gaoling School of Artificial Intelligence\\
  Renmin University of China\\
  Beijing, China\\
  \texttt{liuyonggsai@ruc.edu.cn} \\
}
\begin{document}

\maketitle

\begin{abstract}
  Pre-trained large language models based on Transformers have demonstrated remarkable in-context learning (ICL) abilities. With just a few demonstration examples, the models can implement new tasks without any parameter updates. However, it is still an open question to understand the mechanism of ICL. In this paper, we attempt to explore the ICL process in Transformers through a lens of representation learning. Initially, leveraging kernel methods, we figure out a dual model for one softmax attention layer. The ICL inference process of the attention layer aligns with the training procedure of its  dual model, generating token representation predictions that are equivalent to the dual model's test outputs. We delve into the training process of this  dual model from a representation learning standpoint and further derive a generalization error bound related to the quantity of demonstration tokens. Subsequently, we extend our theoretical conclusions to more complicated scenarios, including one Transformer layer and multiple attention layers. Furthermore, drawing inspiration from existing representation learning methods especially contrastive learning, we propose potential modifications for the attention layer. Finally, experiments are designed to support our findings.
\end{abstract}

\section{Introduction}

Recently, large language models~(LLMs) based on the Transformer architectures \citep{vaswani2017attention} has shown surprising in-context learning~(ICL) capabilities \citep{brown2020language,wei2022emergent,dong2022survey, liu2023pre}. 
By prepending several training examples before query inputs without labels, the models can make predictions for the queries and achieve excellent performance without any parameter updates.
This excellent capability enables pre-trained LLMs such as GPT models to be used in general downstream tasks conveniently.
Despite the good performance of the ICL capabilities, the mechanism of ICL still remains an open question.

In order to better understand the ICL capabilities, many works began to give explanations from different aspects. 
\citet{xie2021incontext} propose a Bayesian inference framework to explain how ICL occurs between pretraining and test time, where the LLMs infers a shared latent concept among the demonstration examples.
\citet{garg2022can} demonstrate through experiments that pre-trained Transformer-based models can learn new functions from in-context examples, including (sparse) linear functions, two-layer neural networks, and decision trees.
\citet{zhang2023and} adopt a Bayesian perspective and show that ICL implicitly performs the Bayesian model averaging algorithm, which is approximated by the attention mechanism.
\citet{li2023transformers} define ICL as an algorithm learning problem where a transformer model implicitly builds a hypothesis function at inference-time and derive generalization bounds for ICL.
\citet{han2023context} suggest that LLMs can emulate kernel regression algorithms and exhibit similar behaviors during ICL.
These works have provided significant insights into the interpretation of ICL capabilities from various perspectives.

In addition to the above explorations, there are also some attempts to relate ICL capabilities to gradient descent.
Inspired by the dual form of linear attention proposed in \citet{aiserman1964theoretical} and \citet{irie2022dual}, the ICL process is interpreted as implicit fine-tuning in the setting of linear attention by \citet{dai2022can}. 
However, there is still a certain noticeable gap between linear attention and the widely used softmax attention.
Additionally, this comparison is more of a formal resemblance and the specific details of gradient descent, including the form of the loss function and training data, require a more fine-grained exploration.
\citet{akyurek2022learning} show that by constructing specific weights, Transformer layers can perform fundamental operations (mov, mul, div, aff), which can be combined to execute gradient descent.
\citet{svon2023transformer} adopt another construction, such that the inference process on a single or multiple linear attention layers can be equivalently seen as taking one or multiple steps of gradient descent on linear regression tasks.
Building upon this weight construction method, subsequent work has conducted a more in-depth exploration of the capabilities of ICL under a causal setting, noticing that the inference of such attention layers is akin to performing online gradient descent~\citep{CasualLMnotgood,von2023uncovering}.
However, these analyses are still conducted under the assumption of linear attention and primarily focus on linear regression tasks, adopting specific constructions for the input tokens (concatenated from features and labels) and model weights.
This limits the explanation of the Transformer's ICL capabilities in more general settings.
Thus, the question arises: \textit{Can we relate ICL to gradient descent under the softmax attention setting, rather than the linear attention setting, without assuming specific constructions for model weights and input tokens?}


Motivated by the aforementioned challenges and following these works that connect ICL with gradient descent, we explore the ICL inference process from a representation learning lens.
First, by incorporating kernel methods, we establish a connection between the ICL inference process of one softmax attention layer and the gradient descent process of its dual model. 
The test prediction of the trained dual model will be equivalent to the ICL inference result. We analyze the training process of this dual model from the perspective of representation learning and compare it with existing representation learning methods. 
Then, we derive a generalization error bound of this process, which is related to the number of demonstration tokens.
Our conclusions can be easily extended to more complex scenarios, including a single Transformer layer and multiple attention layers. 
Furthermore, inspired by existing representation learning methods especially contrastive learning, we propose potential modifications to the attention layer and experiments are designed to support our findings.



\section{Preliminaries}

\subsection{In-context Learning with Transformers}\label{sec:2.1}
The model we consider is composed of many stacked Transformer decoder layers, each of which is composed of an attention layer and a FFN layer.
For simplicity, we have omitted structures such as residual connections and layer normalization, retaining only the most essential parts.
We consider the standard ICL scenario, where the model's input consists of demonstrations followed by query inputs, that is, the input can be represented as $\mX = [\mX_{D}, \mX_{T}] \in \mathbb{R}^{d_i \times (N+T)}$, where $\mX_D = [\vx_1, \vx_2, ..., \vx_N]$ denotes $N$ demonstration tokens, and $\mX_T = [\vx'_1, \vx'_2, ..., \vx'_T]$ denotes $T$ query tokens.
Here, we focus more on how tokens interact during model inference while ignoring the internal structure of demonstration tokens.
For the query input at position $T+1$, its output after one layer of Transformer can be represented as
\begin{equation}\label{attenH}
	\vh'_{T+1} = \mW_{V}\mX \mathrm{softmax} \left( (\mW_{K}\mX)^{T}\mW_{Q}\vx'_{T+1}/\sqrt{d_o} \right),
\end{equation}
\begin{equation}\label{eq:ffn}
	\widehat{\vx}'_{T+1} = \mW_2 \mathrm{ReLu}(\mW_1 \vh'_{T+1} + \vb_1) + \vb_2,
\end{equation}
where $\mW_{K}, \mW_{Q}, \mW_{V} \in \sR^{d_o \times d_i}$  are parameters for key, query, value projections and $\mW_1 \in \sR^{d_h \times d_o}$,$\mW_2 \in \sR^{d_o \times d_h}$,$\vb_1 \in \sR^{d_h}, \vb_2 \times \sR^{d_o}$ are FFN parameters.
Our concern is how the query token $\vx'_{T+1}$ learns in-context information from demonstrations.
Unlike previous work~\citep{svon2023transformer, zhang2023trained, bai2023transformers}, here we do not make additional assumptions about the structure of input matrix $\mX$ and parameters to study the Transformer's ability to implement some specific algorithms. 
Instead, we adopt the same setting as \citep{dai2022can} to study more general cases.

\subsection{Self-Supervised Representation Learning Using Contrastive Loss Functions}\label{CL}
Representation learning aims to learn embeddings of data to preserve useful information for downstream tasks.
One class of methods most relevant to our work is probably contrastive learning methods without negative samples \citep{simsiam, byol, SwAV, understandingsimsiam}.
Contrastive learning is a significant approach of self-supervised learning~(SSL) which aims at learning representations by minimizing the distance between the augmentations of the same data point~(positive samples) while maximizing the distance from different data points~(negative samples) \citep{MoCo, simclr, infoNCE, triplet}.
To alleviate the burden of constructing a sufficient number of negative samples while avoiding representational collapse, some works propose architectures for contrastive learning without negative samples, which mainly use weight-sharing network known as Siamese networks \citep{simsiam, byol, SwAV, understandingsimsiam}.
The architecture takes two augmentations $\vx_{1}, \vx_{2}$ from the same data $\vx$ as inputs, which will be processed by online network and target network respectively to obtain the corresponding representations, that is, $\hat{\vx}_{1} = f_{\mathrm{online}}(\vx_{1}), \hat{\vx}_{2} = f_{\mathrm{target}}(\vx_{2})$.
The two encoder networks share weights directly or using Exponential Moving Average~(EMA).
Then, $\hat{\vx}_{1}$ will be input into a predictor head to obtain the predictive representation $\vz_{1} = g(\hat{\vx}_{1})$.
Finally, we minimize the distance between the predictive representation and target representation, that is, $\mathcal{L} \left( \vz_{1}, \mathrm{StopGrad}(\hat{\vx}_{2}) \right)$
where $\mathrm{StopGrad}(\cdot)$ means $\hat{\vx}_{2}$ is treated as a constant during backpropagation.
For $\mathcal{L}(\cdot)$, we often choose the cosine similarity or the $l_{2}$-norm as a measure of distance, although they are equivalent when the vector is normalized.
Another class similar to our work is kernel contrastive learning \citep{kernelCL}. Given an anchor $\vx$ and its positive and negative samples $\vx^+, \vx^-$, it aims to optimize the loss function $\mathcal{L} = f(\vx)^T(f(\vx^-) - f(\vx^+))$, where $f(\vx) = \mW\phi(\vx)$ and $\phi(\vx)$ is the feature mapping for some kernel.
We will consider the gradient descent process corresponding to the inference process of ICL from the perspective of representation learning and compare it with the two aforementioned representation learning patterns.

\subsection{Gradient Descent on Linear Layer is the Dual Form of Linear Attention}\label{linearGDproof}
It has been found that the linear attention can be connected to the linear layer optimized by gradient descent~\citep{aiserman1964theoretical,irie2022dual, dai2022can}, that is, the gradient descent on linear layer can be seen as the dual form \footnote{It should be clarified that the term "dual" here is different from the one in mathematical optimization theory. Instead, it follows the terminology used in previous works \citep{irie2022dual,dai2022can}, where the forward process of the attention layer and backward process on some model are referred to as a form of "dual".} of linear attention.
A simple linear layer can be defined as $f_{L}(\vx) = \mW\vx,$ where $\mW \in \sR^{d_o \times d_i} $ is the projection matrix.
Given training inputs $[\vx_{i}]^{N}_{i=1} \in \sR^{d_i}$ with their labels $[\vy_{i}]^{N}_{i=1} \in \sR^{d_o}$, a linear layer can output the predictions $[\hat{\vy}_{i}]^{N}_{i=1}$ where $\hat{\vy}_{i} = \mW\vx_{i}$ and then compute certain loss $\mathcal{L}(\hat{\vy}_{i},\vy_{i})$ for training.
Backpropagation signals $[\ve_{i}]^{N}_{i=1} \in \sR^{d_o}$ will be produced to update $\mW$ in gradient descent process where $\ve_{i} = -\eta \left( \nabla_{\hat{\vy}_{i}}\mathcal{L} \right) $ if we set  $\eta$ as the learning rate.
During test time, the trained weight matrix $\widehat{\mW}$ can be represented by its initialization $\mW_{0}$ and the updated part $\Delta \mW$, that is,
\begin{equation}\label{linearW}
	\widehat{\mW} = \mW_{0} + \Delta \mW = \mW_{0} + \sum_{i=1}^{N} \ve_{i} \otimes \vx_{i},
\end{equation}
where $\otimes$ denotes the outer product according to the chain rule of differentiation.
On the other hand, this process can be viewed from the perspective of linear attention.
Let $[\vk_{i}]^{N}_{i=1},  [\vv_{i}]^{N}_{i=1} \in \sR^{d_i}$ denote the $N$ key and value vectors constituting matrices $\mK,\mV \in \sR^{d_i \times N}$ respectively. 
For a given query input $\vq \in \sR^{d_i}$, linear attention is typically defined as the weighted sum of these value vectors
\begin{equation*}
	\begin{aligned}
		\mathrm{LA}(\mV,\mK,\vq) &= \mV \mK^{T} \vq = \sum_{i=1}^{N} \vv_{i} \vk_{i}^{T} \vq = \left( \sum_{i=1}^{N} \vv_{i} \otimes \vk_{i} \right) \vq.
	\end{aligned}
\end{equation*}
Then, we can rewrite the output of a linear layer during test time as
\begin{equation}\label{linearGD}
	\begin{aligned}
		f_{L}(\vx_{test}) &=  \widehat{\mW} \vx_{test}  = \mW_{0}\vx_{test} + \left( \sum_{i=1}^{N} \ve_{i} \otimes \vx_{i} \right) \vx_{test} = \mW_{0}\vx_{test} + \mathrm{LA}(\mE,\mX,\vx_{test}) ,
	\end{aligned}
\end{equation}
where $\mE \in \sR^{d_{o} \times N} $ and $\mX \in \sR^{d_{i} \times N}$ are stacked by backpropagation signals $[\ve_{i}]^{N}_{i=1} $ and training inputs $[\vx_{i}]^{N}_{i=1}$
respectively. 
We can find from Eq~(\ref{linearGD}) that the trained weight $\widehat{\mW}$ records all training datapoints and the test prediction of the linear layer indicates which training datapoints are chosen to activate using $\mathrm{LA}(\cdot)$ where $[\ve_{i}]^{N}_{i=1} $ can be considered as values while $[\vx_{i}]^{N}_{i=1}$ as keys and $\vx_{test}$ as the query.
This interpretation uses gradient descent as a bridge to connect predictions of linear layers with linear attention, which can be seen as a simplified softmax attention used in Transformers.

Inspired by this relationship, \citet{dai2022can} understand ICL as implicit fine-tuning.
However, this interpretation based on linear attention deviates from the softmax attention used in practical Transformers.
Furthermore, this alignment is also ambiguous as the specific details of the gradient descent process, including the form of loss function and dataset, have not been explicitly addressed.
In addition, \citet{svon2023transformer,CasualLMnotgood} also connect ICL with gradient descent for linear regression tasks using weight construction methods, where parameters $\mW_{K}$, $\mW_{Q}$ and $\mW_{V}$ of the self-attention layer need to roughly adhere to a specific constructed form.
However, these analyses rely on the setting of linear regression tasks and assumptions about the form of input tokens (concatenated with features and labels), which limits the interpretability of ICL capabilities from the perspective of gradient descent.
Thus, we attempt to address these issues in the following sections.

\begin{figure}[t]
	\centering
	\includegraphics[scale=0.33]{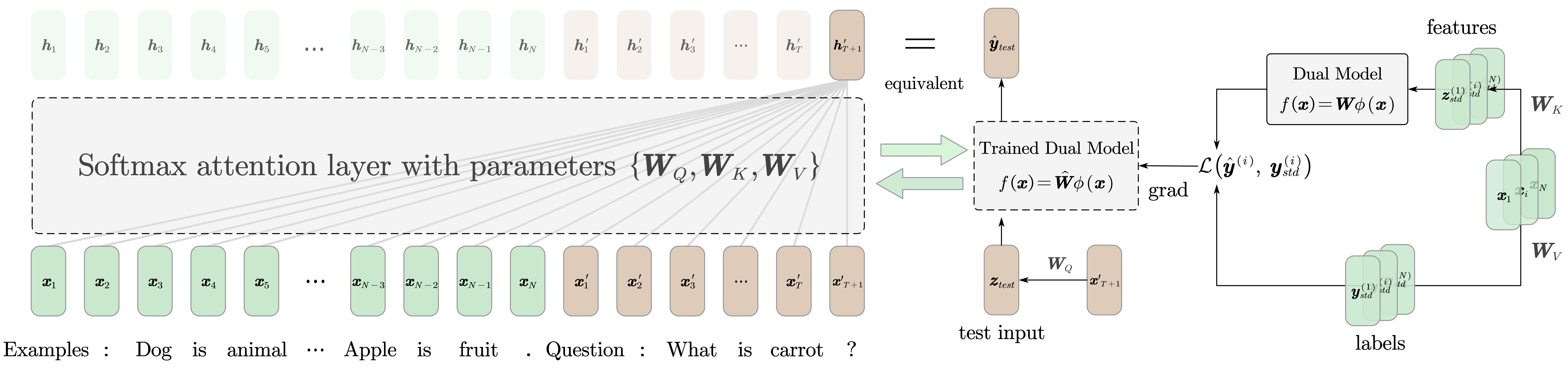}
	\caption{The ICL output $\vh'_{N+1}$ of one softmax attention layer is equivalent to the test prediction $\hat{\vy}_{test}$ of its trained dual model $f(\vx) = \widehat{\mW}\phi(\vx)$. 
	The training data and test input can be obtained by linear transformations of demonstration and query tokens, respectively.}
	\label{fig:equ}
\end{figure}

\section{Connecting ICL with Gradient Descent}\label{sec:3}
In this section, we will address two questions discussed above: (i) \textit{Without assuming specific constructions for model weights and input tokens, how to relate ICL to gradient descent in the setting of softmax attention instead of linear attention?} (2) \textit{What are the specific forms of the training data and loss function in the gradient descent process corresponding to ICL?} In addressing these two questions, we will explore the gradient descent process corresponding to ICL from the perspective of representation learning.

\subsection{Connecting Softmax Attention with Kernels}
Before we begin establishing the connection between ICL and gradient descent, we need to firstly rethink softmax attention with kernel methods.
\citet{dai2022can} connect ICL with gradient descent under the linear attention setting. 
In fact, it is completely feasible to interpret ICL under softmax attention with the help of kernel methods.
We define the attention block as
\begin{equation}\label{eq:A}
	\mA = \mathrm{softmax} \left( (\mW_{K}\mX)^{T}\mW_{Q}\mX / \sqrt{d_{o}} \right),
\end{equation}
which can be viewed as the product of an unnormalized part $\mA_u$ and a normalizing multiplier $\mD$, that is,
\begin{equation}\label{Au}
	\begin{aligned}
	\mA =\mA_{u}\mD^{-1},	~~\mA_{u} &= \mathrm{exp} \left(  (\mW_{K}\mX)^{T}\mW_{Q}\mX / \sqrt{d_{o}} \right),  ~~\mD = \mathrm{diag}(\vone_{N}^{T}\mA_{u}), 
	\end{aligned}
\end{equation}
where $\mathrm{exp}(\cdot)$ is element-wise.
Similar in \citep{performer}, we define softmax kernel $K_{sm}: \sR^{d_{o}} \times \sR^{d_{o}} \to \sR_{+} $ as $ K_{sm}(\vx, \vy) = e^{\vx^{T}\vy} = e^{\frac{\| \vx \|^{2} + \| \vy \|^{2}}{2}} K_{guass}(\vx, \vy)$ 
where $ K_{guass} = e^{-\|\vx - \vy \|^{2}/2}$ is the guassian kernel when the variance $\sigma^{2} = 1$.
According to Mercer's theorem \citep{mercer}, there exists some mapping function $\phi: \sR^{d_{o}} \to \sR^{d_{r}}$ satisfying that $K_{sm}(\vx, \vy) = \phi(\vx)^{T} \phi(\vy)$.
Thus, noting that when omitting the $\sqrt{d_{o}}$-renormalization and equivalently normalize key and value vectors in Eq~(\ref{Au}), every entry in the unnormalized part $\mA_{u}$ can be seen as the output of softmax kernel $K_{sm}$ defined for the mapping $\phi$, which can be formulated as:
\begin{equation}\label{kernel}
	\begin{aligned}
		\mA_{u}(i,j) = \mathrm{exp} \left(  (\mW_{K}\vx_{i})^{T}\mW_{Q}\vx_{j} \right) = K_{sm}(\mW_{K}\vx_{i},\mW_{Q}\vx_{j} ) = \phi(\mW_{K}\vx_{i})^{T} \phi(\mW_{Q}\vx_{j}).
	\end{aligned}
\end{equation}
There have been many forms of mapping function $\phi(\cdot)$ used in linear Transformers research to approximate this non-negative kernel \citep{performer, katharopoulos2020transformers, peng2021random, lu2021soft}.
For example, we can choose $\phi(\cdot)$ as positive random features which has the form $\phi(\vx) = e^{\vw^T\vx - \| \vx \|^2 /2} $ to achieve unbiased approximation \citep{performer}.
Alternatively, we can also choose $\phi(\vx) = \mathrm{elu}(\vx) + 1 $ proposed by  \citet{katharopoulos2020transformers}.

\subsection{The Gradient Descent Process of ICL}\label{sec:3.2}
Now, we begin to establish the connection between the ICL inference process of a softmax attention layer and gradient descent.
We focus on a softmax attention layer in a trained Transformer model, where the parameters $\{ \mW_Q, \mW_K, \mW_V \}$ have been determined and the input $\mX = [\mX_{D},\mX_T]$ has the form introduced in Section~\ref{sec:2.1}.
Then, after the inference by one attention layer, the query token at position $T+1$ will have the form $\vh'_{T+1}$ formulated by Eq~(\ref{attenH}).

On the other hand, given a specific softmax kernel mapping function $\phi(\vx)$ that satisfies Eq~(\ref{kernel}), we can define the dual model for the softmax attention layer as
\begin{equation}
	f(\vx) = \mW\phi(\vx),
\end{equation}
where $\mW \in \sR^{d_{o} \times d_{r}}$ is parameters.
We assume that the dual model obtains its updated weights $\widehat{\mW}$ after undergoing one step of gradient descent with some loss function $\mathcal{L}$. 
Subsequently, when we take $\vz_{test} = \mW_{Q}\vx'_{T+1}$ as the test input, we can obtain its test prediction as
\begin{equation*}
	\hat{\vy}_{test} = f(\vz_{test}) = f \left( \mW_{Q} \vx'_{T+1} \right) =  \widehat{\mW} \phi \left( \mW_{Q} \vx'_{T+1} \right).
\end{equation*}
We will show that $\vh'_{T+1}$ in Eq~(\ref{attenH}), is strictly equivalent to the above test prediction $\hat{\vy}_{test}$, which implies that the inference process of ICL involves a gradient descent step on the dual model.
This can be illustrated by the following theorem:
\begin{theorem}\label{thm1}
	The query token $\vh'_{T+1}$ obtained through ICL inference process with one softmax attention layer, is equivalent to the test prediction $\hat{\vy}_{test}$ obtained by performing one step of gradient descent on the dual model $f(\vx) = \mW\phi(\vx)$.
	The form of the loss function $\mathcal{L}$ is:
	\begin{equation}\label{lossL}
		\mathcal{L} = - \frac{1}{\eta D} \sum_{i=1}^{N} \left(\mW_{V} \vx_{i} \right)^{T} \mW\phi(\mW_{K}\vx_{i}),
	\end{equation}
	where $\eta$ is the learning rate and $D$ is a constant.
\end{theorem}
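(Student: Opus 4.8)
The plan is to bring both sides of the claimed identity into one common form --- a linear combination of the value vectors $\{\mW_V\vx_i\}_{i=1}^{N}$ weighted by kernel evaluations against the transformed query $\mW_Q\vx'_{T+1}$ --- and then match them term by term. Throughout I take the dual model with zero initialization $\mW_0=\mathbf{0}$, so that the trained weight $\widehat{\mW}$ equals the single gradient increment; this is needed, since a nonzero $\mW_0$ would leave a spurious residual $\mW_0\phi(\mW_Q\vx'_{T+1})$ in the test prediction.

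\emph{Step 1 (kernelize the attention output).} Starting from Eq.~(\ref{attenH}) and the decomposition $\mA=\mA_u\mD^{-1}$ of Eq.~(\ref{Au}), together with the feature-map identity $\mA_u(i,j)=\phi(\mW_K\vx_i)^T\phi(\mW_Q\vx_j)$ of Eq.~(\ref{kernel}) --- absorbing the $\sqrt{d_o}$ factor into $\mW_K,\mW_Q$ as in Section~3.1 and taking the query's attention over the $N$ demonstration keys --- I would rewrite
\[
	\vh'_{T+1}=\frac{1}{D}\sum_{i=1}^{N}\bigl(\mW_V\vx_i\bigr)\,\phi(\mW_K\vx_i)^T\phi(\mW_Q\vx'_{T+1}),
	\qquad
	D=\sum_{j=1}^{N}\phi(\mW_K\vx_j)^T\phi(\mW_Q\vx'_{T+1}),
\]
so that $D$ is precisely the softmax normalizer of the query column (the relevant diagonal entry of $\mD$); this pins down the constant $D$ in the statement.

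\emph{Step 2 (one gradient step on the dual model).} Because $\mathcal{L}$ in Eq.~(\ref{lossL}) is linear in $\mW$, its gradient is just the coefficient matrix, $\nabla_\mW\mathcal{L}=-\frac{1}{\eta D}\sum_{i=1}^{N}(\mW_V\vx_i)\otimes\phi(\mW_K\vx_i)$. A single gradient-descent step with learning rate $\eta$ from $\mW_0=\mathbf{0}$ then gives $\widehat{\mW}=-\eta\nabla_\mW\mathcal{L}=\frac{1}{D}\sum_{i=1}^{N}(\mW_V\vx_i)\otimes\phi(\mW_K\vx_i)$ --- the $\eta$ cancels the $1/\eta$ in $\mathcal{L}$, which is exactly why that prefactor appears. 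Evaluating the test prediction and using $\phi(\mW_K\vx_i)^T\phi(\mW_Q\vx'_{T+1})=K_{sm}(\mW_K\vx_i,\mW_Q\vx'_{T+1})$,
\[
	\hat{\vy}_{test}=\widehat{\mW}\,\phi(\mW_Q\vx'_{T+1})=\frac{1}{D}\sum_{i=1}^{N}(\mW_V\vx_i)\,\phi(\mW_K\vx_i)^T\phi(\mW_Q\vx'_{T+1}),
\]
which coincides term by term with the expression from Step~1, hence $\hat{\vy}_{test}=\vh'_{T+1}$.

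\emph{Main obstacle.} The algebra is short; the genuinely delicate point is the normalizer. The softmax denominator $D$ depends on the same vectors $\mW_K\vx_i$ that play the role of the dual model's training inputs, so for an \emph{exact} equivalence one must treat $D$ as a frozen constant inside the loss --- i.e.\ place a stop-gradient on the softmax denominator --- which is what the explicit $1/(\eta D)$ factor in Eq.~(\ref{lossL}) encodes, and one then has to verify afterwards that this $D$ is the column normalizer of $\mA$. The remaining points are routine bookkeeping: the $\sqrt{d_o}$ rescaling (absorbed into $\mW_K,\mW_Q$), the existence of an exact Mercer feature map $\phi$ with $K_{sm}(\vx,\vy)=e^{\vx^T\vy}=\phi(\vx)^T\phi(\vy)$ (guaranteed since $e^{\vx^T\vy}$ is positive definite, though $\phi$ may be infinite-dimensional and is replaced in practice by the approximations of Section~3.1), and the zero-initialization convention $\mW_0=\mathbf{0}$.
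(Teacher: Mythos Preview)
Your algebra in Step~2 is fine, and the overall strategy---kernelize $\vh'_{T+1}$, compute $\widehat{\mW}$ from one gradient step, and match---is exactly the paper's. But Step~1 drops a term that the paper keeps, and your choice $\mW_0=\mathbf{0}$ is what forces you to drop it.

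In Eq.~(\ref{attenH}) the query $\vx'_{T+1}$ attends to the full input $\mX=[\mX_D,\mX_T]$, not just the $N$ demonstration tokens. The correct kernelized expansion is therefore
\[
\vh'_{T+1}=\frac{1}{D'}\Biggl[\sum_{i=1}^{N}(\mW_V\vx_i)\,\phi(\mW_K\vx_i)^{T}
+\sum_{j=1}^{T}(\mW_V\vx'_j)\,\phi(\mW_K\vx'_j)^{T}\Biggr]\phi(\mW_Q\vx'_{T+1}),
\]
with the normalizer $D'$ summing over all $N+T$ keys, not just the $N$ demonstration keys you wrote. The paper absorbs the second sum---the contribution of the preceding query tokens $\mX_T$---into the initialization
\[
\mW_0=\frac{1}{D'}(\mW_V\mX_T)\,\phi(\mW_K\mX_T)^{T},
\]
which is precisely the ``spurious residual $\mW_0\phi(\mW_Q\vx'_{T+1})$'' you said must vanish. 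It does not vanish: it is the zero-shot piece of the attention output, and setting $\mW_0=\mathbf{0}$ makes the claimed equality $\hat{\vy}_{test}=\vh'_{T+1}$ fail whenever $T>0$. Your argument is only valid in the degenerate case $T=0$; for the general statement you need the nonzero $\mW_0$ above and the full normalizer $D=D'$.
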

Proof can be found in Appendix~\ref{app:thm1}.
Theorem~\ref{thm1} demonstrates the equivalence between the ICL inference process and gradient descent.
Below, we delve into more detailed discussions:

\textbf{Training Set and Test Input:} In fact, once the attention layer has already been trained, that is, $\mW_{K}, \mW_{Q}, \mW_{V}$ has been determined, the demonstration tokens $[\vx_{i}]_{i=1}^{N}$ will be used to construct a training set for the dual model.
Specifically, the training data has the form $\{ \vz_{std}^{(i)},\vy_{std}^{(i)} \}_{i=1}^{N}$ where $\vz_{std}^{(i)} = \mW_{K}\vx_{i}$ as inputs and $\vy_{std}^{(i)} = \mW_{V} \vx_{i}$ as their labels.
During training stage, for each input $\vz_{std}^{(i)}$, the dual model outputs its prediction $\hat{\vy}^{(i)} = f\left(\vz_{std}^{(i)}\right) = \mW \phi \left(\vz_{std}^{(i)} \right) = \mW \phi \left( \mW_{K}\vx_{i} \right)$.
Then, the loss function Eq~(\ref{lossL}) can be rewritten as
$	\mathcal{L}  =  -  \frac{1}{\eta D} \sum_{i=1}^{N}  (\vy_{std }^{(i)})^{T} \hat{\vy}^{(i)}, $
which can be regarded as the cosine similarity.
Then, using this loss function and the training data, we can perform one step of Stochastic Gradient Descent~(SGD) on the dual model and obtain the updated $\widehat{\mW}$.
Finally, during the testing stage, we take  $\vz_{test} = \mW_{Q} \vx'_{T+1}$ as the test input to get its prediction which will be consistent with the ICL result $\vh'_{T+1}$, that is, $\hat{\vy}_{test} = f(\vz_{test}) =  \widehat{\mW} \phi \left( \mW_{Q} \vx'_{T+1} \right) = \vh'_{T+1}$.
This process can be illustrated in Figure~\ref{fig:equ}.
Demonstration tokens provide information about the training data points and the weight matrix $\widehat{\mW}$ is optimized to learn sufficient knowledge about demonstrations.
This gradient descent process using the loss function $\mathcal{L}$ applied to $f(\vx)$ can be seen as the dual form of the ICL inference process of the attention layer.

\begin{figure}[t]
	\centering
	\includegraphics[scale=0.3]{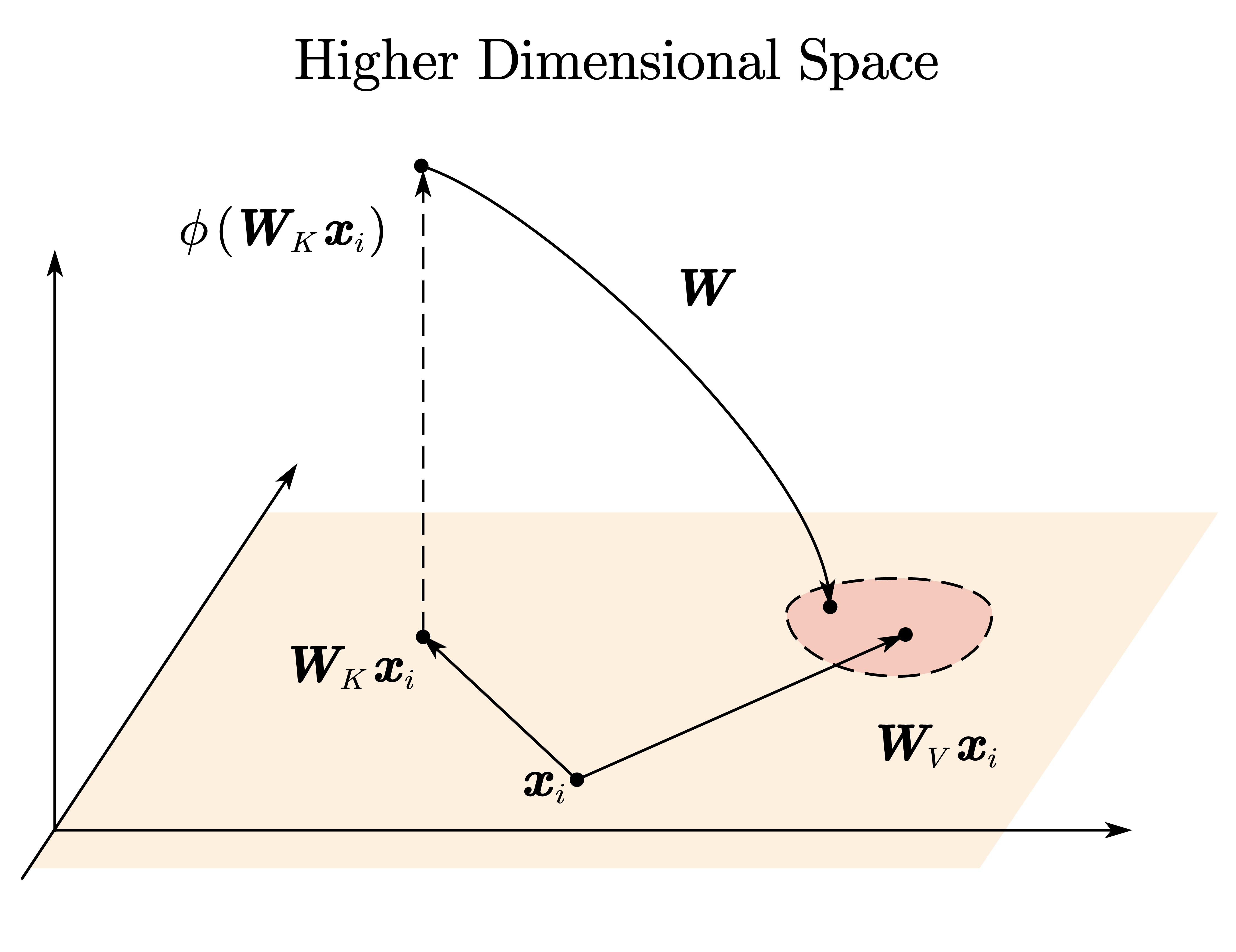}
	\includegraphics[scale=0.3]{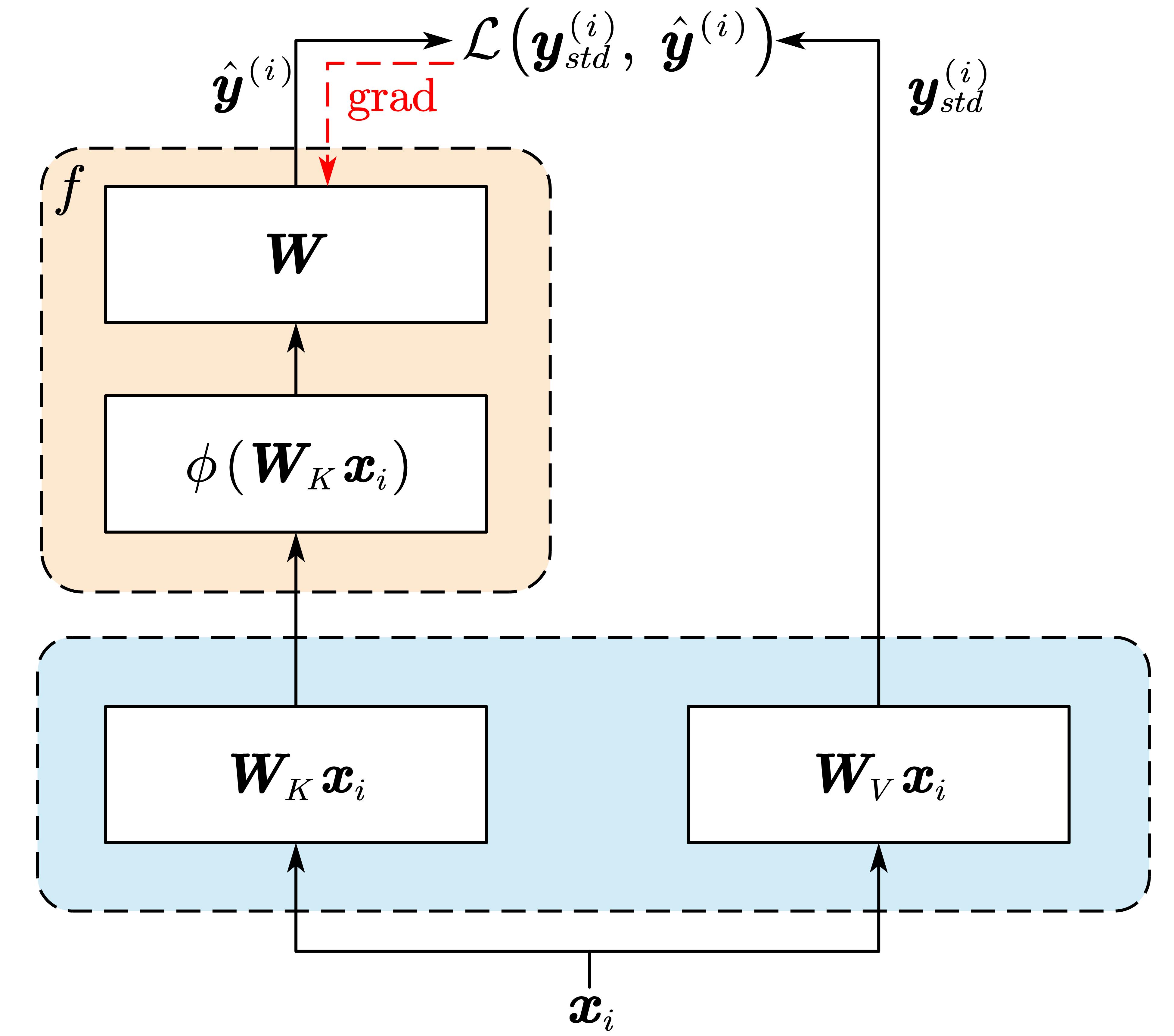}
	\includegraphics[scale=0.3]{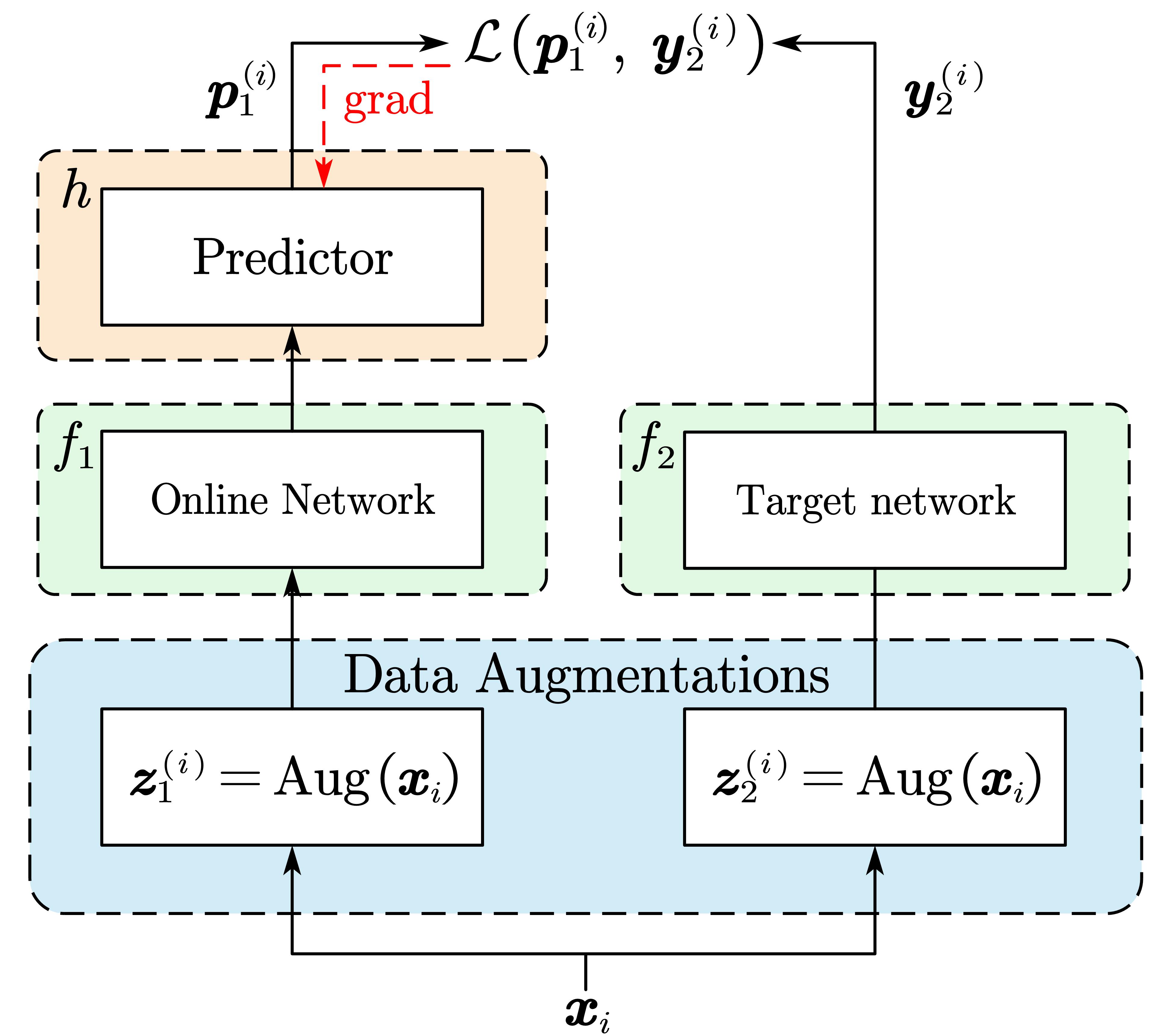}
	\includegraphics[scale=0.3]{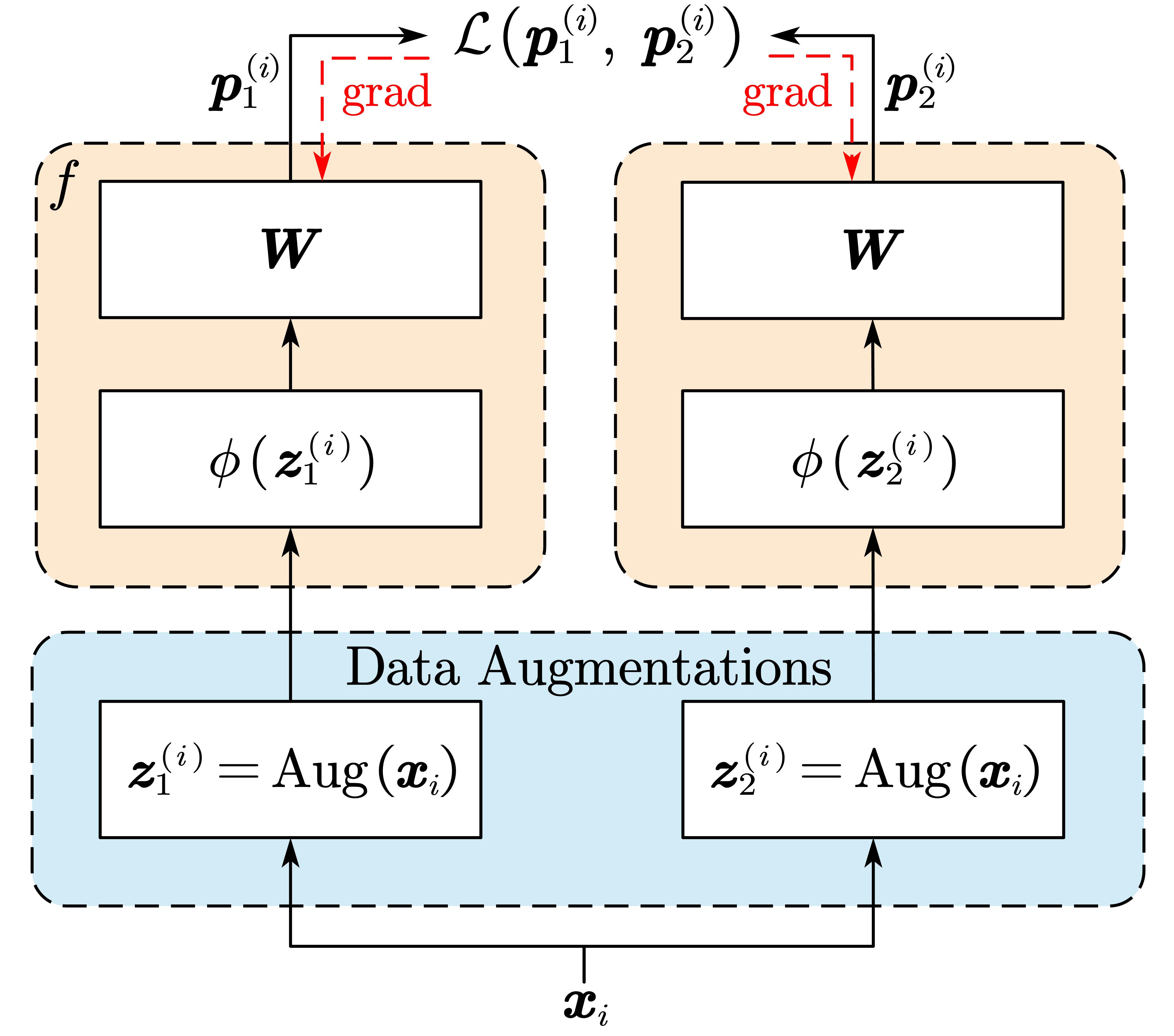}
	\caption{\textbf{Left Part:} The representation learning process for the ICL inference by one attention layer.
		\textbf{Remaining Part: }Comparison of the ICL Representation Learning Process (\text{Center Left}), Contrastive Learning without Negative Samples~(\text{Center Right}), and Contrastive Kernel Learning~(\text{Right}).}
	\label{fig:ICL-CL}
\end{figure}

\textbf{Representation Learning Lens: }
Even though we have now clarified the details of the gradient descent process of ICL, what does this process more profoundly reveal to us? 
In fact, for a encoded demonstration token $\vx_{i}$, the key and value mapping will generate a pair of features $\mW_{K}\vx_{i}$ and $\mW_{V}\vx_{i}$ that exhibit a certain distance from each other, akin to positive samples in contrastive learning. 
And then, $\phi(\vx)$ projects $\mW_{K}\vx_{i}$ into a higher-dimensional space to capture deeper features.
Finally, the weight matrix $\mW$, which maps $\phi(\mW_{K}\vx_{i})$ back to the original space, is trained to make the mapped vector as close as possible to $\mW_{V}\vx_{i}$.
This process is illustrated in Figure \ref{fig:ICL-CL}.
Below, we attempt to understand this process from the perspective of existing representation learning methods introduced in Section~\ref{CL}, although we emphasize that there are certain differences between them.

\textbf{Comparison with Contrastive Learning without Negative Samples:}
~If we consider the key and value mapping as two types of data augmentation, then from the perspective of contrastive learning without negative samples, this process can be similarly formalized as
$$
\min_{\mW}~\mathcal{L}\left( \hat{\vy}^{(i)},\vy_{std}^{(i)}  \right) = \mathcal{L}\left(\hat{\vy}^{(i)}, \mathrm{StopGrad}(\vy_{std}^{(i)})\right),
$$
where $\mathrm{StopGrad}(\cdot)$ is naturally applicable because there are no learning parameters involved in the generation process of the representation $\vy_{std}^{(i)}$.
However, it's important to note that the representation learning process of ICL is much simpler: Firstly, the online and target networks are absent while the augmentations $\mW_{K}\vx_{i}, \mW_{V}\vx_{i}$ are directly used as online and target representations respectively. 
Secondly, the predictor head is useful and not discarded, which is then used during test stage.

\textbf{Comparison with Contrastive Kernel Learning:} 
~Given an anchor data $\vx$ and its positive and negative samples $\vx^+$, $\vx^-$, contrastive kernel learning aims to optimize the loss function $\mathcal{L} = f(\vx)(f(\vx^-) - f(\vx^+))$ where $f(\vx) = \mW\phi(\vx)$. There are significant differences in the representation learning process of ICL: Firstly, it does not involve negative samples.
Secondly, there is no corresponding processing for positive samples, leading to parameter updates being solely dependent on the processing of the anchor.

\textbf{Extension to More Complicated Scenarios:}
Theorem~\ref{thm1} can be naturally extended to one single Transformer layer and multiple attention layers. 
As for one Transformer layer formed in Section~\ref{sec:2.1}, its dual model $f^+(\vx) = \mW\phi(\vx) + \vb$ introduces an additional bias $\vb$ and only $\mW$ is trained while $\vb$ remains fixed.
In addition, the labels of training set will be $\vy_{std}^{(i)} = \mW_F\mW_K\vx_{i}$ where $\mW_F$ has potential low-rankness property induced by $\mathrm{Relu}(\cdot)$.
As for multiple attention layers, the ICL inference process will be equivalent to sequentially performing gradient descent and making predictions on the dual model sequence.
We provide more details in Appendix~\ref{app:extend_to_more}.

Compared to \citet{dai2022can} considering the connection under linear attention setting, Theorem~\ref{thm1} gives explanation for more generally used softmax attention and offers a more detailed exploration of the training process. 
Additionally, unlike \citet{svon2023transformer,von2023uncovering,CasualLMnotgood}'s focus on particular linear regression task and specific configurations of token and parameters, we aim to explain the process of token interactions during ICL inference in a more general setting.

\subsection{Generalization Bound of the dual gradient descent process for ICL}\label{sec:3_3}
In this part, we are interested in the generalization bound of the ICL gradient process. 
When ICL inference is performed for some task $\gT$, we cannot provide all demonstrations related to task $\gT$ limited by the length of input tokens.
We denote $\gS_{\gT} \subseteq \sR^{d_i}$ as all possible tokens for the task $\gT$ and assume that these tokens will be selected according to the distribution $\gD_\gT$.
During a particular instance of ICL inference, let $\mathcal{S} = \{\vx_{i}\}_{i=1}^{N} \subseteq \gS_{\gT}$ represent the example tokens we selected.
We define the function class as $\mathcal{F} := \left\{f(\vx)= \mW\phi(\mW_{K}\vx)  ~| ~ \|\mW\| \le w\right\}$ where $\| \cdot \|$ denotes the Frobenius norm.
Generally, ignoring constant term in Eq~(\ref{lossL}), we consider the representation learning loss as
\begin{equation}\label{loss_gen}
	\gL(f) =  \mathbb{E}_{\vx \sim \gD_\gT}\left[  - \left(\mW_{V} \vx \right)^{T} f(\vx) \right]  =  \mathbb{E}_{\vx \sim \gD_\gT}\left[  - \left(\mW_{V} \vx \right)^{T} \mW\phi(\mW_{K}\vx)\right],
\end{equation}
where $f \in \gF$ and $\gD_{\gT}$ is the distribution for some ICL task $\gT$.
Correspondingly, the empirical loss will be formulated as
$\hat{\gL}(f) =  - \frac{1}{N} \sum_{i=1}^{N} \left(\mW_{V} \vx_i \right)^{T} f(\vx_i)$ and we have $\hat{f} = \argmin_{f \in \gF} \hat{L}(f)$. In addition, we denote the kernel matrix of demonstration tokens $\mathcal{S}$ as $\mK_{\mathcal{S}} \in \sR^{N \times N}$ where $(\mK_{\mathcal{S}})_{i,j} = \left \langle \phi(\mW_{K}\vx_{i}),\phi(\mW_{K}\vx_{j})\right \rangle $, that is, the inner product of the feature maps after $\mW_K$ projection between the $i$-th token and $j$-th token.
We state our theorem as follows:
\begin{theorem}\label{thm:gen}
	Define the function class as $\mathcal{F} := \left\{ f(\vx)=\mW\phi(\mW_{K}\vx)  ~| ~ \|\mW\| \le w \right\}$ and let the loss function defined as Eq~(\ref{loss_gen}).
	Consider the given demonstration set as $\mathcal{S} = \{\vx_{i}\}_{i=1}^{N}$ where $\mathcal{S} \subseteq \mathcal{S}_{\mathcal{T}}$ and $\mathcal{S}_{\mathcal{T}}$ is all possible demonstration tokens for some task $\mathcal{T}$.
	With the assumption that $\| \mW_{V} \vx_{i} \|, \|  \mW\phi(\mW_{K}\vx_{i})  \| \le \rho $, then for any $\delta > 0$, the following statement holds with probability at least $1-\delta$ for any $f \in \mathcal{F}$
	\begin{equation}
		\mathcal{L}(\hat{f}) \le \mathcal{L}(f) + O\left(\frac{w\rho d_{o}\sqrt{\mathrm{Tr}(\mK_{\mathcal{S}})}}{N} + \sqrt{\frac{log\frac{1}{\delta}}{N}} \right). 
	\end{equation}
\end{theorem}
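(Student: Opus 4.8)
The plan is to prove this as a standard Rademacher-complexity generalization bound, following the textbook template: (i) a symmetrization / uniform-deviation argument bounds $\gL(\hat f) - \gL(f)$ for all $f\in\gF$ by (twice) the empirical Rademacher complexity of the loss class plus a $\sqrt{\log(1/\delta)/N}$ concentration term; (ii) a contraction / linearity step peels off the fixed factor $\mW_V\vx$, reducing to the Rademacher complexity of $\gF$ itself; (iii) a direct computation bounds $\widehat{\mathfrak R}_{\mathcal S}(\gF)$ by $\frac{w}{N}\sqrt{\operatorname{Tr}(\mK_{\mathcal S})}$ using the norm constraint $\|\mW\|\le w$ and Cauchy--Schwarz; (iv) the factor $d_o$ enters because the loss is vector-valued (an inner product of $d_o$-dimensional vectors), so one sums the per-coordinate complexities. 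I would assume throughout the i.i.d.\ sampling $\vx_i\sim\gD_\gT$ implicit in the statement (the $\mathcal S\subseteq\mathcal S_\gT$ setup), and the boundedness hypothesis $\|\mW_V\vx_i\|,\|\mW\phi(\mW_K\vx_i)\|\le\rho$, which makes the loss bounded by $\rho^2$ so McDiarmid applies.

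Concretely, first I would write the per-sample loss as $\ell_f(\vx) = -\langle \mW_V\vx,\ \mW\phi(\mW_K\vx)\rangle$ and note $|\ell_f|\le\rho^2$ by Cauchy--Schwarz and the boundedness assumption. Standard symmetrization gives, with probability $\ge 1-\delta$,
\begin{equation*}
	\sup_{f\in\gF}\bigl(\gL(f)-\hat\gL(f)\bigr) \le 2\,\widehat{\mathfrak R}_{\mathcal S}(\ell\circ\gF) + 3\rho^2\sqrt{\tfrac{\log(1/\delta)}{N}},
\end{equation*}
and the usual $\gL(\hat f)\le\hat\gL(\hat f)+\sup(\cdot)\le\hat\gL(f)+\sup(\cdot)\le\gL(f)+2\sup(\cdot)$ chain (using optimality of $\hat f$ for $\hat\gL$) absorbs everything into the stated form. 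Next, writing $f(\vx)=\mW\phi(\mW_K\vx)$ coordinatewise as $f(\vx)_k=\langle \vw_k,\phi(\mW_K\vx)\rangle$ with $\sum_k\|\vw_k\|^2=\|\mW\|_F^2\le w^2$, and expanding $\ell_f(\vx)=-\sum_{k=1}^{d_o}(\mW_V\vx)_k f(\vx)_k$, I would bound the Rademacher complexity by a union/sum over the $d_o$ coordinates and apply the contraction lemma to strip the $1$-Lipschitz (in $f(\vx)_k$, for fixed bounded $(\mW_V\vx)_k$) scalar multiplication — this is where the $d_o$ and one factor of $\rho$ come from.

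The remaining core estimate is the linear-class bound: for a single coordinate,
\begin{equation*}
	\mathbb{E}_{\sigma}\sup_{\|\vw\|\le w}\frac1N\sum_{i=1}^N\sigma_i\langle\vw,\phi(\mW_K\vx_i)\rangle
	= \frac{w}{N}\,\mathbb{E}_{\sigma}\Bigl\|\sum_{i=1}^N\sigma_i\phi(\mW_K\vx_i)\Bigr\|
	\le \frac{w}{N}\sqrt{\mathbb{E}_{\sigma}\bigl\|\textstyle\sum_i\sigma_i\phi(\mW_K\vx_i)\bigr\|^2}
	= \frac{w}{N}\sqrt{\operatorname{Tr}(\mK_{\mathcal S})},
\end{equation*}
using $\mathbb{E}_\sigma\sigma_i\sigma_j=\delta_{ij}$ and $\sum_i\|\phi(\mW_K\vx_i)\|^2=\sum_i(\mK_{\mathcal S})_{ii}=\operatorname{Tr}(\mK_{\mathcal S})$. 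Multiplying by $d_o$ (coordinate sum) and $\rho$ (the contraction constant from $\|\mW_V\vx_i\|\le\rho$, which bounds each $|(\mW_V\vx_i)_k|$) yields the $\frac{w\rho d_o\sqrt{\operatorname{Tr}(\mK_{\mathcal S})}}{N}$ term, and folding the $\rho^2$ concentration constant into the big-$O$ gives the $\sqrt{\log(1/\delta)/N}$ term. The main obstacle — really the only nontrivial modeling choice — is handling the vector-valued loss cleanly: one must decide whether to go coordinatewise (incurring $d_o$, as the statement's bound suggests) or to use a vector-contraction inequality; I would take the coordinatewise route since it matches the claimed $d_o$ dependence exactly, being careful that the Lipschitz constant of $t\mapsto -(\mW_V\vx)_k\, t$ is $|(\mW_V\vx)_k|\le\|\mW_V\vx\|\le\rho$, which is what supplies the extra $\rho$ beyond the one already implicit in Tr$(\mK_{\mathcal S})$-free terms.
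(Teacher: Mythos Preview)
Your proposal is correct and follows essentially the same approach as the paper (symmetrization, a contraction step to peel off the $\mW_V\vx$ factor, then the Cauchy--Schwarz/Jensen computation $\tfrac{w}{N}\sqrt{\operatorname{Tr}(\mK_{\mathcal S})}$ for the linear class, summed over $d_o$ coordinates). The one minor difference is that the paper takes the vector-contraction route you yourself flag as an alternative---applying Maurer's inequality to $\tilde f(\vx)=[f(\vx);\,\mW_V\vx]\in\sR^{2d_o}$ with $h(\vu,\vv)=-\langle\vu,\vv\rangle$ (which is $\sqrt{2}\rho$-Lipschitz on the bounded domain)---rather than your coordinatewise scalar contraction, but the resulting bound and the remaining steps are identical.
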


Proof of \ref{thm:gen} can be found in Appendix~\ref{app:gen_bound}. 
Theorem~\ref{thm:gen} provides the generalization bound of the optimal dual model trained on a finite selected demonstration set under a mild assumption that $\| \mW \|$ is bounded. Intuitively, as the number of demonstration (and therefore the number of demonstration tokens) increases, the generalization error decreases, which is consistent with existing experimental observations \citep{ xie2021incontext, garg2022can, wang2024large}.

\section{Attention Modification Inspired by the Representation Learning Lens}\label{sec:CL}

Analyzing the dual gradient descent process of ICL from the perspective of representation learning inspires us to consider that: \textit{Do existing representation learning methods, especially contrastive learning methods, also involve a dual attention inference process? Alternatively, can we modify the attention mechanism by drawing on existing methods?}
In fact, since there are lots of mature works in representation learning especially contrastive learning, it is possible for us to achieve this by drawing on these works~\citep{MoCo, chen2020improved, wu2018unsupervised, chen2020simple,simsiam}.
We will provide some simple perspectives from the loss function, data augmentations and negative samples to try to adjust attention mechanism.
It is worth noting that these modifications are also applicable to the self-attention mechanism, and we will explore these variants in experiments.
More details can be seen in Appendix~\ref{app:modify}.

\textbf{Attention Modification inspired by the Contrastive Loss: }
It can be observed that the unnormalized similarity in Eq~(\ref{lossL}) allows $\|\mW\|$ to be optimized to infinity if we ignore the Layer Normalization~(LN) layer to prevent this.
As for one single attention layer without LN layer, to address this issue, we can introduce regularization term to constrain the norm of $\mW$, specifically by
\begin{equation}\label{lossLnorm}
	\mathcal{L} = - \frac{1}{\eta D} \sum_{i=1}^{N} \left(\mW_{V} \vx_{i} \right)^{T} \mW\phi(\mW_{K}\vx_{i}) + \frac{\alpha}{2\eta}\|\mW\|_{F}^{2},
\end{equation}
where $\alpha$ is a hyperparameter.
Equivalently, the attention output Eq~(\ref{attenH}) will be modified as
\begin{equation}\label{reguH}
	\vh'_{T+1} = \mW_{V}\left[\mX_D, (1-\alpha)\mX_T\right] \mathrm{softmax} \left(  (\mW_{K}\mX)^{T}\mW_{Q}\vx'_{T+1} / \sqrt{d_o} \right).
\end{equation}
This modification is equivalent to retaining less prompt information for query token during aggregation and relatively more demonstration information will be attended to.

\textbf{Attention Modification inspired by the Data Augmentation: }
If we analogize the key and value mappings to data augmentations in contrastive learning, then for the representation learning process of ICL, these overly simple linear augmentations may limit the model's ability to learn deeper representations.
Thus, more complicated augmentations can be considered.
Denoting these two augmentations as $g_{1}$ and $g_{2}$, the loss function will be modified as
\begin{equation*}
	\mathcal{L} = - \frac{1}{\eta D} \sum_{i=1}^{N} \left[ g_{1}(\mW_{V} \vx_{i}) \right] ^{T} \mW\phi(g_{2}(\mW_{K}\vx_{i})).
\end{equation*}
Correspondingly, the attention layer can be adjusted as,
\begin{equation}\label{auguSA}
	\vh'_{T+1} = g_{1}(\mW_{V}\mX) \mathrm{softmax} \left(  [g_{2}(\mW_{K}\mX)]^{T}\mW_{Q}\vx'_{T+1} / \sqrt{d_o} \right),
\end{equation}
where $g_{1}(\cdot)$ and $g_{2}(\cdot)$ will be column-wise here.
Here we add augmentations for all tokens instead of only demonstration ones to maintain uniformity in the semantic space. 
In experiments, we simply select MLP for $g_1$ and $g_2$.
It's worth noting that here we only propose the framework, and for different tasks, the augmentation approach should be specifically designed to adapt them.

\textbf{Attention Modification inspired by the Negative Samples: }
Negative samples play a crucial role in preventing feature collapse in contrastive learning methods while the representation learning process of ICL only brings a single pair of features closer, lacking the modeling of  what should be pushed apart, which could potentially limit the model's ability to learn representations effectively.
Therefore, we can introduce negative samples to address this:
\begin{equation*}
	\begin{aligned}
		\mathcal{L} =& - \frac{1}{\eta D} \sum_{i=1}^{N} \left(\mW_{V} \tilde{\vx}_{i} \right)^{T} \mW\phi(\mW_{K}\vx_{i}), \quad \tilde{\vx}_{i} = \vx_{i} - \frac{\beta}{|\mathcal{N}(i)|} \sum_{j \in \mathcal{N(\mathit{i})}} \vx_j,
	\end{aligned}
\end{equation*}
where $\mathcal{N}(i)$ is the set of the negative samples for $\vx_{i}$ and $\beta$ is a hyperparameter.
Correspondingly, the attention layer is modified as
\begin{equation}\label{negaH}
	\begin{aligned}
		\vh'_{T+1} = \mW_{V}\left[\tilde{\mX}_D, \mX_T \right] \mathrm{softmax} \left(  (\mW_{K}\mX)^{T}\mW_{Q}\vx_{T+1} / \sqrt{d_{o}} \right),
	\end{aligned}
\end{equation}
where $\tilde{\mX}_D = [\tilde{\vx}_{1}, \tilde{\vx}_{2}, ..., \tilde{\vx}_N]$.
Here we simply use other tokens as negative samples and we emphasize that for specific tasks, an appropriate design of negative samples will be more effective.

\begin{figure*}[t]
	\centering
	\begin{subfigure}[t]{0.22\linewidth}
		\centering
		\includegraphics[scale=.22]{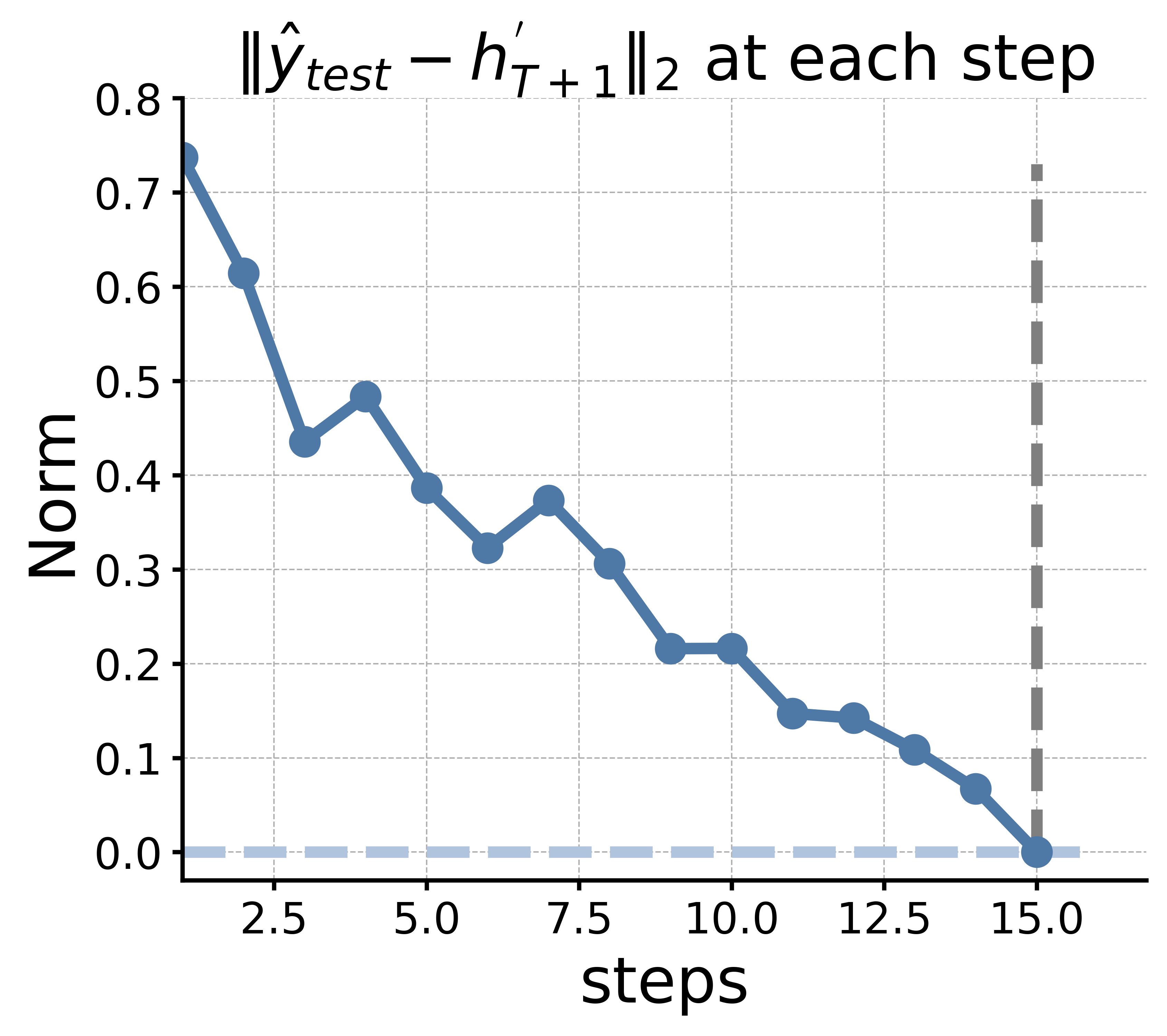}
		\label{fig:16-norm}
	\end{subfigure}
	\hspace{-0cm}
	\begin{subfigure}[t]{0.22\linewidth}
		\centering
		\includegraphics[scale=.22]{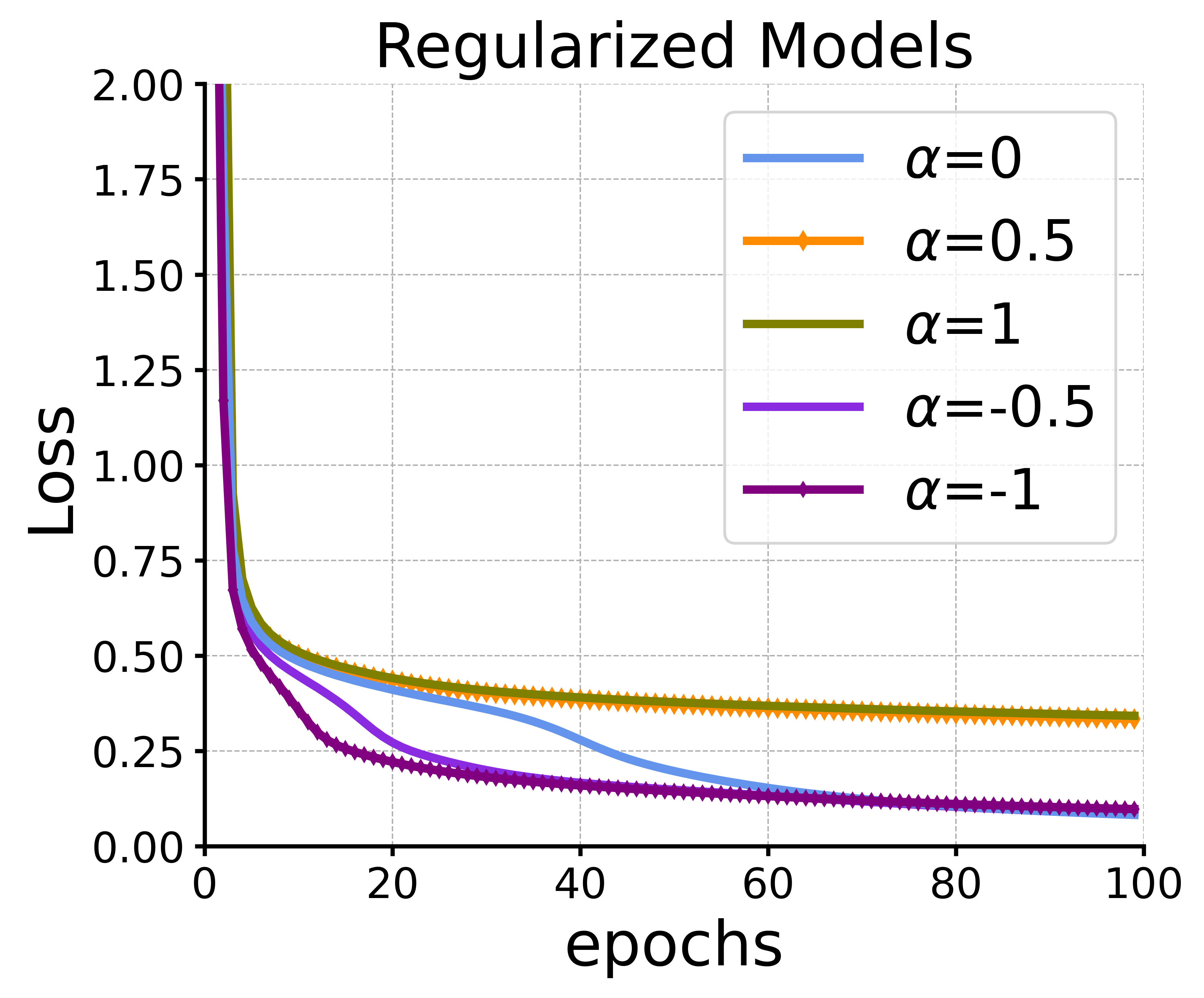}
		\label{fig:a}
	\end{subfigure}
	\hspace{-0cm}
	\begin{subfigure}[t]{0.22\linewidth}
		\centering
		\includegraphics[scale=.22]{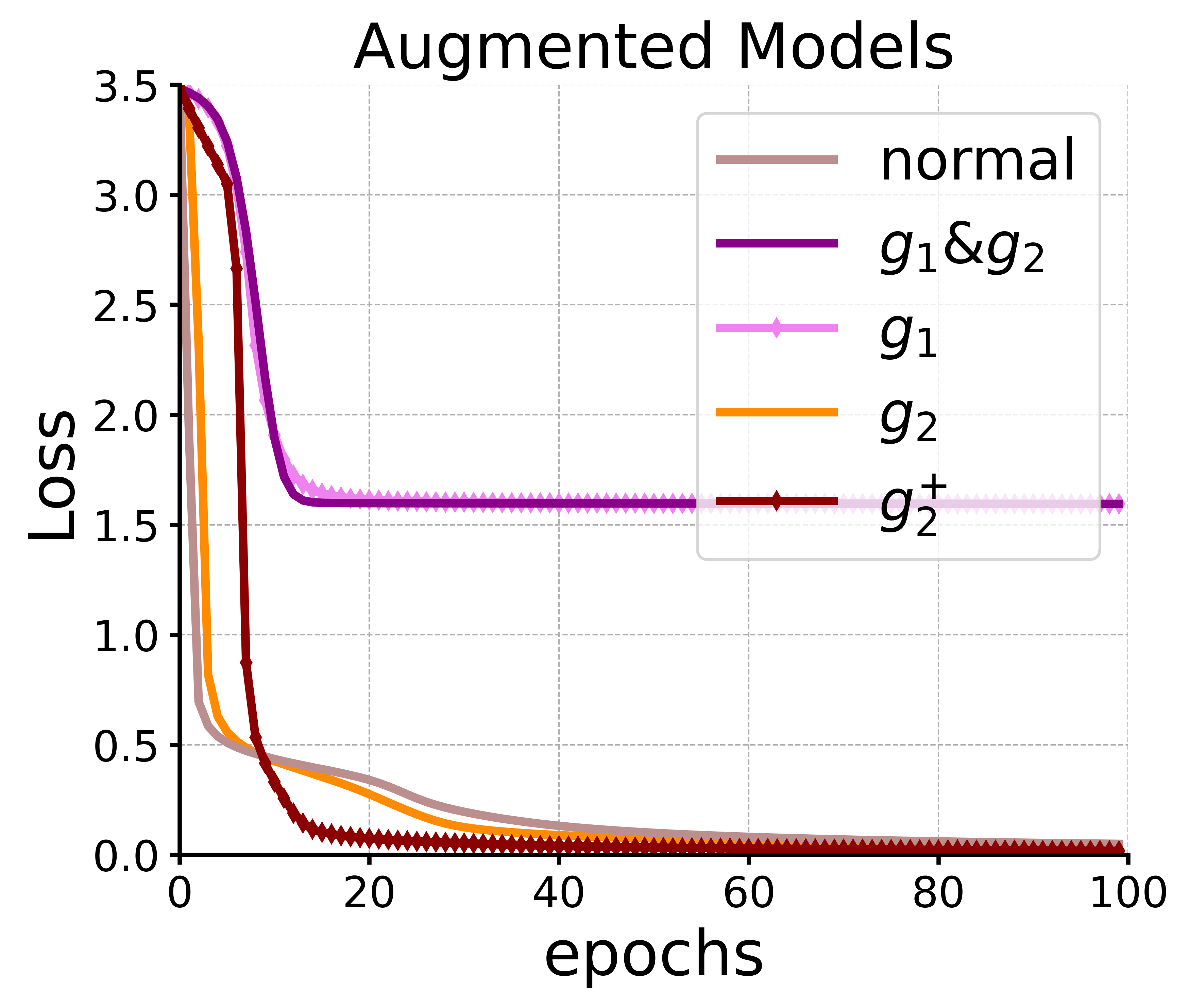}
		\label{fig:b}
	\end{subfigure}
	\hspace{-0cm}
	\begin{subfigure}[t]{0.22\linewidth}
		\centering
		\includegraphics[scale=.22]{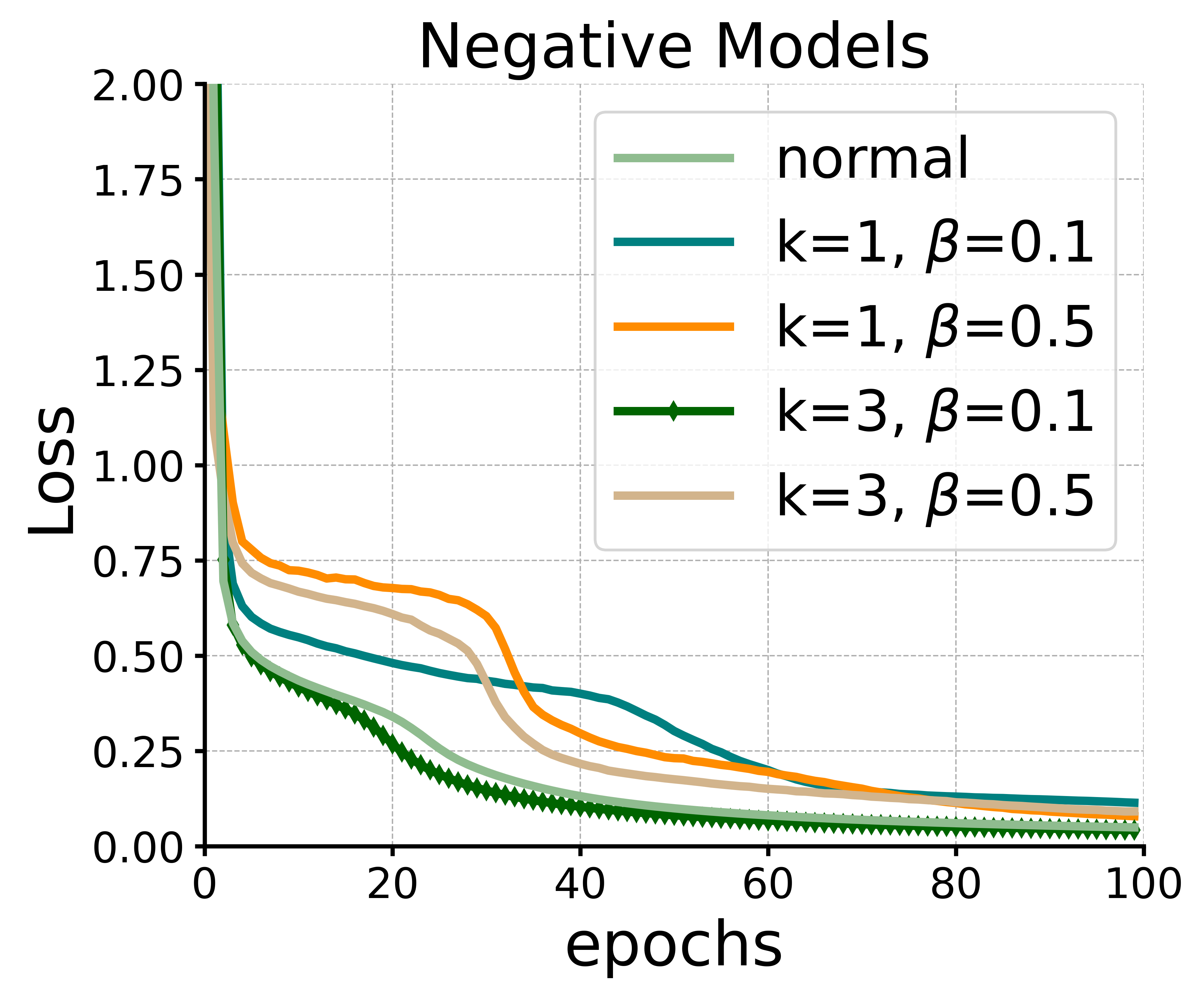}
		\label{fig:c}
	\end{subfigure}
	\vspace{-0.5cm}
	\caption{The equivalence between ICL of one softmax attention layer and gradient descent, along with analysis on different model modifications. \textbf{Left Part:} $\| \hat{\vy}_{test} - \vh'_{T+1} \|_{2}$ as the gradient descent proceeds under setting $N = 15$; \textbf{Remaining Part:} the performance for regularized models (Center Left), augmented models (Center Right) and negative models (Right) with different settings.}
	\vspace*{-0cm}
	\label{fig:ex1}
\end{figure*}

\section{Experiments}

In this section, we design experiments on synthetic tasks to support our findings and more experiments including on more realistic tasks can be seen in Appendix~\ref{app:more-exp}.
The questions of interest are: \textit{(i) Is the result of ICL inference equivalent to the test prediction of the trained dual model?} \textit{(ii) Is it potential to improve the attention mechanism from the perspective of representation learning?}

\textbf{Linear Task Setting:}
Inspired by \citet{svon2023transformer}, to validate the equivalence and demonstrate the effectiveness of the modifications, we firstly train one softmax self-attention layer using linear regression tasks.
We generate the task by $\vs = \mW\vt$ where every element of $\mW \in \mathbb{R}^{d_{s} \times d_{t}}$ is sampled from a normal distribution $\mW_{ij} \sim \mathcal{N}(0, 1)$ and $\vt$ from uniform distribution $\vt \sim U(-1,1)^{d_{t}}$.
We set $d_{t} = 11$ and $d_{s} = 1$.
Then, at each step, we use generated $ \{\vx_{i} = [\vt_{i};s_{i}]\}^{N+1}_{i=1}$ to form the input matrix $\mX$ where the last token will be used as the query token and the label part will be masked, that is, $\vx_{N+1} = [\vt_{i}; 0]$.
Here we consider only one query token ($T=0$) and we denote $\vx'_{T+1} = \vx_{N+1}$ to maintain consistency of notation in Section~\ref{sec:2.1}.
Finally, the attention layer is trained to predict $\hat{s}_{N+1}$ to approximate the true label $s_{N+1}$ using mean square error (MSE) loss.

\textbf{Model Setting:} It is worth noting that to facilitate direct access to the dual model, we use positive random features as kernel mapping functions (Performer architecture \citep{performer}) to approximate the standard softmax attention, that is, $\phi(\vx) = e^{\vw^T\vx - \| \vx \|^2 /2} $ where $\vw \sim \mathcal{N}(0, I)$.
We set the dimension of the random features as $d_{r} = 100(d_{t} + d_{s}) = 1200$ to obtain relatively accurate estimation.
After training, the weights of the attention layer have been determined.
Thus, given specified input $\mX$, we can construct the dual model $f(\vx) = \mW\phi(\vx)$ and its corresponding training data and test input according to Theorem~\ref{thm1}.

We perform three experiments under different random seeds for linear regression tasks with the results of one presented in Figure~\ref{fig:ex1}.
In addition, we also conduct more experiments including these on trigonometric, exponential synthetic regression tasks and more realistic tasks.
More details of experiments setting and results can be found in Appendix~\ref{app:more-exp}.
We mainly discuss the results on the linear regression task as follows.

\textbf{Equivalence Between ICL and Gradient Descent:}
To answer the first question, we generate the test input $\mX_{test}$ using the same method as training and obtain the ICL result of the query token $\vh'_{T+1}$.
On the other hand, we use $\mX_{test}$ to train the dual model  according to Theorem~\ref{thm1} and get the test prediction $\hat{\vy}_{test}$.
The result is shown in the left part part of Figure~\ref{fig:ex1}.
It can be observed that after $N = 15$ epochs training on the dual model, the test prediction $\hat{\vy}_{test}$ is exactly equivalent to the ICL inference result $\vh'_{T+1}$ by one softmax attention layer, which aligns with our analysis in Theorem~\ref{thm1}.
More detailed experiments can be seen in Appendix~\ref{app:ex-linear}.

\textbf{Analysis on the Modifications:}
In Section~\ref{sec:CL}, we discussed different modifications to the attention mechanism from perspectives of contrastive loss, data augmentation and negative samples. 
Here we call these modifications regularized models, augmented models and negative models respectively.
More details of modifications for self-attention mechanism can be seen in Appendix~\ref{app:modify}. 

For regularized models, we vary different $\alpha$ to investigate the impact on pretraining performance under the same setting, as shown in the center left part of Figure~\ref{fig:ex1}.
It can be observed that when $\alpha>0$, the regularized models converges to a poorer result while when $\alpha<0$, the model converges faster and achieves final results comparable to the normal model without regularization ($\alpha = 0$).
At least for this setting, this is a little contrary to our initial intention of applying regularization to the contrastive loss where $\alpha$ should be positive.
We explain it that the appropriate $\alpha$ contributes to achieving a full-rank attention matrix as stated in Appendix~\ref{app:modify}, preserving information and accelerating convergence.

For augmented models, we simply choose a single-layer MLP for $g_{1}(\cdot)$ and $g_{2}(\cdot)$ as data augmentations to enhance the value and key embeddings respectively in Eq~(\ref{auguSA}) and we choose GELU \citep{GELU} as the activation function.
It can be observed in the center right part of Figure~\ref{fig:ex1} that when we only use $g_{2}$, that is, only provide augmentation for keys, the model actually shows slightly faster convergence than other cases.
Furthermore, when we use two-layer MLP as $g_{2}^{+}(\vx)$ as a more complicated augmentation function, the result indicates that although the model initially converges slightly slower due to the increased number of parameters, it eventually accelerates convergence and achieves a better solution.
This indicates that appropriate data augmentation indeed have the potential to enhance the capabilities of the attention layer.

For negative models, we select the $k$ tokens with the lowest attention scores as negative samples for each token.
From Eq~(\ref{negaH}), we can see that it is equivalent to subtracting a certain value from the attention scores corresponding to those negative samples.
We vary the number of negative samples $k$ and $\beta$ in Eq~(\ref{negaH}) and the results are shown in the right part of Figure~\ref{fig:ex1}.
It can be found that the model has the potential to achieve slightly faster convergence with appropriate settings ($k=3$ and $\beta = 0.1$).
In fact, it can be noted that in the original attention mechanism, attention scores are always non-negative, indicating that some irrelevant information will always be preserved to some extent.
However, in the modified structure, attention scores can potentially become negative, which makes the model more flexible to utilize information.
Certainly, as we discussed in Section~\ref{sec:CL}, for different tasks, more refined methods of selecting augmentations and constructing negative samples may be more effective and we also leave these aspects for future.

\section{Related Work}\label{sec:related_work}
Since Transformers have shown remarkable ICL abilities \citep{brown2020language}, many works have aimed to analyze the underlying mechanisms \citep{garg2022can, wang2023label}. 
To explain how Transformers can learn new tasks without parameter updates given few demonstrations, an intuitive idea is to link ICL with (implicit) gradient updates.
The most relevant work to ours is that of \citet{dai2022can}, which utilizes the dual form to understand ICL as an implicit fine-tuning (gradient descent) of the original model under a linear attention setting \citep{aiserman1964theoretical, irie2022dual}. 
They design a specific fine-tuning setting where only the parameters for the key and value projection are updated and the causal language modeling objective is adopted. 
In this context, they find ICL will have common properties with fine-tuning. 
Based on this, \citet{deutch2024context} investigate potential shortcomings in the evaluation metrics used by \citet{dai2022can} in real model assessments and propose a layer-causal GD variant that performs better in simulating ICL.
As a comparison, our research also uses the dual form to analyze the nonlinear attention layer and explores the specific form of the loss used in the training process. 
However, we link ICL to the gradient descent performed on the dual model rather than fine-tuning the original model.
The former process utilizes a self-supervised representation learning loss formalized as Eq~(\ref{lossL}) determined by the attention structure itself while performing supervised fine-tuning on the original model is often determined by task-specific training objectives (or manually specified causal language modeling objective \citet{dai2022can}).
A more formal and detailed comparison can be found in Appendix \ref{app:related_work}.

Additionally, many other works also link ICL with gradient descent, aiming to explore the Transformer’s ability to perform gradient descent algorithms to achieve ICL \citep{bai2023transformers, schlag2021linear}.
\citet{akyurek2022learning} reveal that under certain constructions, Transformer can implement simple basic operations (mov, mul, div and aff), which can be combined to further perform gradient descent.
\citet{svon2023transformer} provide a simple and appealing construction for solving least squares solutions in the linear attention setting.
Subsequently, \citet{zhang2023trained, ahn2023transformers, mahankali2023one} provide theoretical evidence showing that the local or global minima will have a form similar to this specific construction proposed by \citet{svon2023transformer} under certain assumptions.
These works, both experimentally and theoretically, often focus on specific linear regression tasks ($y = \vw^T\vx$) and specific structured input format where each token takes the form $[\vx, y]$ consisting of the input part $\vx$  and the label part $y$.
In addition, the label part of the final query to be predicted is masked, represented as $[\vx, 0]$. 
Subsequent works have expanded this exploration under more complicated setups, including examining nonlinear attention instead of linear attention\citep{cheng2023transformers, collins2024context}, using unstructured inputs rather than structured ones\citep{icl_tf_unstructured_data}, and considering casual or  autoregressive setting\citep{CasualLMnotgood, von2023uncovering}.
As a comparison to these works, our work does not target specific tasks like linear regression; therefore, we do not make detailed assumptions about the model weights (simply treated as weights after pre-training) or specific input forms.
Instead, we aim to view the ICL inference process from the perspective of representation learning in the dual model.
However, we would like to point out that under these specific weight and input settings, an intuitive explanation can also be provided from a representation learning perspective (see Appendix~\ref{app:related_work}).
We also notice that \citet{shen2023pretrained} experimentally show that there may exist differences between ICL inference in LLMs and the fine-tuned models in real-world scenarios from various perspectives and assumptions used in previous works may be strong. 
As mentioned earlier, our analysis primarily focus on linking ICL with gradient descent on the dual model of a simplified Transformer rather than fine-tuning the original model.
Analyzing more realistic models will also be our future directions.

\section{Conclusion and Impact Statements}
In this paper, we establish a connection between the ICL process of Transformers and gradient descent of the dual model, offering novel insights from a representation learning lens.
Based on this, we propose modifications for the attention layer and experiments under our setup demonstrate their potential. 
Although we have made efforts in understanding ICL, there are still some limitations in our analysis: 
(1) our work primarily focuses on the simplified Transformer and the impact of structures like layer normalization, residual connections, and others requires more nuanced analysis;
(2) for more tasks and settings, the proposed model modifications may require more nuanced design and validation.
We leave these aspects for future exploration.
And we believe that this work mainly studies the theory of in-context learning, which does not present any foreseeable societal consequence.

\section{Acknowledgements}
We sincerely appreciate the anonymous reviewers for their
helpful suggestions and constructive comments. 
This research was supported by National Natural Science Foundation of China (No.62476277, No.6207623), Beijing Natural Science Foundation (No.4222029), CCF-ALIMAMA TECH Kangaroo Fund (No.CCF-ALIMAMA OF 2024008), and Huawei-Renmin University joint program on Information Retrieval. 
We also acknowledge the support provided by the fund for building worldclass universities (disciplines) of Renmin University of China and by the funds from Beijing Key Laboratory of Big Data Management and Analysis Methods, Gaoling School of Artificial Intelligence, Renmin University of China, from Engineering Research Center of Next-Generation Intelligent Search and Recommendation, Ministry of Education, from Intelligent Social Governance Interdisciplinary Platform, Major Innovation \& Planning Interdisciplinary Platform for the “DoubleFirst Class” Initiative, Renmin University of China, from Public Policy and Decision-making Research Lab of Renmin University of China, and from Public Computing Cloud, Renmin University of China.

\bibliographystyle{plainnat}
\bibliography{example_paper}


\appendix

\newpage
\section{Details of Theorem~\ref{thm1} }\label{app:thm1}
We repeat Theorem~\ref{thm1} as follows and provide proof and more discussion for it.
\begin{theorem}
	The query token $\vh'_{T+1}$ obtained through ICL inference process with one softmax attention layer, is equivalent to the test prediction $\hat{\vy}_{test}$ obtained by performing one step of gradient descent on the dual model $f(\vx) = \mW\phi(\vx)$.
	The form of the loss function $\mathcal{L}$ is:
	\begin{equation}
		\mathcal{L} = - \frac{1}{\eta D} \sum_{i=1}^{N} \left(\mW_{V} \vx_{i} \right)^{T} \mW\phi(\mW_{K}\vx_{i}),
	\end{equation}
	where $\eta$ is the learning rate and $D$ is a constant.
\end{theorem}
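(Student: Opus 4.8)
The plan is to start from the attention output formula~(\ref{attenH}) and the kernel factorization~(\ref{kernel}), and to reverse-engineer the loss function so that a single gradient step reproduces the attention computation exactly. First I would write $\vh'_{T+1}$ explicitly in kernel form: since $\mA_u(i,j) = \phi(\mW_K\vx_i)^T\phi(\mW_Q\vx_j)$ (after dropping the $\sqrt{d_o}$ rescaling and absorbing the normalization of key/value vectors), the attention output becomes
\begin{equation*}
	\vh'_{T+1} = \frac{1}{D}\sum_{i=1}^{N}\left(\mW_V\vx_i\right)\,\phi(\mW_K\vx_i)^T\phi(\mW_Q\vx'_{T+1}),
\end{equation*}
where $D = \sum_{i=1}^N \phi(\mW_K\vx_i)^T\phi(\mW_Q\vx'_{T+1})$ is the (scalar) normalizing constant from $\mD$. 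The key observation is that this is exactly $\left(\sum_{i=1}^N (\mW_V\vx_i)\otimes\phi(\mW_K\vx_i)\right)\frac{1}{D}\phi(\mW_Q\vx'_{T+1})$, which has the shape of an outer-product sum applied to a feature vector — precisely the structure of the dual-form identity~(\ref{linearW})--(\ref{linearGD}) from Section~\ref{linearGDproof}, but with $\phi(\mW_K\vx_i)$ playing the role of the training input and $\phi(\mW_Q\vx'_{T+1})$ the role of the test input.

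Next I would invoke the dual-form machinery: for the dual model $f(\vx) = \mW\phi(\vx)$ initialized at $\mW_0 = 0$, one SGD step with learning rate $\eta$ gives $\widehat{\mW} = \sum_{i=1}^N \ve_i \otimes \phi(\mW_K\vx_i)$ where $\ve_i = -\eta\,\nabla_{\hat{\vy}_i}\mathcal{L}$ and $\hat{\vy}_i = \mW_0\phi(\mW_K\vx_i)$. Then $\hat{\vy}_{test} = \widehat{\mW}\phi(\mW_Q\vx'_{T+1}) = \sum_{i=1}^N \ve_i\,\phi(\mW_K\vx_i)^T\phi(\mW_Q\vx'_{T+1})$. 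Comparing with the displayed form of $\vh'_{T+1}$, I need $\ve_i = \frac{1}{D}\mW_V\vx_i$, i.e. $-\eta\,\nabla_{\hat{\vy}_i}\mathcal{L} = \frac{1}{D}\mW_V\vx_i$. Integrating, the loss that produces this gradient is $\mathcal{L} = -\frac{1}{\eta D}\sum_{i=1}^N (\mW_V\vx_i)^T\hat{\vy}_i = -\frac{1}{\eta D}\sum_{i=1}^N (\mW_V\vx_i)^T\mW\phi(\mW_K\vx_i)$, which is exactly~(\ref{lossL}). I would then verify by direct differentiation that $\nabla_{\mW}\mathcal{L}\big|_{\mW=\mW_0} = -\frac{1}{\eta D}\sum_i (\mW_V\vx_i)\otimes\phi(\mW_K\vx_i)$, so the gradient step $\widehat{\mW} = \mW_0 - \eta\nabla_\mW\mathcal{L}$ closes the loop and reproduces $\vh'_{T+1}$ term by term.

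The main subtlety — and the part I expect to require the most care — is the bookkeeping around the normalizing constant $D$ and the rescalings. In the attention layer $D$ depends on the query $\vx'_{T+1}$, so strictly $D$ is not a fixed constant across all possible test inputs; I would need to fix the query token under consideration and treat $D$ as the constant determined by that particular ICL instance (the theorem statement indeed just says "$D$ is a constant"), and note that $D$ also enters the loss. A second point of care is the equivalence "$K_{sm}(\vx,\vy) = e^{\vx^Ty}$ versus $e^{\vx^T\vy/\sqrt{d_o}}$": the $\sqrt{d_o}$ factor can be folded into $\mW_K$ or $\mW_Q$, or into the definition of $\phi$, and I would state this normalization convention up front so that~(\ref{kernel}) holds as written. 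Finally I would remark that the argument only needs $\mW_0 = 0$ (or more generally that the $\mW_0$ contribution is accounted for separately, as in~(\ref{linearGD})); with residual connections and layer normalization stripped away as in Section~\ref{sec:2.1}, taking $\mW_0 = 0$ is the natural choice and makes the correspondence exact rather than approximate.
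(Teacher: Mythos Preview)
Your overall strategy matches the paper's: rewrite $\vh'_{T+1}$ in kernel form via~(\ref{kernel}), recognize the outer-product structure as a dual-form update, and verify by direct differentiation that the loss~(\ref{lossL}) produces exactly the required gradient step. The reverse-engineering of $\mathcal{L}$ from the condition $\ve_i = \frac{1}{D}\mW_V\vx_i$ is correct and is essentially how the paper proceeds.

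There is, however, a concrete bookkeeping gap. In the paper's setup the attention at position $T+1$ ranges over $\mX = [\mX_D,\mX_T]$, which has $N+T$ tokens, not just the $N$ demonstration tokens. Your displayed expression for $\vh'_{T+1}$ and your formula $D = \sum_{i=1}^N \phi(\mW_K\vx_i)^T\phi(\mW_Q\vx'_{T+1})$ both drop the $\mX_T$ contribution entirely. As a result, your conclusion that ``taking $\mW_0 = 0$ is the natural choice and makes the correspondence exact'' is wrong in general: the paper absorbs the query-token part into a \emph{nonzero} initialization
\[
\mW_0 \;=\; \frac{1}{D}\,(\mW_V\mX_T)\,\phi(\mW_K\mX_T)^T,
\]
and the normalizing constant must include both pieces,
\[
D \;=\; \vone_N^T\phi(\mW_K\mX_D)^T\phi(\vq) \;+\; \vone_T^T\phi(\mW_K\mX_T)^T\phi(\vq),\qquad \vq = \mW_Q\vx'_{T+1}.
\]
Only in the special case $T=0$ does your choice $\mW_0=0$ work. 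Once you correct $\mW_0$ and $D$ in this way, the rest of your argument --- the identification of the loss and the gradient check --- goes through exactly as you wrote it.
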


\begin{proof}
	The derivative of $\mathcal{L}$ with respect to $\mW$ should be:
	\begin{equation*}
		\frac{\partial \mathcal{L}}{\partial \mW}  = -\left[ \sum_{i=1}^{N} \frac{1}{\eta D} \mW_{V}\vx_{i} \otimes \phi(\mW_{K} \vx_{i}) \right].
	\end{equation*}
	Thus, after one step of gradient descent , the learned $\widehat{\mW}$ will be
	\begin{equation}\label{GDeta}
		\widehat{\mW} = \mW_{0} - \eta \frac{\partial \mathcal{L}}{\partial \mW} = \mW_{0} + \left[ \sum_{i=1}^{N} \frac{1}{D} \mW_{V}\vx_{i} \otimes \phi(\mW_{K} \vx_{i}) \right],
	\end{equation}
	where $\mW_{0}$ is the initialization of the reference model and $\eta$ is the learning rate.
	So the test prediction will be
	\begin{equation}\label{ytest}
		\hat{\vy}_{test} = \mW_{0}\phi \left( \mW_{Q} \vx'_{T+1} \right) + \left[ \sum_{i=1}^{N} \frac{1}{D} \mW_{V}\vx_{i} \otimes \phi(\mW_{K} \vx_{i}) \right]\phi \left( \mW_{Q} \vx'_{T+1} \right).
	\end{equation}
	On the other hand, from the perspective of ICL process with one attention layer, with Eq~(\ref{kernel}) in our mind, we can rewrite Eq~(\ref{attenH}) as
	\begin{equation*}
		\begin{aligned}
			\vh'_{T+1} &=\mW_{V}\mX \mathrm{softmax} \left(  \frac{(\mW_{K}\mX)^{T}\mW_{Q}\vx'_{T+1}}{\sqrt{d_o}} \right) \\ 
			& =  \frac{1}{D'} \mW_{V}[\mX_{D},\mX_{T}]   \left[ \phi(\mW_{K}\mX_{D}),  \phi(\mW_{K}\mX_{T})\right]^{T} \phi(\mW_{Q}\vx'_{T+1})\\
			&= \frac{1}{D'} [\mV_{D}, \mV_{T}]   \left[ \phi(\mK_{D}), \phi(\mK_{T})\right]^{T} \phi(\vq),
		\end{aligned}
	\end{equation*}
	where we use $[\mV_{D}, \mV_{T}] = \mW_{V}[\mX_{D},\mX_{T}]$, $[\mK_{D}, \mK_{T}] = \mW_{K}[\mX_{D},\mX_{T}]$, $\vq = \mW_{Q}\vx'_{T+1}$ for simplify and $D' = \vone_{N}^T\phi(\mK_{D})^{T}\phi(\vq)+ \vone_{T}^T\phi(\mK_T)^{T}\phi(\vq)$ is a constant to normalize the equivalent attention block.
	Further, we expand the above equation to connect the inference process of ICL using softmax attention with the gradient descent as follows
	\begin{equation*}
		\begin{aligned}
			\vh'_{T+1} &= \frac{1}{D'} \mV_{T} \phi(\mK_T)^{T}\phi(\vq) + \frac{1}{D'} \mV_{D}\phi(\mK_{D})^{T}\phi(\vq) \\
			& = \mW_{0}'\phi(\vq) + \frac{1}{D'} \left[ \sum_{i=1}^{N} \mV_{D}^{(i)} \otimes \phi(\mK_{D}^{(i)}) \right] \phi(\vq)
		\end{aligned}
	\end{equation*}
	where $\mW_{0}' = \frac{1}{D'} \mV_T \phi(\mK_T)^{T}$ and  $\mV_{D}^{(i)},  \mK_{D}^{(i)}$ are the $i$-th column vetors respectively.
	
	Then, in Eq~(\ref{ytest}), when setting the initialization $\mW_{0} = \mW_{0}'$ and the constant $D = D'$, we will find that 
	\begin{equation}\label{softmaxGD}
		\hat{\vy}_{test} = \mW_{0}\phi(\vq) + \frac{1}{D} \left[ \sum_{i=1}^{N} \mV_{D}^{(i)} \otimes \phi(\mK_{D}^{(i)}) \right] \phi(\vq) = \vh'_{T+1},
	\end{equation}
	which means $\hat{\vy}_{test}$ is strictly equivalent to $\vh'_{T+1}$.
	Thus, we have completed our proof.
\end{proof}


Given a reference model $f(\vx) = \mW\phi(\vx)$, by comparing Eq~(\ref{softmaxGD}) and Eq~(\ref{linearGD}), we can easily observe that the gradient descent on the loss function $\mathcal{L}$ applied to $f(\vx)$ is the dual form of the inference process of ICL, where $\mV_{D}^{(i)}$, $ \phi(\mK_{D}^{(i)})$ and $\phi(\vq)$ play the roles of backpropagation signals, training inputs and test inputs respectively.
Recalling the form of  Eq~(\ref{linearGD}), we can interpret the $\mW_{0}$ as the initialization of the weight matrix which provide the information under the zero-shot case while the second part in Eq~(\ref{softmaxGD}) shows that the demonstration examples in ICL acts as the training samples in gradient descent.
The reference model $f(\vx) = \mW\phi(\vx)$, initialized with $\mW_0$, will have test prediction $\hat{\vy}_{test} = \vh'_{T+1}$ after training. This is also why we refer to it as the dual model of the softmax attention layer.
We also note that for different demonstrations, even though the model has the same query input, the different given demonstrations will result in different output results. 
This is equivalent to the dual model performing gradient descent in different directions from the same initialization.

\section{Extensions to more complex scenarios}\label{app:extend_to_more}
In Theorem~\ref{thm1}, we provided the dual form of gradient descent for the ICL of one softmax attention layer. Here, we extend the conclusion to more complex scenarios, including one Transformer layer (attention layer plus one FFN layer) and multiple attention layers.
\subsection{Extension to one Transformer Layer}
As for one Transformer layer introduced in Section~\ref{sec:2.1}, we define the new dual model as 
\begin{equation}\label{eq:ref2}
	f^+(\vx) = \mW \phi(\vx) + \vb.
\end{equation}
We will show that after performing gradient descent on $\mW$,  the test output $\hat{\vy}_{test} = f^+(\mW_{Q}\vx'_{T+1})$ will be equivalent to $\hat{\vx}'_{T+1}$.
Our theorem is given as follows.
\begin{theorem}\label{thm:TF}
	The output $\hat{\vx}'_{N+1}$ of ICL inference process with one Transformer layer, is strictly equivalent to the test prediction of its dual model $f^+(\vx) = \mW \phi(\vx) + \vb$, where $f(\vx)$ is trained under the loss function $\mathcal{L}$ formed as
	\begin{equation}\label{app:loss+}
		\mathcal{L} = - \frac{1}{\eta D} \sum_{i=1}^{N} \left(\mW_{F}\mW_{V} \vx_{i} \right)^{T} \left( \mW\phi(\mW_{K}\vx_{i}) + \vb \right),
	\end{equation}
	where $\eta$ is the learning rate, $D$ is a constant, and $\mW_{F}$ will be determined once the specified pre-trained model, demonstrations and query tokens are given.
\end{theorem}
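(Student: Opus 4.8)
The plan is to reduce Theorem~\ref{thm:TF} to Theorem~\ref{thm1} by absorbing the FFN nonlinearity into a query-dependent linear map. First I would recall that after the attention layer the query token is $\vh'_{T+1}$ given by Eq~(\ref{attenH}), and the FFN applies $\widehat{\vx}'_{T+1} = \mW_2\,\mathrm{ReLu}(\mW_1\vh'_{T+1} + \vb_1) + \vb_2$. The key observation is that for a fixed input $\mX$ (hence a fixed $\vh'_{T+1}$), the $\mathrm{ReLu}$ activation pattern is determined: let $\mathrm{diag}(\vs)$ be the diagonal $0/1$ matrix recording which coordinates of $\mW_1\vh'_{T+1}+\vb_1$ are positive. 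Then $\mathrm{ReLu}(\mW_1\vh'_{T+1}+\vb_1) = \mathrm{diag}(\vs)(\mW_1\vh'_{T+1}+\vb_1)$ \emph{exactly at this point}, so
\begin{equation*}
	\widehat{\vx}'_{T+1} = \mW_2\,\mathrm{diag}(\vs)\,\mW_1\,\vh'_{T+1} + \mW_2\,\mathrm{diag}(\vs)\,\vb_1 + \vb_2 =: \mW_F\,\vh'_{T+1} + \vc,
\end{equation*}
where $\mW_F := \mW_2\,\mathrm{diag}(\vs)\,\mW_1$ and $\vc := \mW_2\,\mathrm{diag}(\vs)\,\vb_1 + \vb_2$. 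This is the $\mW_F$ referred to in the statement, and it (along with $\vc$) is indeed fully determined once the pre-trained model, demonstrations, and query token are fixed. The factorization through $\mathrm{diag}(\vs)$ with rank at most $d_h$ also gives the "potential low-rankness" remark mentioned in Section~\ref{sec:3.2}.

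Next I would substitute the expansion of $\vh'_{T+1}$ from the proof of Theorem~\ref{thm1}. There we wrote $\vh'_{T+1} = \mW_0'\phi(\vq) + \frac{1}{D'}\left[\sum_{i=1}^N \mV_D^{(i)}\otimes\phi(\mK_D^{(i)})\right]\phi(\vq)$ with $\mV_D = \mW_V\mX_D$, $\mK_D = \mW_K\mX_D$, $\vq = \mW_Q\vx'_{T+1}$. Multiplying on the left by $\mW_F$ and adding $\vc$:
\begin{equation*}
	\widehat{\vx}'_{T+1} = \left(\mW_F\mW_0'\right)\phi(\vq) + \frac{1}{D'}\left[\sum_{i=1}^N \left(\mW_F\mW_V\vx_i\right)\otimes\phi(\mW_K\vx_i)\right]\phi(\vq) + \vc.
\end{equation*}
On the dual side, I compute one gradient step for the loss Eq~(\ref{app:loss+}) with respect to $\mW$ only (keeping $\vb$ fixed), exactly mirroring the computation in Theorem~\ref{thm1} but with $\mW_V\vx_i$ replaced by $\mW_F\mW_V\vx_i$. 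This gives $\widehat{\mW} = \mW_0 + \frac{1}{D}\sum_{i=1}^N (\mW_F\mW_V\vx_i)\otimes\phi(\mW_K\vx_i)$, so the test prediction is
\begin{equation*}
	\hat{\vy}_{test} = f^+(\mW_Q\vx'_{T+1}) = \mW_0\phi(\vq) + \frac{1}{D}\left[\sum_{i=1}^N (\mW_F\mW_V\vx_i)\otimes\phi(\mW_K\vx_i)\right]\phi(\vq) + \vb.
\end{equation*}
Matching term by term, I set $\mW_0 := \mW_F\mW_0' = \frac{1}{D'}\mW_F\mV_T\phi(\mK_T)^T$, $D := D'$, and $\vb := \vc = \mW_2\,\mathrm{diag}(\vs)\,\vb_1 + \vb_2$, which yields $\hat{\vy}_{test} = \widehat{\vx}'_{T+1}$ and completes the proof.

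The main obstacle I anticipate is not the algebra but making the $\mathrm{diag}(\vs)$ argument rigorous and clearly scoped: $\mW_F$ depends on the activation pattern at the \emph{specific} point $\vh'_{T+1}$, so the equivalence is pointwise rather than a genuine linearization of the Transformer layer as a function. I would state explicitly that $\mW_F$ is defined \emph{after} the forward pass is fixed, emphasize that it is the dual model's parameters ($\mW_0$, $D$, $\vb$) that are chosen in terms of this $\mW_F$ (just as $\mW_0'$ and $D'$ were chosen in Theorem~\ref{thm1}), and note that $\vb$ is held fixed throughout the single gradient step so that the nonlinearity in $\phi$ and the bias play no role in the gradient. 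A secondary point worth a sentence is confirming that only $\mW$ is updated (the statement says "$\vb$ remains fixed"), so the gradient computation is identical in structure to the one-attention-layer case and no new subtlety arises from the added bias term.
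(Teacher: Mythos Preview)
Your proposal is correct and follows essentially the same approach as the paper: linearize the ReLU via its activation pattern (your $\mathrm{diag}(\vs)$ is the paper's $\mI_M$), define $\mW_F=\mW_2\,\mathrm{diag}(\vs)\,\mW_1$ and the bias $\vc=\vb_F$, substitute the kernel expansion of $\vh'_{T+1}$ from Theorem~\ref{thm1}, and then match $\mW_0=\mW_F\mW_0'$, $D=D'$, $\vb=\vb_F$ after one gradient step on $\mW$ only. Your caveats about the pointwise scope of $\mW_F$ and about $\vb$ being held fixed are exactly the clarifications the paper also makes.
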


\begin{proof}
	Recalling the proof of Theorem~\ref{thm1}, we can rewrite Eq~(\ref{attenH}) as
	\begin{equation}
		\begin{aligned}
			\vh'_{T+1} = \mW_{0}\phi \left( \mW_{Q} \vx'_{T+1} \right) + \left[ \sum_{i=1}^{N} \frac{1}{D} \mW_{V}\vx_{i} \otimes \phi(\mW_{K} \vx_{i}) \right]\phi \left( \mW_{Q} \vx'_{T+1} \right)
		\end{aligned}
	\end{equation}
	where $D=\vone_{N}^{T}\phi(\mW_{K}\mX_D)^{T}\phi(\mW_{Q}\vx'_{T+1})+\vone_{T}^{T}\phi(\mW_{K}\mX_T)^{T}\phi(\mW_{Q}\vx'_{T+1})$ is a constant to normalize the attention scores and $\mW_{0} = \frac{1}{D} (\mW_{V} \mX_{T}) \phi(\mW_{K}\mX_{T})^{T}$.
	Furthermore, $\vh'_{N+1}$ will be taken as input for the FFN sublayer and the Eq~(\ref{eq:ffn}) can be rewritten as
	\begin{equation*}
		\hat{\vx}'_{T+1} = \mW_{2}\mI_{M}(\mW_{1}\vh'_{T+1} + \vb_{1}) + \vb_{2} = \mW_{2}\mI_{M}\mW_{1}\vh'_{T+1} + \mW_{2}\mI_{M}\vb_{1} + \vb_{2},
	\end{equation*}
	where $\mI_{M} \in \sR^{d \times d}$ is a diagonal matrix whose $i$-th diagonal element will be one if $(\mW_{1}\vh'_{T+1} + \vb_{1})_{i} \ge 0$ otherwise be zero. 
	We need to note that this process is reasonable: for given demonstration and query tokens, once the parameters $\{ \mW_{Q}, \mW_{K}, \mW_{V}, \mW_1, \vb_1 \}$ of the Transformer layer are fixed after training, $\mI_{M}$ will be determined implicitly (otherwise, $\mI_{M}$ would be a function that varies with these settings).
	For simplify, we rewrite $\hat{\vx}'_{T+1}$ as
	\begin{equation*}
		\hat{\vx}'_{T+1} = \mW_{F}\vh_{T+1} + \vb_{F},
	\end{equation*}
	where $\mW_{F} = \mW_{2}\mI_{M}\mW_{1}$ and $\vb_{F} = \mW_{2}\mI_{M}\vb_{1} + \vb_{2}$.
	Furthermore, expanding $\vh'_{T+1}$ in the above Equation, we get:
	\begin{equation}\label{final_x_n+1}
		\begin{aligned}
			\hat{\vx}'_{T+1} &= \mW_{F}\mW_{0}\phi \left( \mW_{Q} \vx'_{T+1} \right) + \left[ \sum_{i=1}^{N} \frac{1}{D} \mW_{F} \mW_{V}\vx_{i} \otimes \phi(\mW_{K} \vx_{i}) \right]\phi \left( \mW_{Q} \vx'_{T+1} \right) + \vb_{F} \\
			&= \left[ \mW_{F}\mW_{0} +  \sum_{i=1}^{N} \frac{1}{D} \mW_{F} \mW_{V}\vx_{i} \otimes \phi(\mW_{K} \vx_{i}) \right]\phi \left( \mW_{Q} \vx'_{T+1} \right) + \vb_{F}.
		\end{aligned}
	\end{equation}
	On the other hand, we define a reference model:
	\begin{equation*}
		f^+(\vx) = \mW \phi(\vx) + \vb,
	\end{equation*}
	where $\phi(\cdot)$ is exactly the mapping function satisfying Eq~(\ref{kernel}) to approximate the softmax kernel. Given the loss formed in Eq~(\ref{app:loss+}), we can note that the right part in $\mathcal{L}$ is exactly the output of this reference model when taking $\mW_{K}\vx_{i}$ as input, that is,
	\begin{equation*}
		\mathcal{L} = - \frac{1}{\eta D} \sum_{i=1}^{N} \left(\mW_{F}\mW_{V} \vx_{i} \right)^{T} \left( \mW\phi(\mW_{K}\vx_{i}) + \vb \right) =  - \frac{1}{\eta D} \sum_{i=1}^{N} \left(\mW_{F}\mW_{V} \vx_{i} \right)^{T} f^+\left( \mW_{K}\vx_{i}\right).
	\end{equation*}
	We can calculate the derivative of $\mathcal{L}$ with respect to $\mW$ as
	\begin{equation*}
		\frac{\partial \mathcal{L}}{\partial \mW}  = - \frac{1}{\eta D} \left[ \sum_{i=1}^{N} \mW_{F}\mW_{V}\vx_{i} \otimes \phi(\mW_{K} \vx_{i}) \right].
	\end{equation*}
	Suppose that the weight matrix $\mW$ in the reference model $f(\vx)$ is initialized as $\mW_{init}$, then using one step of stochastic gradient descent (SGD) \citep{amari1993backpropagation} with learning rate $\eta$, the weight matrix $\mW$ will be updated as
	\begin{equation*}
		\widehat{\mW} = \mW_{init} - \eta \frac{\partial \mathcal{L}}{\partial \mW} = \mW_{init} + \left[ \sum_{i=1}^{N} \frac{1}{D} \mW_{F} \mW_{V}\vx_{i} \otimes \phi(\mW_{K} \vx_{i}) \right].
	\end{equation*}
	Compared to Eq~(\ref{final_x_n+1}), we can set $\mW_{init} = \mW_{F}\mW_{0}$, $\vb = \vb_{F}$  and take $\mW_{Q} \vx'_{T+1}$ as test input. 
	Then, after one step update to $\mW$, the output of the reference model will be
	\begin{equation*}
		\begin{aligned}
			f^+(\mW_{Q} \vx'_{T+1}) &= \widehat{\mW} \phi(\mW_{Q} \vx'_{T+1}) + \vb \\
			&= \left[ \mW_{init} + \sum_{i=1}^{N} \frac{1}{D} \mW_{F} \mW_{V}\vx_{i} \otimes \phi(\mW_{K} \vx_{i}) \right]\phi(\mW_{Q} \vx'_{T+1}) + \vb \\
			&= \left[ \mW_{F}\mW_{0} +  \sum_{i=1}^{N} \frac{1}{D} \mW_{F} \mW_{V}\vx_{i} \otimes \phi(\mW_{K} \vx_{i}) \right]\phi \left( \mW_{Q} \vx'_{T+1} \right) + \vb_{F} = \hat{\vx}'_{T+1},
		\end{aligned}
	\end{equation*}
	which implies that if we initialize the reference model $f^+(\vx) = \mW \phi(\vx) + \vb$ with $\mW_{init} = \mW_{F}\mW_{0}$, $\vb = \vb_{F}$, then after one step of gradient descent for $\mW$, the test output of $f^+(\mW_{Q} \vx_{N+1})$ will be identical to the ICL result of one Transformer layer.
	Thus, we call the reference model with setting $\mW_{init} = \mW_{F}\mW_{0}$, $\vb = \vb_{F}$ as the dual model corresponding to the ICL inference process.
	Finally, we complete our proof.
\end{proof}

Now, we discuss Theorem~\ref{thm:TF} from the following perspectives:
\begin{itemize}
	\item \textbf{Training set and test input:} 
	In fact, we can observe that the loss function $\mathcal{L}$ can be seen as the sum of inner products of $N$ vector-pairs. 
	In Eq~(\ref{app:loss+}), the right vector happens to be the predicted output $\hat{\vy}_{std}^{(i)} = f^+(\vz_{std}^{(i)}) = \mW\phi(\vz_{std}^{(i)}) + \vb$ of the dual model for training input $\vz_{std}^{(i)} = \mW_{K}\vx_{i}$. 
	Correspondingly, the vector on the left can be regarded as the true label $\vy_{std}^{(i)} = \mW_{F} \mW_{V}\vx_{i}$.
	In other words, it can be seen that the dual model performs one step SGD given training set $\{\vz_{std}^{(i)} , \vy_{std}^{(i)} \}_{i=1}^{N}$ on $\mW$ using the loss $\mathcal{L}$:
	\begin{equation*}
		\mathcal{L} = \frac{1}{\eta D}\sum_{i=1}^{N}\left(\vy_{std}^{(i)}\right)^T\hat{\vy}_{std}^{(i)}.
	\end{equation*}
	And then taking $\vz_{test} = \mW_{Q}\vx_{i}$ as test input, it finally output the prediction $\vy_{test}$, which achieves the ICL result $\hat{\vx}_{N+1}$.
	Compared to Theorem~\ref{thm1}, after introducing the FFN layer, the main difference is that the labels of the training data become $\vy_{std}^{(i)} = \mW_{F} \mW_{V}\vx_{i}$ instead of $\vy_{std}^{(i)} = \mW_{V}\vx_{i}$.
	Additionally, compared to $f(\vx)$, an extra bias $b$ is introduced in the new dual model $f^+(\vx)$, which also have a different initialization $\mW_{init} = \mW_{F}\mW_{0}$ rather than $\mW_{0}$.
	We also need to note that in the dual model $f^+(\vx)$, only $\mW$ is trained, while $\vb$ remains unchanged after initialization.

	\item \textbf{Potential Low-rankness of $\mW_{F}$:} 
	Noting that $\mW_{F} = \mW_{2}\mI_{M}\mW_{1}$ where $\mW_{1} \in \sR^{d_h \times d}$, $\mW_{2} \in \sR^{d \times d_h}$, $\mI_{M} \in \sR^{d_h \times d_h}$ (here we assume that $d_i = d_o = d$ for simplify), the rank of $\mW_{F}$ will satisfy
	\begin{equation*}
		\mathrm{Rank}(\mW_{F}) \le \min \left \{ \mathrm{Rank}(\mW_{1}), \mathrm{Rank}(\mW_{2}), \mathrm{Rank}(\mI_{M}) \right \}.
	\end{equation*}
	We observe that $\mI_{M}$ is a diagonal matrix with elements being zero or one, and its rank is determined by the number of non-zero elements. 
	Here, we can make a mild assumption that we can set $\mathrm{Rank}(\mW_{1}) = \mathrm{Rank}(\mW_{2}) = \min\{ d, d_h \}$. 
	This assumption is quite mild as even for any random square matrix as it will be non-singular with probability 1.
	In addition, we also assume $d_{h} > d$ which is consistent with settings in practice.
	Therefore, we get $\mathrm{Rank}(\mW_{1}) = \mathrm{Rank}(\mW_{2}) = d$, and the upper bound of $\mathrm{Rank}(\mW_{F})$ will be
	\begin{equation*}
		\mathrm{Rank}(\mW_{F}) \le \min \left \{ d, \mathrm{Rank}(\mI_{M}) \right \}. 
	\end{equation*}
	Thus, we can find that if we want to avoid losing information, $\mW_{F}$ should strive to maintain $\mathrm{Rank}(\mI_{M}) > d$ which will be more easily achieved as $d_{h}$ becomes larger than $d$.
	Otherwise, $\mathrm{Rank}(\mI_{M})$ is likely to gradually decrease with an increase in the number of Transformer layers. This explains the necessity of setting $d_{h} > d$ in practice.
	In some cases where $\mathrm{Rank}(\mI_{M}) < d$, meaning that the number of non-zero elements in $\mI_{M}$ or positive elements in $\mW_{1}\vh_{N+1}+\vb_{1}$ is less than $d$, the upper bound of $\mathrm{Rank}(\mW_{F})$ will be $\mathrm{Rank}(\mI_M)$ and the lower bound of $\mathrm{Rank}(\mW_{F})$ will be given as
	\begin{equation*}
		\mathrm{Rank}(\mW_{F}) \ge  \mathrm{Rank}(\mW_{2}\mI_{M}) + \mathrm{Rank}(\mI_{M}\mW_{1}) - \mathrm{Rank}(\mI_M) = \mathrm{Rank}(\mI_M),
	\end{equation*}
	which implies the rank of $\mW_{F}$ will exactly equal to $\mathrm{Rank}(\mI_M)$.
	We should note that this condition, i.e., $\mathrm{Rank}(\mI_{M}) < d$, is easily satisfied when $d_h = d$ or when $d_h$ is slightly larger than $d$ (for example, $d< d_h < 2d$ in an expected sense).
	Thus, we conclude that $\mW_F$ has the potential low-rank property.
	
	\item \textbf{Representation Learning Lens:}
	For a encoded demonstration representation $\vx_{i}$, the key and value projections will generate a pair of feature $\mW_{K}\vx_{i}$ and $\mW_{V}\vx_{i}$ to create a certain distance between data representations in space.
	And then, on the one hand, a potential low-rank transformation $\mW_{F}$ is applied to the $\mW_{V}\vx_{i}$, attempting to compress some information which increases the difficulty of contrastive learning and forces the model to learn better features;
	on the other hand, $\phi(\cdot)$ projects $\mW_{K}\vx_{i}$ into a higher-dimensional space to capture deeper-level features.
	Finally, we need to train the weight matrix $\mW$, which maps $\phi(\mW_{K}\vx_{i})$ back to the original space, aiming to make the mapped vector as close as possible to $\mW_{F}\mW_{V}\vx_{i}$.
	This interpretation is illustrated in Figure \ref{fig:thm3}.
\end{itemize}

\begin{figure*}[t]
	\centering
	\includegraphics[scale=0.5]{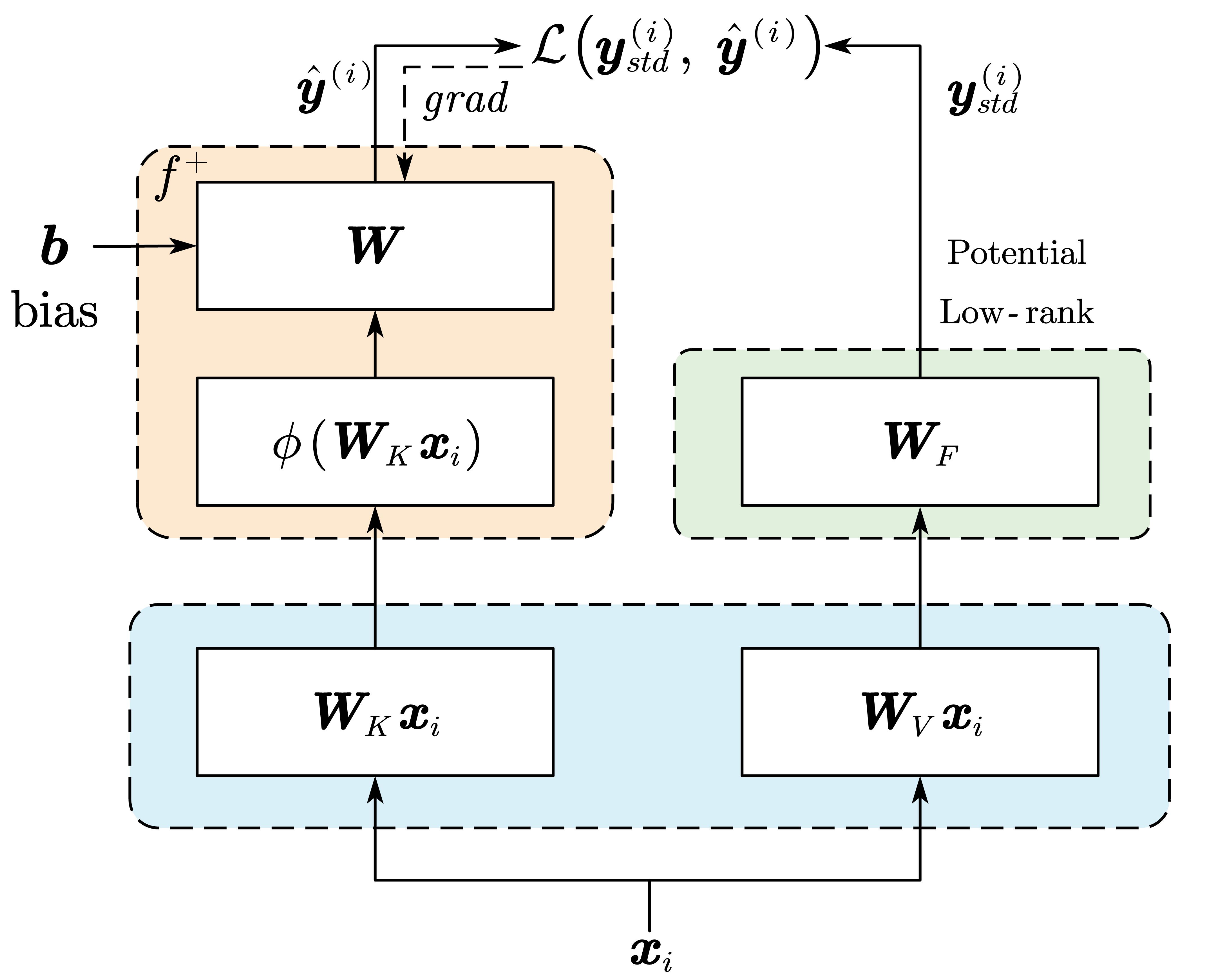}
	\hspace{1cm}
	\includegraphics[scale=0.5]{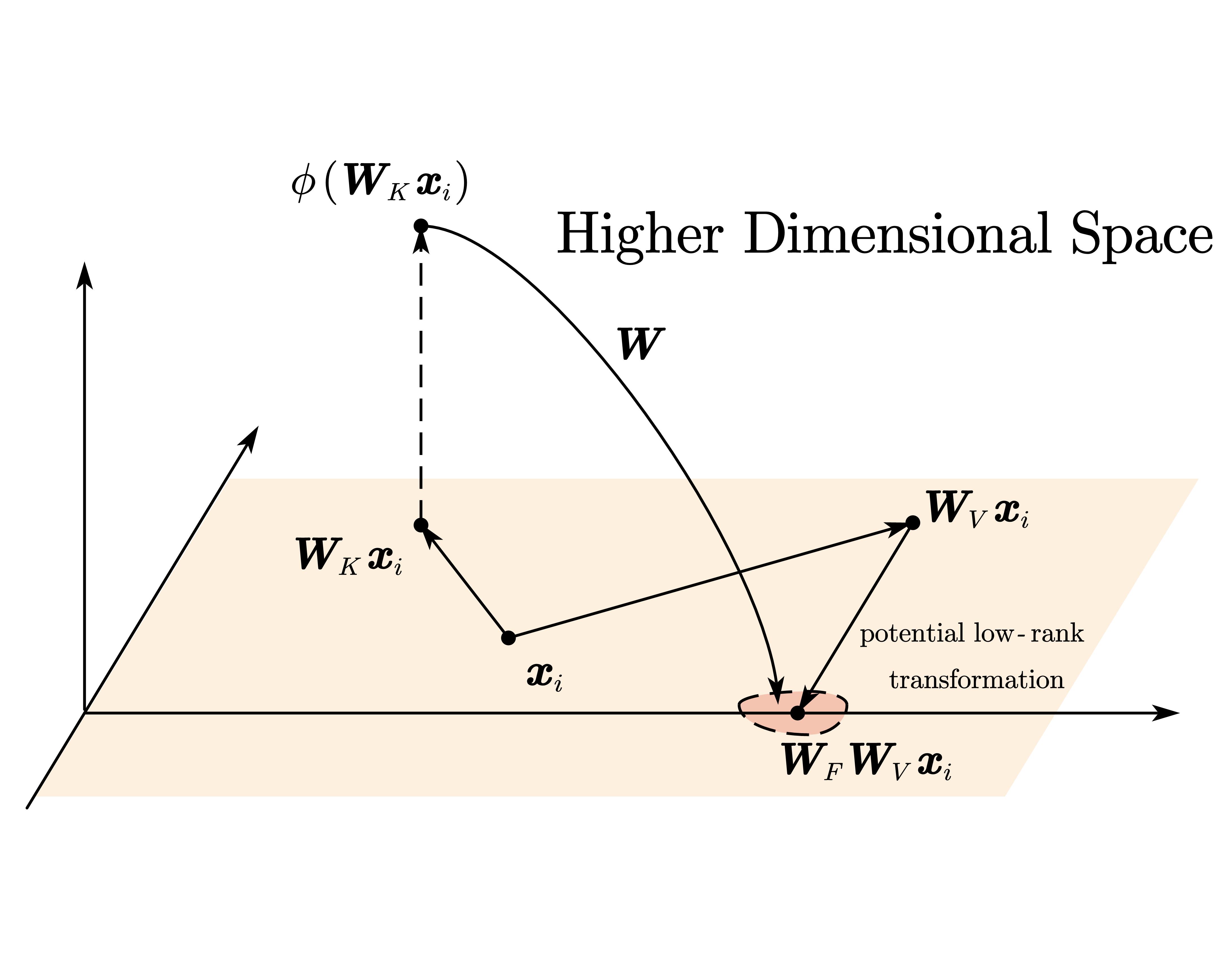}
	\caption{The representation learning process for the ICL inference by one Transformer layer.}  
	\label{fig:thm3}
\end{figure*}

\subsection{Extension to Multiple Attention Layers}
In this part , we extend Theorem~\ref{thm1} to multiple attention layers.
Here we adopt the attention layer based on PrefixLM~\citep{PrefixLM}, where the query tokens can compute attention with all preceding tokens (including itself), while for demonstration ones, attention can be computed between themselves, excluding the query tokens. 
Existing work \citep{CasualLMnotgood} has theoretically and experimentally explained that PrefixLM achieves better results than CasualLM.
In this paper, we assume we have only one query token, that is, there is no query input before the considered query token.
With the assumption that $T=0$, to maintain notational simplicity, we use $\vx_{N+1}$ to represent the query token here instead of $\vx'_{T+1}$ and the input will be $\mX = [\mX_{D}, \vx_{N+1}]$. 
We assume that there are $L$ attention layers and the output of the $l$-th layer $\mX^{(l)}$ can be expressed as:
\begin{equation*}
	\begin{aligned}
		\mH^{(l)} &= [\mH_{D}^{(l)}, \vh_{N+1}^{(l)}] = \mathrm{Atten}(\mH^{(l-1)};~\mW_{Q}^{(l)},\mW_{K}^{(l)},\mW_{V}^{(l)}), \\
		\mH_{D}^{(l)} &= \mW_{V}^{(l)}\mH_{D}^{(l-1)} \mathrm{Softmax} \left(  \frac{(\mW_{K}^{(l)}\mH_{D}^{(l-1)})^{T}\mW_{Q}^{(l)}\mH_{D}^{(l-1)}}{\sqrt{d}} \right), \\
		\vh_{N+1}^{(l)} &= \mW_{V}^{(l)}\mH^{(l-1)} \mathrm{Softmax} \left(  \frac{(\mW_{K}^{(l)}\mH^{(l-1)})^{T}\mW_{Q}^{(l)} \vh_{N+1}^{(l-1)}}{\sqrt{d}} \right).
	\end{aligned}	
\end{equation*}
where we set $\mH^{(0)} = \mX = [\mX_{D}, \vx_{N+1}]$ as the initial input.
And the final output of the query token is $\hat{\vh}_{N+1}^{(L)} = \vh_{N+1}^{(L)}$.
Here, we assume that after training, the parameters $\mW_{Q}^{(l)}, \mW_{K}^{(l)},\mW_{V}^{(l)} \in \sR^{d_o \times d_i}$ are fixed and we set $d_o = d_i = d$.  

Next, we extend Theorem~\ref{thm1} to the case of multiple softmax attention layers.
Formally, we present our result in the following theorem.
\begin{theorem}
	Given $L$ softmax attention layers whose parameters $\{\mW_{Q}^{(l)}, \mW_{K}^{(l)},\mW_{V}^{(l)}\}_{l=1}^{L}$ are fixed after training, the ICL output of these layers is equivalent to sequentially performing one step gradient descent on a sequence of dual models $\left \{ f^{(l)}(\vx) = \mW^{(l)}\phi(\vx) \right \}_{l=1}^{L} $, where the loss function for the $l$-th dual model is:
	\begin{equation}\label{loss2}
		\mathcal{L}^{(l)} = - \frac{1}{\eta D^{(l)}} \sum_{i=1}^{N} \left(\mW_{V}^{(l)} \vh_{i}^{(l-1)} \right)^{T} \mW^{(l)}\phi(\mW_{K}^{(l)}\vh_{i}^{(l-1)}),
	\end{equation}
	where $\vh_{i}^{(l-1)}$ is the output of the $(l-1)$-th attention layer for the $i$-th token, $\eta$ is the learning rate and $D^{(l)}$ is a constant.
	The input for the $l$-th dual model is generated by the trained $(l-1)$-th dual model.
\end{theorem}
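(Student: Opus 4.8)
The plan is to prove the statement by induction on the layer index $l$, reducing each layer to a single application of Theorem~\ref{thm1}. The observation that makes the induction clean is the PrefixLM masking: demonstration tokens attend only among themselves, so once the parameters $\{\mW_{Q}^{(l)},\mW_{K}^{(l)},\mW_{V}^{(l)}\}_{l=1}^{L}$ are fixed, the demonstration trajectory $\mH_{D}^{(0)} = \mX_{D} \to \mH_{D}^{(1)} \to \cdots \to \mH_{D}^{(L-1)}$ is completely determined and does not depend on the query token. Consequently, at each layer $l$ the vectors $\vh_{i}^{(l-1)}$, $i = 1,\dots,N$, occurring in Eq~(\ref{loss2}) are fixed quantities, and they are exactly the objects that play the role of the demonstration tokens $[\vx_{i}]_{i=1}^{N}$ in the hypotheses of Theorem~\ref{thm1}; the common feature map $\phi$ is the one associated with the softmax kernel, shared by all layers.

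First I would handle the base case $l=1$, which is just Theorem~\ref{thm1} applied to the first attention layer with input $\mH^{(0)} = [\mX_{D},\vx_{N+1}]$, demonstration tokens $\mX_{D}$, and query token $\vx_{N+1}$: it produces a dual model $f^{(1)}(\vx) = \mW^{(1)}\phi(\vx)$ whose one-step gradient-descent update under $\mathcal{L}^{(1)}$ gives, on test input $\mW_{Q}^{(1)}\vx_{N+1}$, the prediction $\vh_{N+1}^{(1)}$. There is one bookkeeping point that recurs at every layer and must be handled carefully: in the PrefixLM setup the query token also attends to itself, while in the statement of Theorem~\ref{thm1} the token at position $T+1$ does not. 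I would absorb this self-attention contribution, together with the zero-shot baseline, into the initialization $\mW_{0}^{(l)}$ of the $l$-th dual model (exactly as the term $\mW_{0}' = \frac{1}{D'}\mV_{T}\phi(\mK_{T})^{T}$ is handled in the proof of Theorem~\ref{thm1}), so that the gradient step ranges over precisely the $N$ demonstration tokens appearing in Eq~(\ref{loss2}) and the normalizer $D^{(l)}$ collects the softmax denominator over those $N$ tokens plus the single self-attention term.

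For the inductive step, assume the claim holds through layer $l-1$, so $\vh_{N+1}^{(l-1)}$ is the test output of the trained $(l-1)$-th dual model. The query-token output of the $l$-th layer, $\vh_{N+1}^{(l)} = \mW_{V}^{(l)}\mH^{(l-1)}\,\mathrm{Softmax}\!\left((\mW_{K}^{(l)}\mH^{(l-1)})^{T}\mW_{Q}^{(l)}\vh_{N+1}^{(l-1)}/\sqrt{d}\right)$, has exactly the algebraic form of Eq~(\ref{attenH}) with demonstration tokens $\{\vh_{i}^{(l-1)}\}_{i=1}^{N}$ and query $\vh_{N+1}^{(l-1)}$; applying Theorem~\ref{thm1} verbatim (with the self term absorbed into $\mW_{0}^{(l)}$) yields a dual model $f^{(l)}(\vx) = \mW^{(l)}\phi(\vx)$ with loss $\mathcal{L}^{(l)}$ of the form in Eq~(\ref{loss2}) whose one-step update $\widehat{\mW}^{(l)}$ satisfies $\widehat{\mW}^{(l)}\phi(\mW_{Q}^{(l)}\vh_{N+1}^{(l-1)}) = \vh_{N+1}^{(l)}$. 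Because the test input of the $l$-th dual model is $\mW_{Q}^{(l)}\vh_{N+1}^{(l-1)}$ and $\vh_{N+1}^{(l-1)}$ is, by hypothesis, the output of the trained $(l-1)$-th dual model, the dual models are chained as asserted; at $l = L$ this gives $\hat{\vh}_{N+1}^{(L)} = \vh_{N+1}^{(L)}$ equal to the test prediction of the $L$-th dual model, completing the induction.

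The main obstacle is not depth but bookkeeping: I expect the delicate part to be (i) arguing cleanly that the demonstration trajectory $\mH_{D}^{(l)}$ is query-independent so that Eq~(\ref{loss2}) genuinely refers to a fixed dataset, and (ii) at each layer correctly splitting the attention output into the portion that becomes the dual model's initialization $\mW_{0}^{(l)}$ (zero-shot baseline plus query self-attention) and the portion realized by one gradient step over the $N$ demonstration tokens, while keeping the layer-dependent normalizers $D^{(l)}$ consistent. Once this partition is fixed, each layer is a verbatim reduction to Theorem~\ref{thm1} and the induction closes.
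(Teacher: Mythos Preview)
Your inductive reduction to Theorem~\ref{thm1} is correct and matches the paper's overall structure, including the absorption of the query self-attention term into the initialization $\mW_{0}^{(l)}$. However, you only half-establish the final sentence of the theorem, ``The input for the $l$-th dual model is generated by the trained $(l-1)$-th dual model.'' You chain the dual models through the \emph{query} token alone: $\vh_{N+1}^{(l-1)}$ is the test output of the $(l-1)$-th trained dual model and, after $\mW_{Q}^{(l)}$, becomes the test input of the $l$-th. But the \emph{training data} for the $l$-th dual model --- the demonstration representations $\vh_{i}^{(l-1)}$, $i=1,\dots,N$, that appear in $\mathcal{L}^{(l)}$ --- you simply declare ``fixed'' by PrefixLM query-independence and compute from the attention layers directly, not from the dual models.

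The paper supplies the missing link by showing that the demonstration outputs themselves can be read off the $l$-th dual model: for each $i\le N$,
\[
\vh_{i}^{(l)} \;=\; \frac{D^{(l)}}{D^{(l)}_{i}}\Bigl[\,\hat f^{(l)}\!\bigl(\mW_{Q}^{(l)}\vh_{i}^{(l-1)}\bigr)\;-\;f^{(l)}_{init}\!\bigl(\mW_{Q}^{(l)}\vh_{i}^{(l-1)}\bigr)\Bigr],
\]
where $D^{(l)}_{i}$ is the softmax normalizer for the $i$-th demonstration token and $\hat f^{(l)}, f^{(l)}_{init}$ are the trained and initialized $l$-th dual models. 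This identity (obtained by noticing that under PrefixLM the demonstration-token attention sum is exactly $\Delta\mW^{(l)}\phi(\mW_{Q}^{(l)}\vh_{i}^{(l-1)})$ up to renormalization) is what lets the paper present the entire multi-layer ICL process as a pure chain of dual models --- both the training set \emph{and} the test input for layer $l$ are produced by the $(l-1)$-th dual model --- rather than as a dual-model chain for the query running in parallel with a separate attention computation for the demonstrations. Your proof of the main equivalence $\vh_{N+1}^{(L)}=\hat{\vy}_{test}$ goes through, but without this formula the last clause of the theorem is not fully justified.
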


\begin{figure*}[t]
	\centering
	\includegraphics[scale=0.3]{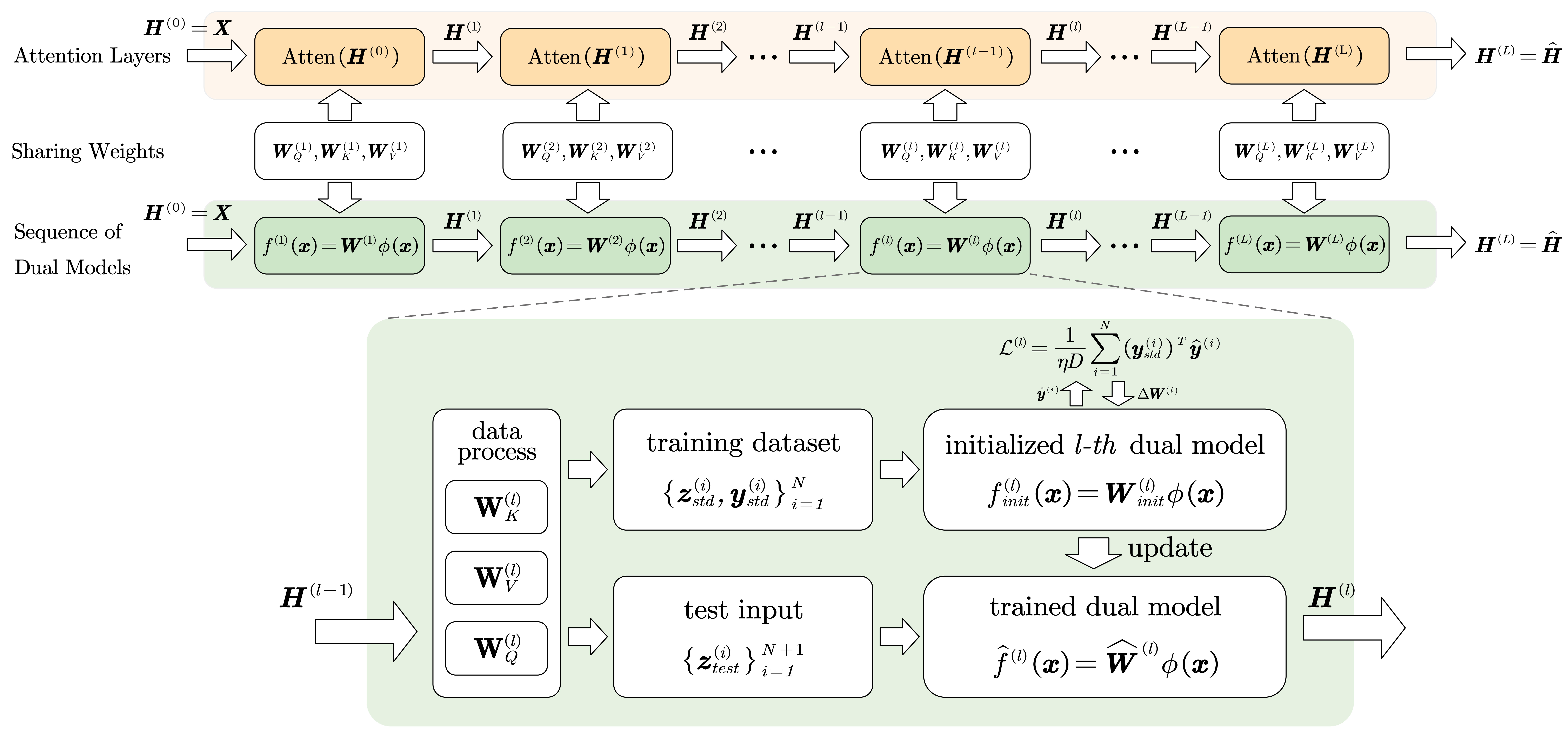}
	\caption{Illustrating the ICL inference process of multiple softmax attention layers from the perspective of dual models.
		The layer-wise process of ICL can be viewed as a gradual gradient descent on the dual model sequence. The datasets used for each gradient descent, including training data and test input, are obtained from the outputs of the previous dual model before and after training.}
	\label{fig:thm4}
\end{figure*}

\begin{proof}
	Given $\mH^{(l-1)}$ as the input for the $l$-th attention layer, the inference process of $\vh_{N+1}^{(l)}$ is 
	\begin{equation*}
		\begin{aligned}
			\vh_{N+1}^{(l)} &= \mW_{V}^{(l)}\mH^{(l-1)} \mathrm{Softmax} \left(  \frac{(\mW_{K}^{(l)}\mH^{(l-1)})^{T}\mW_{Q}^{(l)} \vh_{N+1}^{(l-1)}}{\sqrt{d}} \right) \\
			&= \mW_{0}^{(l)}\phi \left( \mW_{Q}^{(l)} \vh_{N+1}^{(l-1)} \right) + \left[ \sum_{i=1}^{N} \frac{1}{D^{(l)}} \mW_{V}^{(l)}\vh_{i}^{(l-1)} \otimes \phi(\mW_{K}^{(l)} \vh_{i}^{(l-1)}) \right]\phi \left( \mW_{Q}^{(l)} \vh_{N+1}^{(l-1)} \right),
		\end{aligned}
	\end{equation*}
	where $D^{(l)} = \vone_{N+1}^{T}\phi(\mW_{K}^{(l)}\mH^{(l-1)})^{T}\phi(\mW_{Q}^{(l)}\vh_{N+1}^{(l-1)})$ is a constant to normalize the attention scores and $\mW_{0}^{(l)} = \frac{1}{D^{(l)}} (\mW_{V}^{(l)} \vh_{N+1}^{(l-1)}) \phi(\mW_{K}^{(l)}\vh_{N+1}^{(l-1)})^{T} $.
	According to Theorem~\ref{thm1}, we can easily get the dual model $f^{(l)}_{init}(\vh) = \mW^{(l)}_{init} \phi(\vh)$ where the initialization is $\mW^{(l)}_{init} = \mW_{0}^{(l)}$.
	Given the loss function $\mathcal{L}^{(l)} $ formed as Equation~\ref{loss2} and training set $\left \{\vz_{std}^{(i)} , \vy_{std}^{(i)} \right \}_{i=1}^{N}$ where $\vz_{std}^{(i)} = \mW_{K}^{(l)}\vh_{i}^{(l-1)}$ and $\vy_{std}^{(i)} = \mW_{V}^{(l)}\vh_{i}^{(l)}$, we perform one step SGD with learning rate $\eta$ on weight matrix $\mW^{(l)}$ and will get trained dual model:
	\begin{equation*}
		\begin{aligned}
			\hat{f}^{(l)}(\vx) &= \widehat{\mW}^{(l)} \phi(\vx) = (\mW^{(l)}_{init} + \Delta \mW^{(l)}) \phi(\vx) \\
			&= \left[ \mW_{0}^{(l)} + \sum_{i=1}^{N} \frac{1}{D^{(l)}} \mW_{V}^{(l)}\vh_{i}^{(l-1)} \otimes \phi(\mW_{K}^{(l)} \vh_{i}^{(l-1)}) \right]\phi(\vx)
		\end{aligned}
	\end{equation*}
	Taking test input as $\vz^{(l)}_{test} =  \mW_{Q}^{(l)} \vh_{N+1}^{(l-1)} $, the prediction $\hat{f}^{(l)}(\vz^{(l)}_{test})$ will exactly equal to $\vh_{N+1}^{(l)}$.
	
	Next, we will show how to obtain $\mH_{D}^{(l)}$ through the trained dual model $\hat{f}^{(l)}(\vx)$.
	And after $\mW_{K}^{(l+1)}, \mW_{Q}^{(l+1)},\mW_{V}^{(l+1)}$ projections, $\mH_{D}^{(l)}$ will constitute the training set as well as the test input for the next dual model $f^{(l+1)}_{init}(\vx)$.
	
	Keeping the initialized dual model $f^{(l)}_{init}(\vx)$ and the trained one $\hat{f}^{(l)}(\vx)$ in mind, we can compute the demonstration token output $\vh^{(l)}_{i}$ ($i=1,2,...,N$) of $l$-th attention layer as
	\begin{equation}\label{eq:calD}
		\begin{aligned}
			\vh^{(l)}_{i} &= \mW_{V}^{(l)}\mH_{D}^{(l-1)} \mathrm{Softmax} \left(  \frac{(\mW_{K}^{(l)}\mH_{D}^{(l-1)})^{T}\mW_{Q}^{(l)}\vh_{i}^{(l-1)}}{\sqrt{d}} \right) \\
			&= \left[ \sum_{i=1}^{N} \frac{1}{D^{(l)}_{i}} \mW_{V}^{(l)}\vh_{i}^{(l-1)} \otimes \phi(\mW_{K}^{(l)} \vh_{i}^{(l-1)}) \right]\phi \left( \mW_{Q}^{(l)} \vh_{i}^{(l-1)} \right) \\
			&= \frac{D^{(l)}}{D^{(l)}_{i}} \left[ \sum_{i=1}^{N} \frac{1}{D^{(l)}} \mW_{V}^{(l)}\vh_{i}^{(l-1)} \otimes \phi(\mW_{K}^{(l)} \vh_{i}^{(l-1)}) \right] \phi \left( \mW_{Q}^{(l)} \vh_{i}^{(l-1)} \right) \\
			&= \frac{D^{(l)}}{D^{(l)}_{i}} \left[ \mW_{init}^{(l)} + \sum_{i=1}^{N} \frac{1}{D^{(l)}} \mW_{V}^{(l)}\vh_{i}^{(l-1)} \otimes \phi(\mW_{K}^{(l)} \vh_{i}^{(l-1)}) - \mW_{init}^{(l)} \right] \phi \left( \mW_{Q}^{(l)} \vh_{i}^{(l-1)} \right)	\\
			&= 	\frac{D^{(l)}}{D^{(l)}_{i}} \left[ \widehat{\mW}^{(l)} - \mW_{init}^{(l)} \right] \phi \left( \mW_{Q}^{(l)} \vh_{i}^{(l-1)} \right) = \frac{D^{(l)}}{D^{(l)}_{i}} \left[   \hat{f}^{(l)}(\mW_{Q}^{(l)} \vh_{i}^{(l-1)})  - f^{(l)}_{init}(\mW_{Q}^{(l)} \vh_{i}^{(l-1)})  \right]
		\end{aligned}
	\end{equation}
	where $D^{(l)}_{i} = \vone_{N}^{T}\phi(\mW_{K}^{(l)}\mH_{D}^{(l-1)})^{T}\phi(\mW_{Q}^{(l)}\vh_{i}^{(l-1)})$ is a constant to normalize the attention scores for $\vh^{(l)}_{i} $.
	Therefore, once we obtain the trained dual model $\hat{f}^{(l)}(\vh)$, we can use the Eq~(\ref{eq:calD}) to get the demonstration token output $\mH_{D}^{(l)} = [\vh_{1}^{(l)}, \vh_{2}^{(l)},...,\vh_{N}^{(l)}]$. 
	These demonstration token outputs, along with the output $\vh_{N+1}^{(l)}$ for query tokens, will together constitute the training set and test input for the next dual model $f^{(l+1)}_{init}(\vh)$. 
	This process continues layer by layer until we obtain the ultimate ICL output $\vh_{N+1}^{(L)}$. 
	In summary, the ICL inference process across $L$ attention layers is equivalent to performing gradient descent on $L$ dual models sequentially. 
	Thus, we complete our proof.
\end{proof}

This theorem is a natural extension of Theorem~\ref{thm1}: when considering the stacking of multiple attention layers, a sequence of dual models is correspondingly generated.
Although these dual models have the same form $f(\vx) = \mW\phi(\vx)$, they have different initializations and datasets.
As the ICL inference process progresses layer by layer between attention layers, we equivalently perform gradient descent on the dual models one by one.
The input $\mH^{(l)}$ for each attention layer, including demonstration tokens and query tokens, can be obtained from the test output of the dual models. This can be illustrated in Figure~\ref{fig:thm4}.

\section{Proof of the Generalization bound}\label{app:gen_bound}
\subsection{Proof of Theorem~\ref{thm:gen}}
In this part, we provide the proof regarding the generalization boundary in Theorem~\ref{thm:gen}.
We restate our theorem as follows:
\begin{theorem}
	Define the function class as $\mathcal{F} := \left\{ f(\vx)=\mW\phi(\mW_{K}\vx)  ~| ~ \|\mW\| \le w \right\}$ and let the loss function defined as Eq~(\ref{loss_gen}).
	Consider the given demonstration set as $\mathcal{S} = \{\vx_{i}\}_{i=1}^{N}$ where $\mathcal{S} \subseteq \mathcal{S}_{\mathcal{T}}$ and $\mathcal{S}_{\mathcal{T}}$ is all possible demonstration tokens for some task $\mathcal{T}$.
	With the assumption that $\| \mW_{V} \vx_{i} \|, \|  \mW\phi(\mW_{K}\vx_{i})  \| \le \rho $, then for any $\delta > 0$, the following statement holds with probability at least $1-\delta$ for any $f \in \mathcal{F}$
	\begin{equation}
		\mathcal{L}(\hat{f}) \le \mathcal{L}(f) + O\left(\frac{w\rho d_{o}\sqrt{\mathrm{Tr}(\mK_{\mathcal{S}})}}{N} + \sqrt{\frac{log\frac{1}{\delta}}{N}} \right). 
	\end{equation}
\end{theorem}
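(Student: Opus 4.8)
The plan is a textbook uniform-convergence argument, with the capacity of $\mathcal{F}$ measured by a Rademacher complexity. First I would use optimality of $\hat{f}$ on the empirical loss: since $\hat{f}=\argmin_{g\in\mathcal{F}}\hat{\mathcal{L}}(g)$ and $f\in\mathcal{F}$, the decomposition $\mathcal{L}(\hat{f})-\mathcal{L}(f)=\big(\mathcal{L}(\hat{f})-\hat{\mathcal{L}}(\hat{f})\big)+\big(\hat{\mathcal{L}}(\hat{f})-\hat{\mathcal{L}}(f)\big)+\big(\hat{\mathcal{L}}(f)-\mathcal{L}(f)\big)$ has a non-positive middle term, hence $\mathcal{L}(\hat{f})-\mathcal{L}(f)\le 2\sup_{g\in\mathcal{F}}|\mathcal{L}(g)-\hat{\mathcal{L}}(g)|$. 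The per-example loss $\ell(\vx;\mW)=-(\mW_{V}\vx)^{T}\mW\phi(\mW_{K}\vx)$ satisfies $|\ell(\vx;\mW)|\le\|\mW_{V}\vx\|\,\|\mW\phi(\mW_{K}\vx)\|\le\rho^{2}$ by Cauchy--Schwarz and the assumption, so replacing one demonstration token changes $\sup_{g}|\mathcal{L}(g)-\hat{\mathcal{L}}(g)|$ by at most $2\rho^{2}/N$; McDiarmid's bounded-differences inequality then gives, with probability at least $1-\delta$, $\sup_{g}|\mathcal{L}(g)-\hat{\mathcal{L}}(g)|\le\mathbb{E}\,\sup_{g}|\mathcal{L}(g)-\hat{\mathcal{L}}(g)|+O\big(\rho^{2}\sqrt{\log(1/\delta)/N}\big)$, and the standard symmetrization lemma bounds the expectation by $2\,\mathcal{R}_{N}(\ell\circ\mathcal{F})$, the expected empirical Rademacher complexity of the loss class over $\mathcal{S}$.

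The heart of the argument is bounding $\mathcal{R}_{N}(\ell\circ\mathcal{F})$. Writing $\vw_{k}$ for the $k$-th row of $\mW$, the loss splits over output coordinates as $-(\mW_{V}\vx)^{T}\mW\phi(\mW_{K}\vx)=-\sum_{k=1}^{d_{o}}(\mW_{V}\vx)_{k}\,\vw_{k}^{T}\phi(\mW_{K}\vx)$; since $\|\vw_{k}\|\le\|\mW\|_{F}\le w$, I can relax the Frobenius ball to an independent $\ell_{2}$-ball per row and use sub-additivity of Rademacher complexity over the $d_{o}$ summands. For a fixed $k$, linearity and the dual description of the $\ell_{2}$-ball give $\mathbb{E}_{\sigma}\sup_{\|\vw_{k}\|\le w}\frac{1}{N}\sum_{i}\sigma_{i}(\mW_{V}\vx_{i})_{k}\,\vw_{k}^{T}\phi(\mW_{K}\vx_{i})=\frac{w}{N}\,\mathbb{E}_{\sigma}\big\|\sum_{i}\sigma_{i}(\mW_{V}\vx_{i})_{k}\,\phi(\mW_{K}\vx_{i})\big\|$, and by Jensen and independence of the Rademacher signs this is at most $\frac{w}{N}\big(\sum_{i}(\mW_{V}\vx_{i})_{k}^{2}\,\|\phi(\mW_{K}\vx_{i})\|^{2}\big)^{1/2}\le\frac{w\rho}{N}\big(\sum_{i}\|\phi(\mW_{K}\vx_{i})\|^{2}\big)^{1/2}=\frac{w\rho}{N}\sqrt{\mathrm{Tr}(\mK_{\mathcal{S}})}$, using $|(\mW_{V}\vx_{i})_{k}|\le\|\mW_{V}\vx_{i}\|\le\rho$ and $\|\phi(\mW_{K}\vx_{i})\|^{2}=(\mK_{\mathcal{S}})_{ii}$. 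Summing over the $d_{o}$ coordinates yields $\mathcal{R}_{N}(\ell\circ\mathcal{F})\le w\rho\,d_{o}\sqrt{\mathrm{Tr}(\mK_{\mathcal{S}})}/N$, and chaining with the reduction above gives $\mathcal{L}(\hat{f})\le\mathcal{L}(f)+O\big(\frac{w\rho d_{o}\sqrt{\mathrm{Tr}(\mK_{\mathcal{S}})}}{N}+\sqrt{\frac{\log(1/\delta)}{N}}\big)$ as claimed, the $\rho^{2}$ constant from McDiarmid being absorbed into the $O(\cdot)$.

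The step I expect to be the main obstacle is the Rademacher bound for the \emph{vector-valued} class, i.e.\ reconciling the Frobenius-ball constraint on $\mW$ with the $d_{o}$-dimensional output. The coordinate decomposition above is the cleanest route but is deliberately loose: it discards the coupling $\sum_{k}\|\vw_{k}\|^{2}\le w^{2}$ and bounds each $|(\mW_{V}\vx_{i})_{k}|$ by $\rho$, which is exactly what produces the (likely improvable) $d_{o}$ factor. An alternative via a vector-contraction (Maurer-type) inequality, or a direct Frobenius-inner-product argument using $\ell(\vx_{i};\mW)=-\langle\mW,\,\mW_{V}\vx_{i}\,\phi(\mW_{K}\vx_{i})^{T}\rangle_{F}$ (so that the supremum over $\|\mW\|_{F}\le w$ equals $w$ times a Frobenius norm of a sum of rank-one blocks), would tighten the dimension dependence but needs more care to state cleanly while still extracting the $\sqrt{\mathrm{Tr}(\mK_{\mathcal{S}})}$ term. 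The remaining ingredients — the ERM reduction, symmetrization, and McDiarmid — are entirely routine.
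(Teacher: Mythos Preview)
Your argument is correct and reaches the stated bound, but the route you take for the Rademacher step differs from the paper's. The paper does exactly what you flag as ``an alternative via a vector-contraction (Maurer-type) inequality'': it builds an auxiliary vector-valued class $\tilde{\mathcal{F}}=\{[\mW\phi(\mW_{K}\vx);\,\mW_{V}\vx]\}$ mapping into $\mathbb{R}^{2d_o}$, observes that the loss is $h\circ\tilde f$ for $h(\vu)=-\langle \vu_{1:d_o},\vu_{d_o+1:2d_o}\rangle$, which is $\sqrt{2}\rho$-Lipschitz on the relevant domain, and applies Maurer's contraction lemma to get $\mathcal{R}_{\mathcal{S}}(G)\le 2\rho\,\mathcal{R}_{\mathcal{S}}(\tilde{\mathcal{F}})$; only then does it bound $\mathcal{R}_{\mathcal{S}}(\tilde{\mathcal{F}})$ by the same per-row relaxation $\|\mW_j\|\le w$ and Jensen step you use. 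Your direct coordinate decomposition of the loss class is more elementary---it bypasses the contraction machinery entirely and produces the $\rho$ factor via the crude bound $|(\mW_V\vx_i)_k|\le\rho$ rather than via a Lipschitz constant---and it lands on the identical $w\rho\,d_o\sqrt{\mathrm{Tr}(\mK_{\mathcal{S}})}/N$. The outer shell also differs slightly: the paper invokes the one-sided classical bound (Mohri et al.) plus a separate Hoeffding for a fixed minimizer $f^*$ and a union bound, whereas you use the two-sided $\sup$ and McDiarmid; both are standard and interchangeable here. Neither route improves the $d_o$ dependence, and your suggestion that a Frobenius-inner-product argument might tighten it is a reasonable remark beyond what the paper attempts.
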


\begin{proof}
	Our proof is similar to the Lemma 4.2 in \citet{CL_theoretical}, but here we focus on a different function class.
	Firstly, we consider the classical generalization bound based on the Rademacher complexity of the function class which can refer to Theorem 3.1 in \citet{foundations_of_ML}.
	For a real function class $G$ whose functions map from a set $Z$ to $[0,1]$ and for any $\delta > 0$, if $\gS$ is a training set composed by $N$ iid samples $\{\vx_{i}\}_{i=1}^{N}$, then with probability at least $1-\frac{\delta}{2}$, for all $g \in G$
	\begin{equation}\label{classical_bound}
		\mathbb{E} \left[ g(\vx)\right] \le \frac{1}{N} \sum_{i=1}^{N} g(\vx_{i}) + \frac{2\gR_{\gS}(G)}{N} + 3\sqrt{\frac{\log \frac{4}{\delta}}{2N}}
	\end{equation}  
	where $\gR_{\gS}(G)$ is the traditional Rademacher complexity.
	By setting $\gS$ exactly the demonstration set and $G = \left\{ g_{f}(\vx) =  -  \left(\mW_{V} \vx \right)^{T} \mW\phi(\mW_{K}\vx) \big| \| \mW \| \le w \right\}$, we can apply this bound to our case.
	
	Then, we construct a function class $ \tilde{\gF} = \left \{\tilde{f}(\vx) = [f(\vx); \mW_{V}\vx] = [ \mW\phi(\mW_{K}\vx); \mW_{V}\vx] \big | \|\mW \| \le w \right \}$ whose functions map from $\gS$ to $\sR^{2d_o}$.
	Next, we will first prove $\gR_{\gS}(G) \le  2\rho \gR_{\gS}(\tilde{\gF})$ and to do this, we need to use the following Lemma:
	\begin{lemma}[Corollary 4 in \citet{maurer2016vector}]\label{lemma:GtoF}
		Let $Z$ be any set, and $\gS = \{\vz_{i}\}_{i=1}^{M} \in Z^M$. Let $\tilde{\gF}$ be a class of functions $\tilde{f}: Z \to \sR^n $ and $h:\sR^n \to \sR$ be $L$-Lipschitz.
		For all $\tilde{f} \in \tilde{\gF}$, let $g_{\tilde{f}} = h \circ \tilde{f} $. Then
		\begin{equation}\label{eq:GtoF}
			\mathop{\mathbb{E}}\limits_{\sigma \sim \{\pm 1\}^{M}}\left[ \sup_{\tilde{f}\in \tilde{F}}  \langle \sigma, (g_{\tilde{f}_{|\gS}}) \rangle \right] \le \sqrt{2}L \mathop{\mathbb{E}}\limits_{\sigma \sim \{\pm 1\}^{nM}}\left[ \sup_{\tilde{f}\in \tilde{F}}  \langle \sigma, (\tilde{f}_{|\gS}) \rangle \right]
		\end{equation}
		where $\tilde{f}_{|\gS} = \left(\tilde{f}_{t}(\vz_{j}) \right)_{t \in [n], j \in [M]}$.
	\end{lemma}
	We apply Lemma~\ref{lemma:GtoF} to our case by setting $Z = \sR^{d_{i}}$, $\gS$ to be exactly the demonstration set, $\tilde{\gF}$ to be the function class we constructed and $n = 2d$.
	We also use $h: \sR^{2d_o} \to \sR$ where $h(\vx) = - \langle \vx_{1:d_o}, \vx_{d_o+1:2d_o} \rangle$ and thus we have $g_{\tilde{f}}(\vx) = h (\tilde{f}(\vx)) = h([f(\vx); \mW_{V}\vx]) = -   \left(\mW_{V} \vx \right)^{T} \mW\phi(\mW_{K}\vx) $.
	We can find that $g_{f}(\vx) = g_{\tilde{f}}(\vx)$ and the left side of inequality (\ref{eq:GtoF}) is exactly $\gR_{\gS}(G)$.
	
	Then we can see that $h$ is $\sqrt{2}\rho$-Lipschitz with the assumption that 
	$\| \mW_{V} \vx_{i} \|, \|  \mW\phi(\mW_{K}\vx_{i})  \| \le \rho $ and we have $\gR_{\gS}(G) \le  2\rho \gR_{\gS}(\tilde{\gF})$.
	Now using Lemma~\ref{lemma:GtoF} and the classical generalization bound~(\ref{classical_bound}), we have that with probability at least $1-\frac{\delta}{2}$
	\begin{equation}\label{temp_bound}
		\mathcal{L}(\hat{f}) \le \hat{\mathcal{L}}(\hat f) + O\left( \frac{\rho \gR_{\gS}(\tilde{F})}{N} + \sqrt{\frac{log\frac{1}{\delta}}{N}} \right),
	\end{equation}
	Let $f^* \in \argmin_{f \in \gF} \gL(f)$.
	According to Hoeffding's inequality, with probability at least $1-\frac{\delta}{2}$, we have that $\hat{\gL}(f^*) \le \gL(f^*) + 3\sqrt{\frac{\log \frac{2}{\delta}}{2N}}$.
	Combining this with (\ref{temp_bound}), the fact that $\hat{\gL}(\hat{f}) \le \hat{\gL}(f^*)$ and applying a union bound, we can get that
	\begin{equation}\label{eq:temp_bound}
		\mathcal{L}(\hat{f}) \le \gL(f) + O\left( \frac{\rho \gR_{\gS}(\tilde{F})}{N} + \sqrt{\frac{log\frac{1}{\delta}}{N}} \right).
	\end{equation}
	Next, we give the upper bound for $\gR_{\gS}(\tilde{\gF})$.
	\begin{align*}
		\gR_{\gS}(\tilde{\gF}) &= \mathop \mathbb{E}\limits_{\sigma \sim \{\pm 1\}^{2Nd_o}} \left[ \sup_{\|\mW_j\| \le w} \sum_{t=1}^{2Nd_o} \sigma_t (\tilde{f}_{|\gS})_t \right] \quad  &&\text{(Definition of Rademacher complexity)} \\
		&= \mathop \mathbb{E}\limits_{\sigma \sim \{\pm 1\}^{Nd_o}} \left[ \sup_{\|\mW\| \le w} \sum_{j=1}^{d_o}\mW_{j}\sum_{i=1}^{N}\sigma_{i,j}\phi(\mW_{K}\vx_{i}) \right] \quad  &&\text{($\mW_{V}\vx_i$ is independent of $\mW_j$)} \\
		&\le  \mathop \mathbb{E}\limits_{\sigma \sim \{\pm 1\}^{Nd_o}} \left[ \sup_{\|\mW\| \le w} \sum_{j=1}^{d_o} \left\| \mW_{j}\right\| \left\| \sum_{i=1}^{N}\sigma_{i,j}\phi(\mW_{K}\vx_{i}) \right\| \right] \quad  &&\text{(By Cauchy-Schwartz inequality)} \\
		&\le  ~~ wd_o \mathop \mathbb{E}\limits_{\sigma \sim \{\pm 1\}^{N}} \left[ \left\| \sum_{i=1}^{N}  \sigma_{i}\phi(\mW_{K}\vx_{i}) \right\| \right] \quad  &&\text{(Using the fact that $\| \mW_j \| \le w$)} \\
		&\le wd_o \sqrt{ \mathbb{E}_{\sigma \sim \{\pm 1\}^{N}} \left[ \left\| \sum_{i=1}^{N}  \sigma_{i}\phi(\mW_{K}\vx_{i}) \right\|^2 \right]  } \quad  &&\text{(By Jensen's inequality)} \\
		& = wd_o \mathrm{Tr}(\mK_\gS)
	\end{align*}
	Substituting the upper bound of $
	\gR_{\gS}(\tilde{\gF})$ into (\ref{eq:temp_bound}), we will get that
	\begin{equation}
		\mathcal{L}(\hat{f}) \le \mathcal{L}(f) + O\left(\frac{w\rho d_{o}\sqrt{\mathrm{Tr}(\mK_{\mathcal{S}})}}{N} + \sqrt{\frac{log\frac{1}{\delta}}{N}} \right). 
	\end{equation}
	Thus we finish our proof.
\end{proof}

\subsection{Extension to negative models:}
One may also wonder whether the ratio of negative samples mentioned in Section \ref{sec:CL} will affect the generalization bounds.
In fact, after introducing negative samples and ignoring constant term in Eq (\ref{lossL}),  we consider the following representation loss:
$$
\mathcal{L}(f) = \mathbb{E}_{x\sim\mathcal{D}_{\mathcal{T}}}\left[- \frac{1}{K} \sum_{j = 1}^{K} \left(\mW\phi(\mW_Kx)\right)^T\left(\mW_Vx - \mW_Vx^{-}_{j} \right)\right],
$$
where  we consider sampling $K$ negative samples for each $x_i$ and  $x_{j}^-$ denotes the $j$-th negative sample for token $x$ . Correspondingly, the empirical loss will be considered as $\hat{\mathcal{L}}(f) = -\frac{1}{N} \sum_{i=1}^{N} \frac{1}{K} \sum_{j = 1}^{K} \left(\mW\phi(\mW_Kx_i)\right)^T\left(\mW_Vx_i - \mW_Vx^{-}_{ij} \right)$ where $x_{ij}^{-}$ is the $j$-th negative sample for $x_i$. Then, by retaining the other definitions in Section \ref{sec:3_3}, corresponding to Theorem~\ref{thm:gen}, we can obtain the generalization bound as 
$$
\mathcal{L}(\hat{f})\le \mathcal{L}(f) + O\left(w\rho d_o \sqrt{\mathrm{Tr}(K_S)\left(\frac{5}{N^2} + \frac{1}{rN^3}\right)} + \sqrt{\frac{log\frac{1}{\delta}}{N}}\right),
$$
where $r = \frac{K}{N}$ is excatly the the ratio of the number of negative samples. It can be observed that as the ratio of negative samples increases, the generalization error decreases. However, we also notice that $\frac{5}{N^2} > \frac{1}{rN^3}$ thus the former term  dominates, which means the reduction in generalization error due to an increased proportion of negative samples is limited. Nevertheless, we do not rule out the possibility of a tighter generalization bound, which is a promising direction for future research.

\begin{proof}[Proof Sketch]
	The proof process is similar to that of Theorem \ref{thm:gen}. 
	The main difference lies in the fact that we should firstly define the function class  $G = \left\{ - \frac{1}{K}\sum_{j = 1}^{K} \left(\mW\phi(\mW_Kx_i)\right)^T\left(\mW_Vx_i - \mW_Vx^{-}_{j} \right) \big| \|\mW\|\le w  \right\} $ to use the classical bound. 
	In addition, we define $\tilde{F} =\left\{ \tilde{f}(x) = [f(x); \mW_{V} x; \mW_{V} x^{-}_{1};...; \mW_{V}x^{-}_{K} ] \big| \| \mW \|\le w\right\}$ whose functions map from $\gS$ to $\mathbb{R}^{(K+2)d_o}$.  
	Similarly, when using Lemma \ref{lemma:GtoF}, we set $Z = \mathbb{R}^{d_i}$, $\tilde{F}$ be the above function class and $n = (K+2)d_o$. We also use $h: \mathbb{R}^{(K+2)d_o} \rightarrow \mathbb{R}$ defined as $h(x) = - \frac{1}{K}\sum_{j=1}^{K}x_{1:d_o}^T(x_{d_o+1:2d_o} - x_{(j+1)d_o+1:(j+2)d_o})$. 
	Then we notice that
	\begin{equation}
	\begin{aligned}
		& \frac{\partial h}{\partial x_{1:d_0}} = -\frac{1}{K} \sum_{j=1}^{K}(x_{d_0+1:2d_o} - x_{(j+1)d_o+1:(j+2)d_o}), \\
		& \frac{\partial h}{\partial x_{d_o+1:2d_o}} = -x_{1:d_o},~~\frac{\partial h}{\partial x_{(j+1)d_o+1:(j+2)d_o}} = \frac{1}{K}x_{1:d_o}.
	\end{aligned}
	\end{equation}
	With the assumption that $\|W_{V}x\|, \|W\phi(W_Kx)\| \le \rho$, 
	we can get that the Frobenius norm of the Jocabian $J$ of $h$ has $\|J\|_{F}^2 \le 4 \rho^2 + \rho^2 + \frac{K}{K^2}\rho^2 = (5+\frac{1}{K})\rho^2$.
	Thus we get that $h$ is $\sqrt{5+\frac{1}{K}}\rho$-Lipschitz. The rest of the proof process is similar to that of Theorem \ref{thm:gen}. 
	Ultimately, we will obtain the aforementioned generalization error.
\end{proof}

\section{Details and More Discussions for Section~\ref{sec:CL}}\label{app:modify}
In this section, we provide a more detailed discussion on improving the model structure from the perspective of representation learning especially contrastive learning, which is presented in Section~\ref{sec:CL} of the main body.
And we also point out the corresponding modifications in the self-attention mechanism, which are adopted in our experiments.

\subsection{More Discussion on the Contrastive Loss}
Although we have figured out the representation learning loss of the implicit gradient updates, it can be observed that this loss function has a flaw: due to the lack of normalization for $\vy_{std}^{(i)}$ and $\hat{\vy}^{(i)}$ when calculating the cosine distance, the loss can theoretically be optimized to negative infinity.
To address this issue, we introduce regularization to constrain the norm of $\mW$, that is,
\begin{equation*}
	\mathcal{L} = - \frac{1}{\eta D} \sum_{i=1}^{N} \left(\mW_{V} \vx_{i} \right)^{T} \mW\phi(\mW_{K}\vx_{i}) + \frac{\alpha}{2\eta}\|\mW\|_{F}^{2},
\end{equation*}
where $\alpha$ is a hyperparameter to balance the two parts.
As a result, we can see that the gradient update for $\mW$ will be in an exponentially smoothed manner meaning that a portion of the initial part will be discarded at every step, that is, 
\begin{equation*}
	\mW^{(t)} = \mW^{(t-1)} - \eta\frac{\partial \mathcal{L}}{\partial \mW} =  (1-\alpha) \mW^{(t-1)} +  \sum_{i=1}^{N} D^{-1} \mW_{V}\vh_{i} \otimes \phi(\mW_{K} \vh_{i}).
\end{equation*}

Equivalently, the inference process of ICL can be seen as the first step of the aforementioned update, and the attention mechanism will be correspondingly adjusted as,
\begin{equation*}
	\begin{aligned}
		\vh'_{T+1} = (1-\alpha)\mW_{0}\phi(\vq) + D^{-1} \left[ \sum_{i=1}^{N} \mV_{D}^{(i)} \otimes \phi(\mK_{D}^{(i)}) \right] \phi(\vq),
	\end{aligned}
\end{equation*}
which means more demonstration information will be attended to.
This will directly result in Eq~\ref{reguH}.

This result can be easily extended to self-attention mechanism.
As for a self-attention layer, if all other tokens adopt the same modification, the self-attention layer will become
\begin{equation*}
	\begin{aligned}
		\mH &= \mW_{V}\mX \mathrm{softmax} \left(  \frac{(\mW_{K}\mX)^{T}\mW_{Q}\mX}{\sqrt{d_{o}}} \right) - \alpha \mW_{V}\mX \\
		&= \mW_{V}\mX \left[ \mathrm{softmax} \left(  \frac{(\mW_{K}\mX)^{T}\mW_{Q}\mX}{\sqrt{d_{o}}} \right) - \alpha \mI \right],	
	\end{aligned}
\end{equation*}
which leads to the model structure incorporating an operation similar to skip connections.
Furthermore, to ensure numerical stability, we normalize the attention scores yielding:
\begin{equation*}
	\mH = \mW_{V}\mX \cdot \mathrm{Norm}\left( \mathrm{softmax} \left(  \frac{(\mW_{K}\mX)^{T}\mW_{Q}\mX}{\sqrt{d_{o}}} \right) - \alpha \mI  \right),
\end{equation*}
where $\mathrm{Norm}(\cdot)$ is performed column-wise to ensure that the attention scores sum to $1$.
The above modification reduce the attention score of each token to its own information during aggregation.
It is worth noting that, although our initial intention is to impose regularization on the contrastive loss where $\alpha > 0$ to prevent it from diverging to negative infinity, we find in experiments that this modification remains effective even when $\alpha$ is less than $0$. 
We interpret this as possibly stemming from the fact that an appropriate $\alpha$ helps the attention block become full-rank, thereby better preserving information, which can be illustrated by Lemma~\ref{lm:1}:

\begin{lemma}\label{lm:1}
	Let the attention block $\mA \in \sR^{n \times n}$. There exists some $\delta > 0$ such that, for any $0 < |\alpha| < \delta$, the attention block $\mA + \alpha \mI_n$ will become full-rank.	
\end{lemma}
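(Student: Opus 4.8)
The plan is to use the fact that the eigenvalues of $\mA + \alpha\mI_n$ are precisely $\lambda_i + \alpha$, where $\lambda_1, \ldots, \lambda_n$ are the eigenvalues of $\mA$ (counted with multiplicity, possibly complex). The matrix $\mA + \alpha\mI_n$ fails to be full-rank if and only if $0$ is one of its eigenvalues, i.e., if and only if $\alpha = -\lambda_i$ for some $i$. So the set of "bad" values of $\alpha$ is exactly $\{-\lambda_1, \ldots, -\lambda_n\}$, a finite set of at most $n$ complex numbers.

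First I would observe that among these bad values, at most one is real and nonzero — but more simply, since there are only finitely many bad $\alpha$, there are only finitely many bad \emph{real} $\alpha$, and in particular only finitely many bad values in any punctured neighborhood of $0$. Concretely, let $B = \{ -\lambda_i : \lambda_i \neq 0 \} \cap \sR$ be the set of nonzero real bad values (this is finite, possibly empty). Then I would set
\begin{equation*}
	\delta = \begin{cases} \min_{\beta \in B} |\beta| & \text{if } B \neq \emptyset, \\ 1 & \text{if } B = \emptyset. \end{cases}
\end{equation*}
For any $\alpha$ with $0 < |\alpha| < \delta$, the value $\alpha$ is real, nonzero, and strictly smaller in absolute value than every element of $B$, hence $\alpha \notin B$; and $\alpha \neq 0$ means $\alpha$ is not equal to $-\lambda_i$ for any $\lambda_i = 0$ either. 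So $0$ is not an eigenvalue of $\mA + \alpha\mI_n$, which means $\mA + \alpha\mI_n$ is invertible, i.e., full-rank.

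I do not expect any serious obstacle here — the statement is essentially the observation that a polynomial (the characteristic polynomial $\det(\mA + \alpha\mI_n)$, viewed as a polynomial in $\alpha$ of degree $n$ with leading coefficient $1$) has only finitely many roots, so it is nonzero on a punctured neighborhood of $0$. The only minor care needed is that $\mA$ itself may be singular (indeed a softmax/stochastic-type matrix can be), so $0$ might be among the $\lambda_i$; this is exactly why the lemma excludes $\alpha = 0$ and asks only for $0 < |\alpha| < \delta$. One could present the argument purely via the characteristic polynomial without mentioning eigenvalues at all: $p(\alpha) := \det(\mA + \alpha \mI_n)$ is a monic degree-$n$ polynomial, hence not identically zero, hence has finitely many real roots; choosing $\delta$ to be the distance from $0$ to the nearest nonzero real root (or any positive number if none exists) gives $p(\alpha) \neq 0$ for $0 < |\alpha| < \delta$, which is the claim.
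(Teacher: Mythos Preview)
Your proof is correct and takes essentially the same approach as the paper. The paper argues directly via the polynomial $f(\alpha) = \det(\alpha \mI_n + \mA)$ having only finitely many roots and sets $\delta$ to be the minimum absolute value among the nonzero ones --- precisely the polynomial route you sketch in your final paragraph, and equivalent to your eigenvalue formulation since those roots are exactly the negatives of the eigenvalues of $\mA$.
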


\begin{proof}
	Define $f(\alpha) = \det(\alpha \mI_n + \mA)$, which is a polynomial of degree $n$ in $\alpha$. 
	Then, $f(\alpha)$ has only finitely roots. 
	Let $\alpha_1, \alpha_2, \ldots, \alpha_r$ be the non-zero roots of $f(t)$.
	Now, consider $\delta = \min\{|\alpha_1|, |\alpha_2|, \ldots, |\alpha_r|\}$. 
	For $0 < |\alpha| < \delta$, we can claim that $f(\alpha) = \det(\alpha \mI_n + \mA) \neq 0$.
	Thus,  $\mA + \alpha \mI_n$ becomes non-singular (full-rank) and we complete the proof.	
\end{proof}

Lemma~\ref{lm:1} provides one possible case for appropriate $\alpha$. 
In fact, the selection of $\alpha$ can be quite flexible; for instance, similarly, when $\delta = \max\{|\alpha_1|, |\alpha_2|, \ldots, |\alpha_r|\}$  and  $|\alpha| > \delta$ holds,  $\mA + \alpha \mI_n$ also remains full-rank. 
Our experimental results related to regularized models will further illustrate the effectiveness of an appropriate $\alpha$ in enhancing model performance.

We also acknowledge that our modification is relatively straightforward and may not be optimal.
However, we believe that it may be a good choice to make structural improvements to the model from the perspective of the loss function, or more generally, from an optimization standpoint.
For example, to address the issue of non-normalized $\vy_{std}^{(i)}$ and $\hat{\vy}^{(i)}$, we can also modify the loss function from the perspective of ridge regression as:
\begin{equation*}
	\mathcal{L} = \frac{1}{2\eta D} \sum_{i=1}^{N} \| \mW_{V} \vx_{i}  -  \mW\phi(\mW_{K}\vx_{i}) \|_{F}^2 + \frac{\alpha}{2\eta}\|\mW\|_{F}^{2}.
\end{equation*}
And the optimal $\mW^*$ will be
\begin{equation*}
	\mW^* = \left[\phi(\mW_{K}\mX)\phi(\mW_{K}\mX)^T + \alpha D\mI\right]^{-1}\mW_{V}\mX\phi(\mW_{K}\mX).
\end{equation*}
Correspondingly, the attention mechanism will be modified to
\begin{equation}\label{eq:mesaH}
	\mH = \mW^*\phi(\mW_{Q}\mX) =  \left[\phi(\mW_{K}\mX)\phi(\mW_{K}\mX)^T + \alpha D\mI\right]^{-1}\mW_{V}\mX\phi(\mW_{K}\mX) \phi(\mW_{Q}\mX),
\end{equation}
where we neglect the normalization operation.
This result is very similar to the mesa-layer proposed by \citet{von2023uncovering}, which optimizes linear attention layers under the auto-regressive setting.
Here, we presented its form on softmax self-attention setting using kernel methods and explained it from the perspectives of contrastive loss and ridge regression.
Although the matrix inversion calculation in Eq~(\ref{eq:mesaH}) can be computationally expensive, effective methods for computing Eq~(\ref{eq:mesaH}), including both forward computation and backward propagation, have been thoroughly researched in \citet{von2023uncovering}, which  contributes to making the above modification practically applicable.

\subsection{More Discussion on the Data Augmentation}
In addition to discussing the loss function, the contrastive learning paradigm also offers our some insights.
In the corresponding representation learning process of ICL, we can easily notice that "data augmentation" is performed using a simple linear mapping, which may be not sufficient for learning deeper-level features.
To address this, we can employ more complicated nonlinear functions for more complex augmentations. 
Denoting these two augmentations as $g_{1}$ and $g_{2}$, consequently, the process of contrastive learning will be modified as follows
\begin{equation*}
	\mathcal{L} = - \frac{1}{\eta D} \sum_{i=1}^{N} \left[ g_{1}(\mW_{V} \vx_{i}) \right] ^{T} \mW\phi(g_{2}(\mW_{K}\vx_{i})).
\end{equation*}
Correspondingly, the gradient update for $\mW$ will become
\begin{equation*}
	\mW^{(t)} = \mW^{(t-1)} - \eta\frac{\partial \mathcal{L}}{\partial \mW} = \mW^{(t-1)} +  \sum_{i=1}^{N} D^{-1} g_{1}(\mW_{V} \vx_{i}) \otimes \phi(g_{2}(\mW_{K}\vx_{i})).
\end{equation*}

And from the perspective of ICL, correspondingly, the last token will be updated as 
\begin{equation*}
	\begin{aligned}
		\vh'_{T+1}  = \mW_{0}\phi(\vq) + D^{-1} \left[ \sum_{i=1}^{N} g_{1}(\mV_{D}^{(i)}) \otimes \phi(g_{2}(\mK_{D}^{(i)}) ) \right] \phi(\vq).
	\end{aligned}
\end{equation*}
And by reformulating the above equation we will get Eq~(\ref{auguSA}) in the main body. 

Correspondingly, the modification for self-attention layer can be adjusted as,
\begin{equation*}
	\mH = g_{1}(\mW_{V}\mX) \mathrm{softmax} \left(  \frac{g_{2}(\mW_{K}\mX)^{T}\mW_{Q}\mX}{\sqrt{d_{o}}} \right),
\end{equation*}
where $g_{1}(\cdot)$ and $g_{2}(\cdot)$ will be column-wise here.
It is worth noting that here we have only presented the framework of using nonlinear functions as data augmentations to modify the self-attention layer and in the simplest case, we can set $g_{1}(x)$ and $g_{2}(x)$ as MLPs (Multi-Layer Perceptrons).
However, in practice, it is encouraged to use data augmentation functions that are tailored to specific data structures.
For example, in the case of CMT~\citep{guo2022cmt}, the used Convolutional Neural Networks~(CNNs) can be considered as a form of "strong data augmentations" suitable for image datas within our framework.
We consider the exploration of various augmentation methods tailored to different types of data as an open question for future research.

\subsection{More discussion on the Negative Samples}
Although the gradient descent process corresponding to ICL exhibits some similarities with traditional contrastive learning approaches without negative samples, there are also significant differences:
In traditional Siamese networks, the augmented representations as positive pairs are further learned through target and online network that share weights (or at least influence each other using EMA).
The output of the target network is then passed through a predictor to compute the contrastive loss.
In contrast, the representation learning pattern corresponding to ICL indeed performs more simply, which may potentially limit the ability of the dual model to learn representations fully without negative samples.
To address this, similar to most contrastive learning approaches, we can introduce negative samples forcing the model to separate the distances between positive and negative samples at the same time, that is,
\begin{equation*}
	\begin{aligned}
		\mathcal{L} &= - \frac{1}{\eta D} \sum_{i=1}^{N} \left(\mW_{V} \vx_{i} \right)^{T} \mW\phi(\mW_{K}\vx_{i}) + \frac{\beta}{\eta D} \sum^{N}_{i=1} \frac{1}{|\mathcal{N}(i)|} \sum_{j \in \mathcal{N(\mathit{i})}}  \left(\mW_{V} \vx_{j} \right)^{T} \mW\phi(\mW_{K}\vx_{i}) \\
		& = - \frac{1}{\eta D} \sum_{i=1}^{N} \left(\mW_{V} \left( \vx_{i} - \frac{\beta}{|\mathcal{N}(i)|} \sum_{j \in \mathcal{N(\mathit{i})}} \vx_j \right) \right)^{T} \mW\phi(\mW_{K}\vx_{i}) \\
		& = - \frac{1}{\eta D} \sum_{i=1}^{N} \left(\mW_{V} \tilde{\vx}_{i} \right)^{T} \mW\phi(\mW_{K}\vx_{i}),
	\end{aligned}
\end{equation*}
where $\tilde{\vx}_{i} = \vx_{i} - \frac{\beta}{|\mathcal{N}(i)|} \sum_{j \in \mathcal{N(\mathit{i})}} \vx_j$, $\mathcal{N}(i)$ is the set of the negative samples for $\vx_{i}$ and $\beta$ is a hyperparameter.
As a result, the gradient descent on $\mW$ will be modified as
\begin{equation*}
	\mW^{(t)} = \mW^{(t-1)} - \eta\frac{\partial \mathcal{L}}{\partial \mW} =  \mW^{(t-1)} +  \sum_{i=1}^{N} D^{-1} \mW_{V} \tilde{\vx}_{i} \otimes \phi(\mW_{K} \vx_{i}).
\end{equation*}
Correspondingly, the ICL process for $\hat{\vh}_{N+1}$ will be
\begin{equation*}
	\begin{aligned}
		\vh'_{T+1} = \mW_{0}\phi(\mW_{Q}\vx'_{T+1}) + D^{-1} \left[ \sum_{i=1}^{N} \mW_{V} \tilde{\vx}_{i} \otimes \phi(\mW_{K}\vx_{i}) \right] \phi(\mW_{Q}\vx'_{T+1}).
	\end{aligned}
\end{equation*}
And this will directly result in Eq~(\ref{negaH}) in the main body.

As for a self-attention layer, similarly, we can get the corresponding modification as
\begin{equation}
	\begin{aligned}
		\mH = \mW_{V}\tilde{\mX} \mathrm{softmax} \left(  \frac{(\mW_{K}\mX)^{T}\mW_{Q}\mX}{\sqrt{d_{o}}} \right),
	\end{aligned}
\end{equation}
where $\tilde{\mX}^{(i)} = \tilde{\vx}_{i}$.
In corresponding experiments, for each token, we simply choose other the $k$ least relevant tokens as its negative samples, i.e., the $k$ tokens with the lowest attention scores.
Noting that here we simply use other token representations as negative samples for $\vx_{i}$.
However, there are more ways to construct negative samples that are worth exploring~(for instance, using noise vectors or tokens with low semantic similarity as negative samples).
For specific data structures and application scenarios, customizing the selection or construction of negative samples may be more effective.

\section{More Experiments}\label{app:more-exp}
\subsection{More details of Experiments on Linear Task}\label{app:ex-linear}

In this part, we will discuss our experimental setup in more details and provide more results on linear regression task.

Inspired by \citet{garg2022can} and \citet{svon2023transformer}, we choose to pretrain a softmax attention layer before exploring the equivalence proposed by Theorem~\ref{thm1}.
In fact, pretraining is not mandatory since our theoretical analysis does not depend on any specific weight construction.
In other words, the inference results of ICL and the test prediction of the dual model will still remain consistent for an attention layer with arbitrary weights or even random initialization. 
However, for the convenience of further investigating the impact of subsequent modifications to the model structure and to better align with real-world scenarios, we still opted for pretraining to let the model acquire some task-specific knowledge.
Additionally, our experiments are conducted in a self-attention setting. When we focus only on the last token, this is equivalent to considering the case with only one query token ($T=0$) in Section~\ref{sec:2.1}.
The experiments are completed on a single 24GB NVIDIA GeForce RTX 3090 and the experiments can be completed within one day.

For the linear regression task, we generate the task by $\vs = \mW\vt$ where every element of $\mW \in \mathbb{R}^{d_{s} \times d_{t}}$ is sampled from a normal distribution $\mW_{ij} \sim \mathcal{N}(0, 1)$ and $\vt$ is sampled from a Gaussian distribution $\vx \sim U(-1,1)^{d_{t}}$.
To facilitate more accurate estimation of attention matrices using random features and considering the limited learning capacity of a single attention layer, we only set a small value for $d_{t} = 11$ and $d_{s} = 1$.
Then, at each step, we use generated $ \{\vx_{i} = [\vt_{i};s_{i}]\}^{N+1}_{i=1}$ to form the input matrix $\mX$ while the label part of the query token is masked to be zero, that is, $\vx_{N+1} = [\vt_{i}; 0]$ where we consider only one query token and we denote $\vx'_{T+1} = \vx_{N+1}$ to maintain consistency of notation in Section~\ref{sec:2.1}.
The softmax attention layer is expected to predict $\hat{\vs}_{N+1}$ to approximate the ground truth value $\vs_{N+1}$.
We use mean square error (MSE) as the loss function, that is, for each epoch,
\begin{equation*}
	\mathcal{L} = \frac{1}{N_{step}} \sum_{j=1}^{N_{step}} \|\hat{\vs}_{N+1}^{(j)} - \vs_{N+1}^{(j)} \|^{2},
\end{equation*} 
where  $\hat{\vs}_{N+1}^{(j)}$ and $\vs_{N+1}^{(j)}$ are the prediction and  ground truth value at $j$-th step and $N_{step}$ is the number of steps.
We set $N_{step} = 1024$ for $N+1 = 16$ which means the total number of tokens remains $16384$. 
We choose stochastic gradient descent (SGD) \citep{amari1993backpropagation} as the optimizer and we set the learning rate to 0.003 for normal and regularized models, while the remaining experiments to $0.005$.
We also attempt the multi-task scenario, where the input token at each step is generated from a different task.
However, we find it challenging for a single attention layer to effectively learn in this setting, resulting in disordered predictions.
Therefore, our experiments are currently limited to single-task settings, and the multi-task scenario is worth further investigation in the future.

\begin{figure*}[t]
	\centering
	\begin{subfigure}[t]{0.18\linewidth}
		\centering
		\includegraphics[scale=.27]{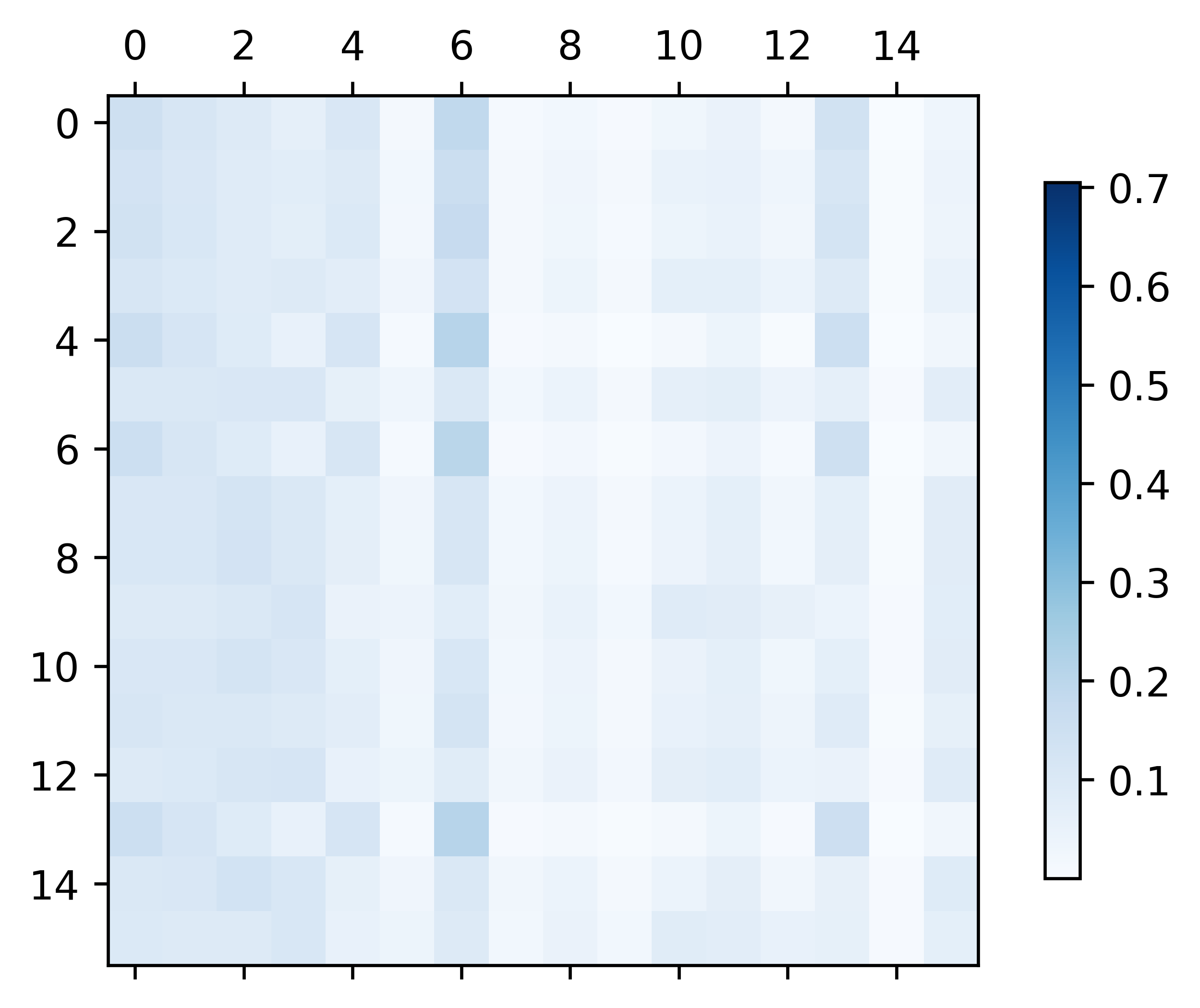} 
		\caption{$d_{r} = 3$}
	\end{subfigure}
	\hspace{0.1cm}
	\begin{subfigure}[t]{0.18\linewidth}
		\centering
		\includegraphics[scale=.27]{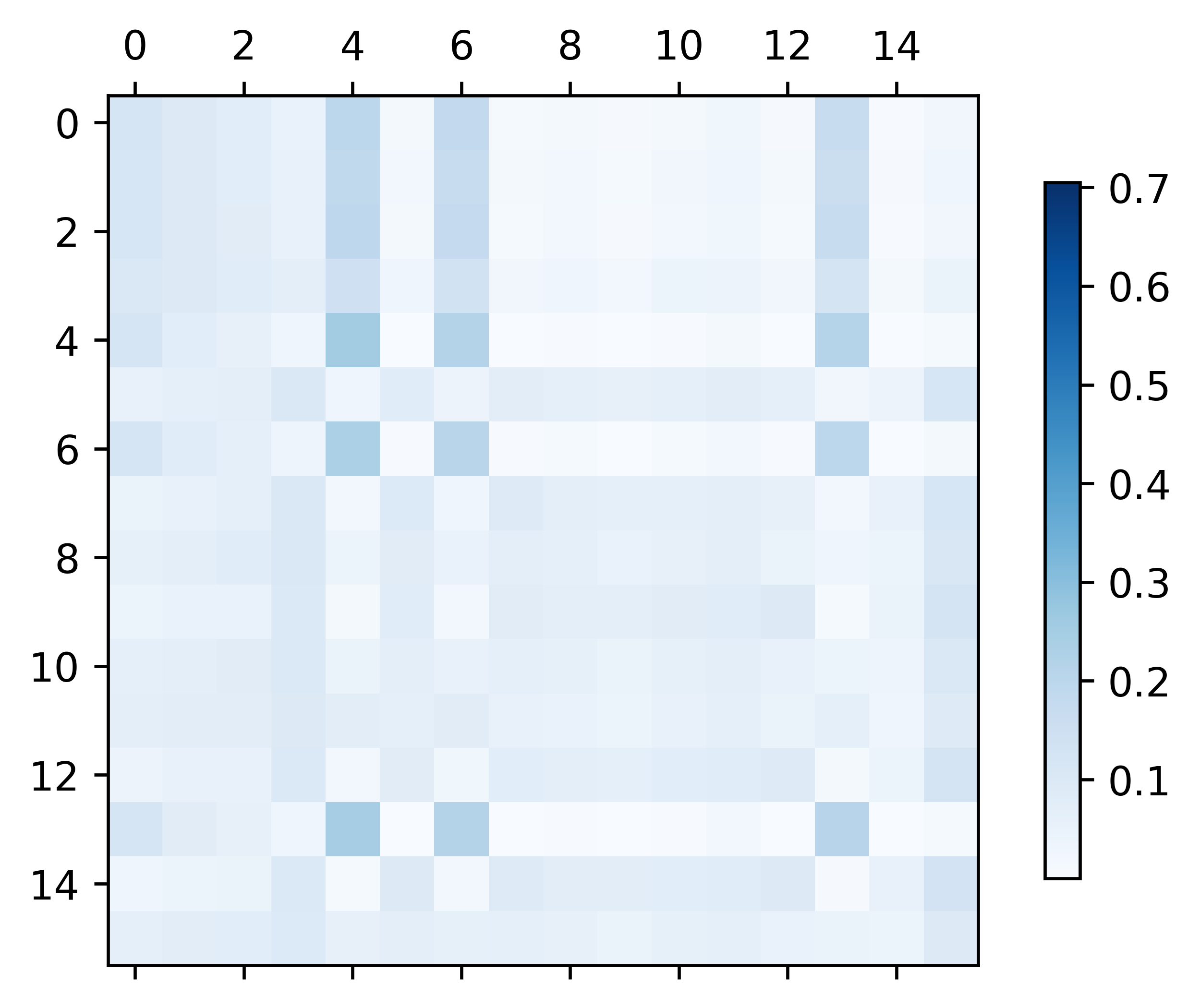} 
		\caption{$d_{r} = 12$}
	\end{subfigure}
	\hspace{0.1cm}
	\begin{subfigure}[t]{0.18\linewidth}
		\centering
		\includegraphics[scale=.27]{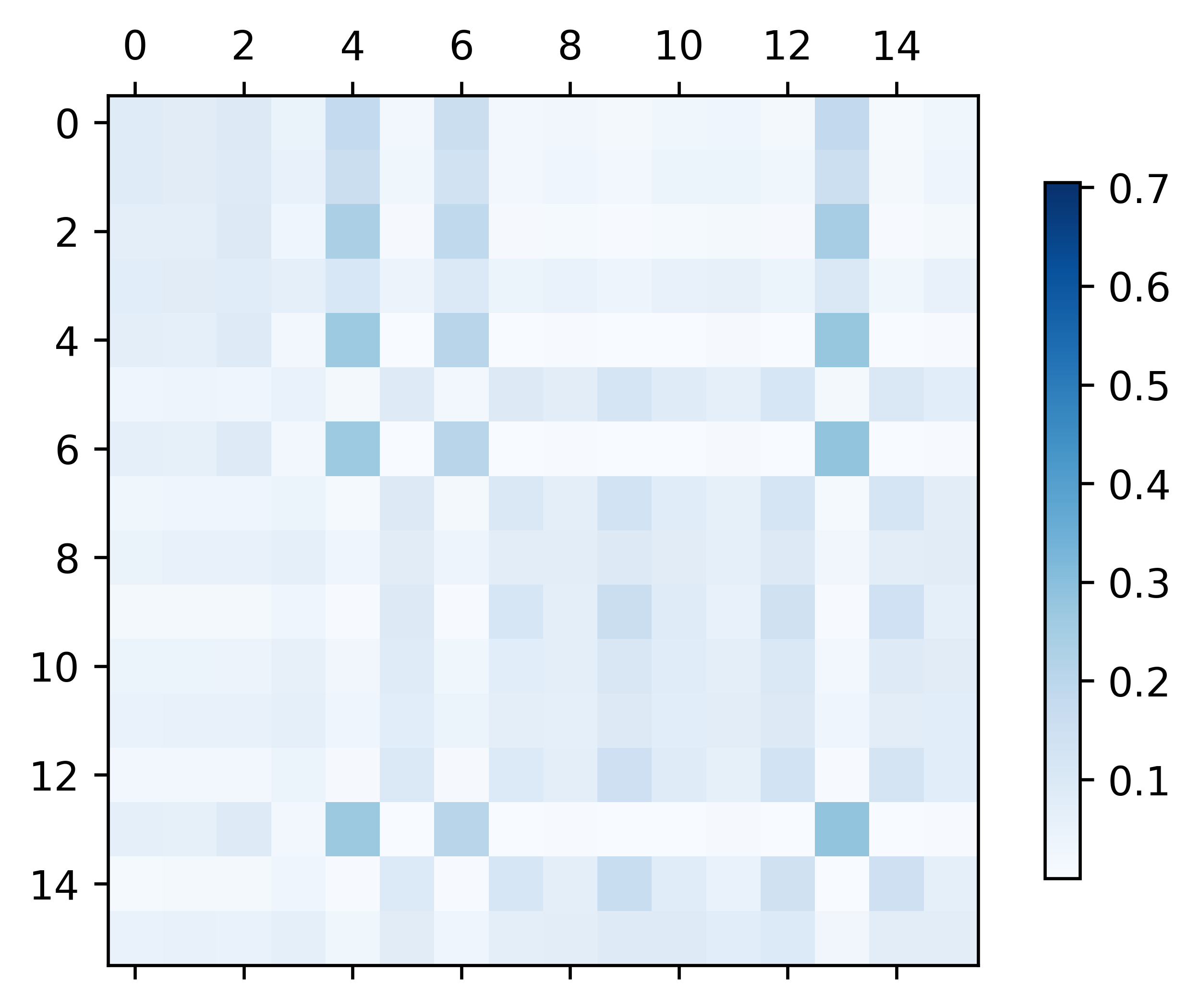} 
		\caption{$d_{r} = 120$}
	\end{subfigure}
	\hspace{0.1cm}
	\begin{subfigure}[t]{0.18\linewidth}
		\centering
		\includegraphics[scale=.27]{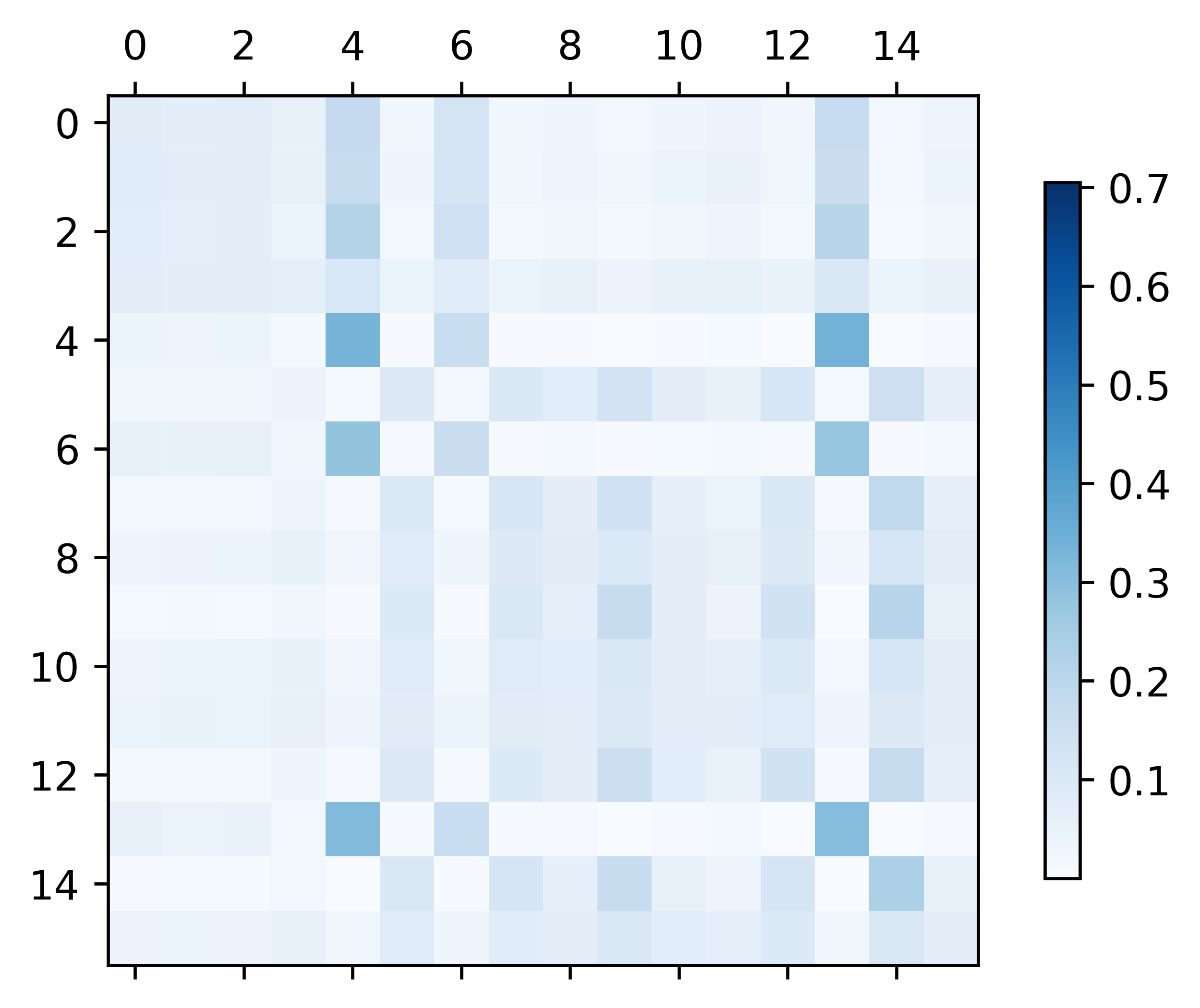} 
		\caption{$d_{r} = 12000$}
	\end{subfigure}
	\hspace{0.1cm}
	\begin{subfigure}[t]{0.18\linewidth}
		\centering
		\includegraphics[scale=.27]{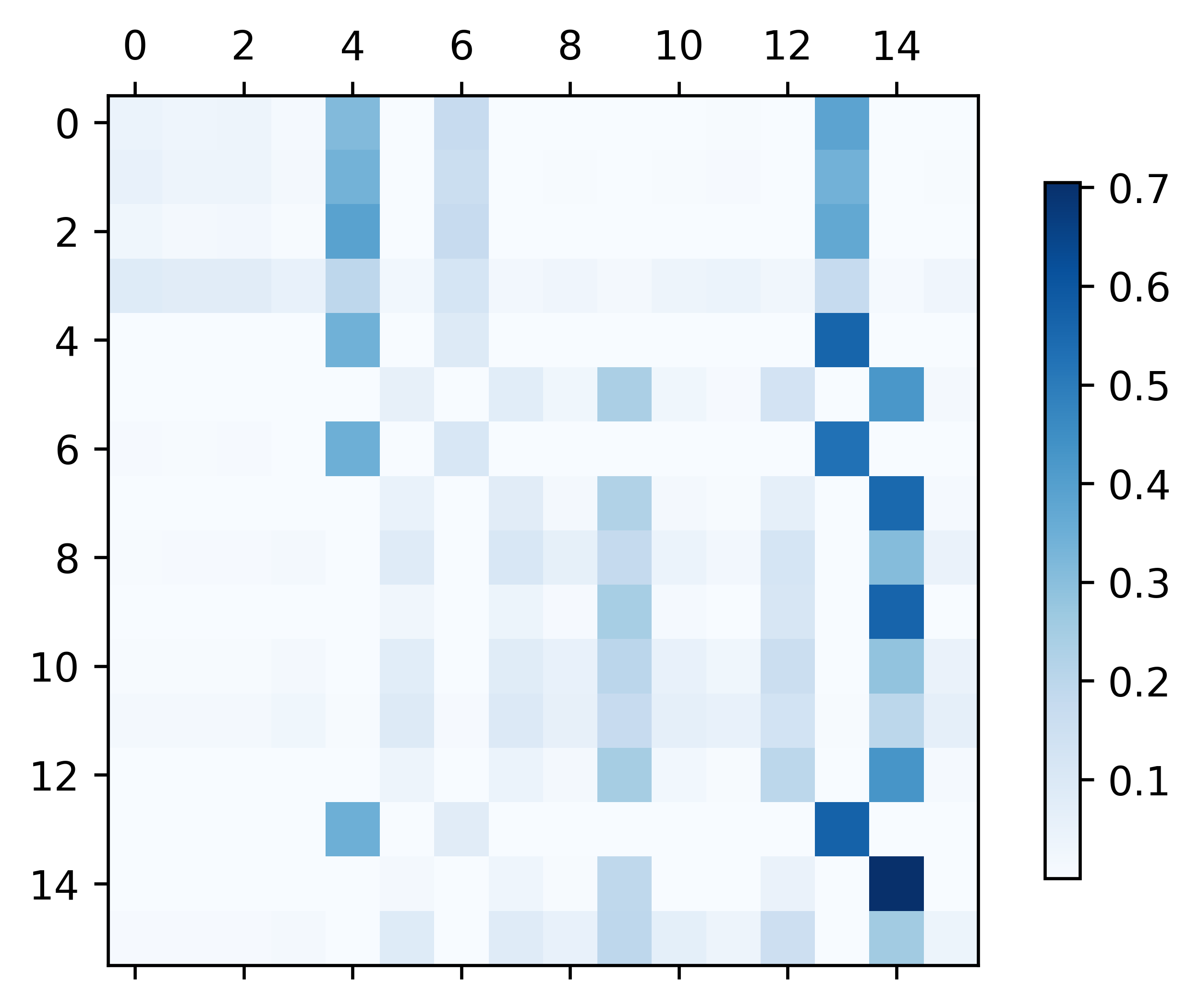} 
		\caption{exact attention}
	\end{subfigure}
	\caption{The estimation of the attention matrix by positive random features when varying $d_{r}$}
	\label{fig:attention-error}
\end{figure*}

\begin{figure*}[t]
	\centering
	\begin{subfigure}[t]{0.18\linewidth}
		\centering
		\includegraphics[scale=.32]{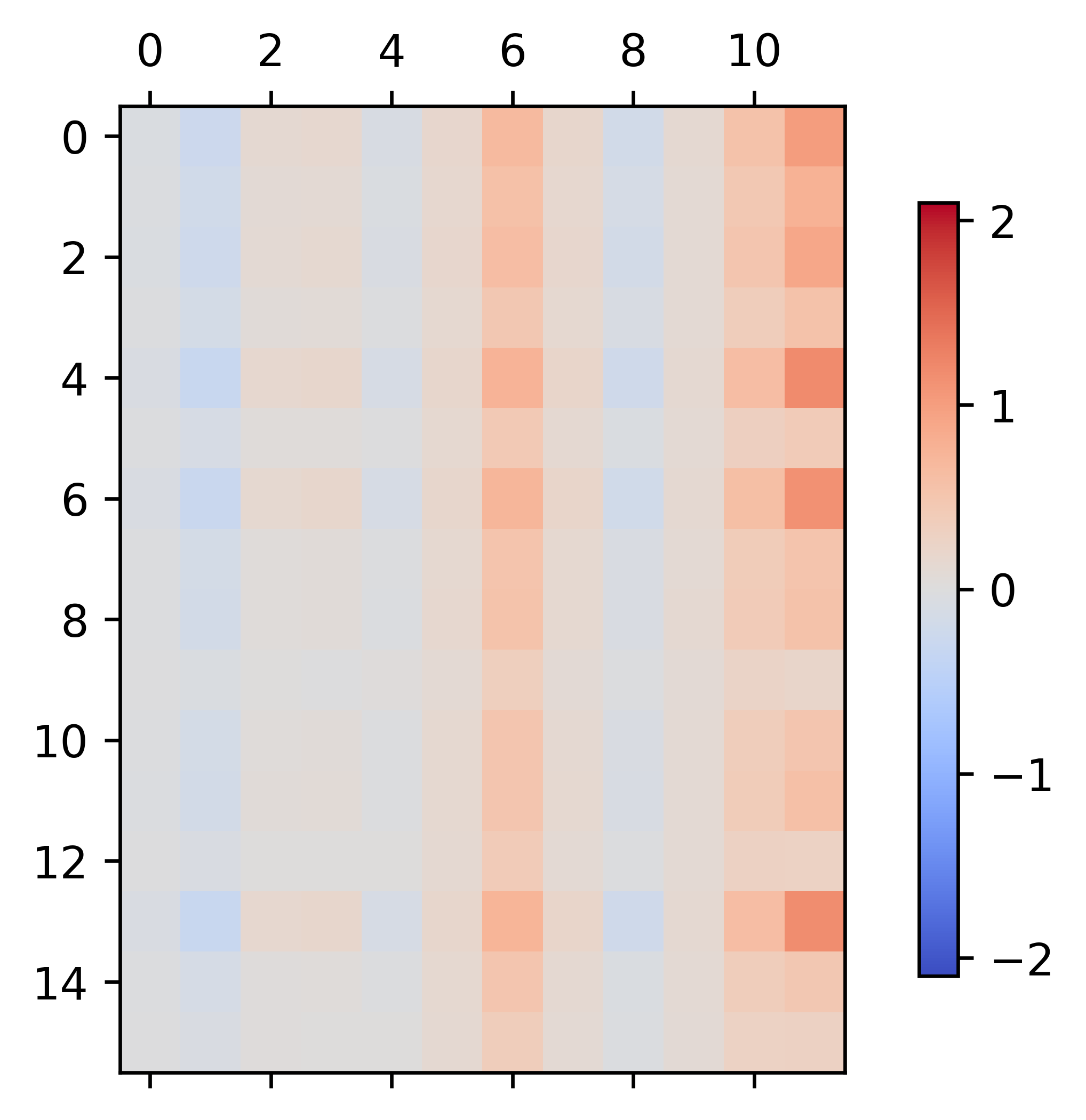} 
		\caption{$d_{r} = 3$}
	\end{subfigure}
	\hspace{0.1cm}
	\begin{subfigure}[t]{0.18\linewidth}
		\centering
		\includegraphics[scale=.32]{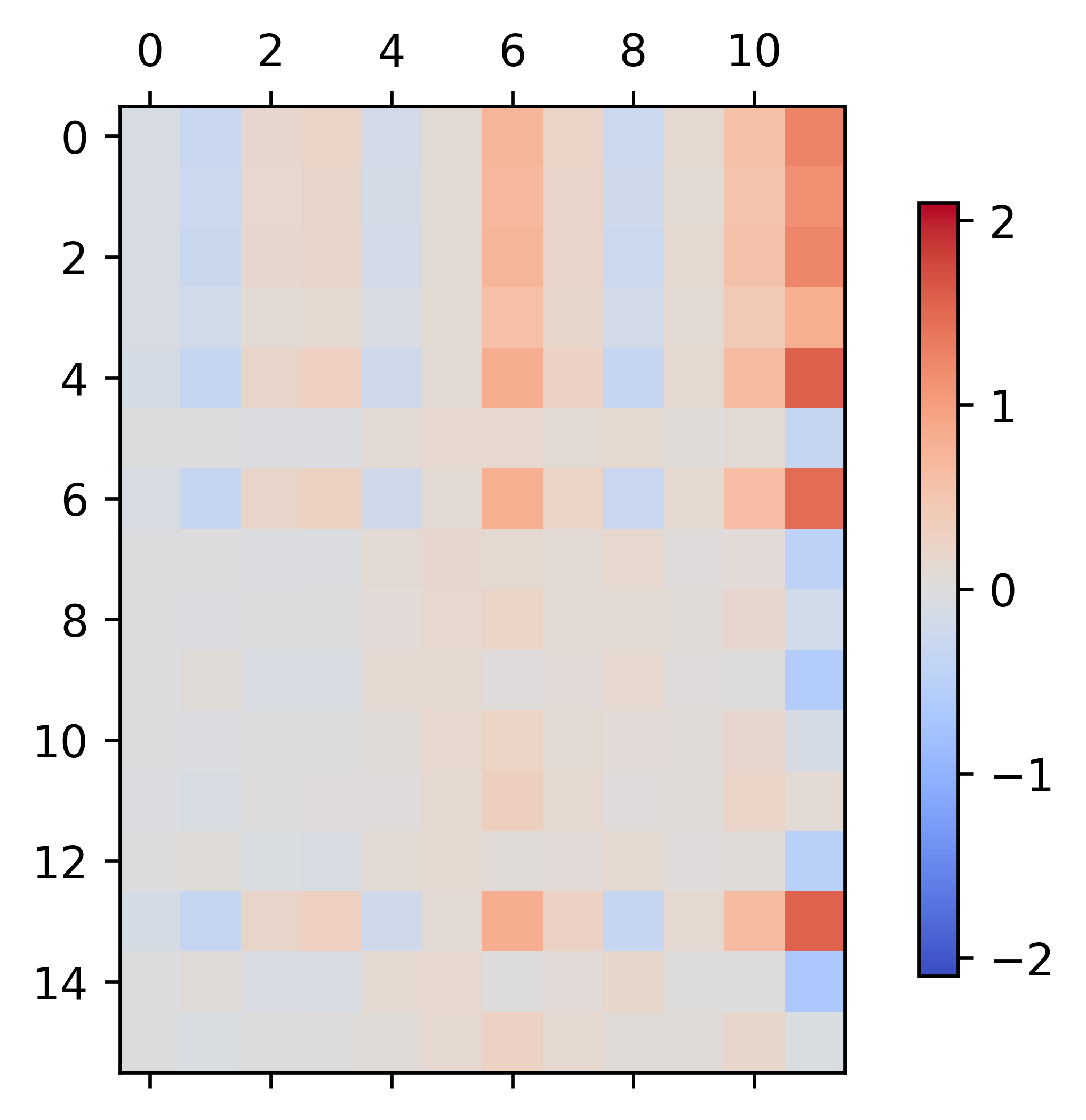} 
		\caption{$d_{r} = 12$}
	\end{subfigure}
	\hspace{0.1cm}
	\begin{subfigure}[t]{0.18\linewidth}
		\centering
		\includegraphics[scale=.32]{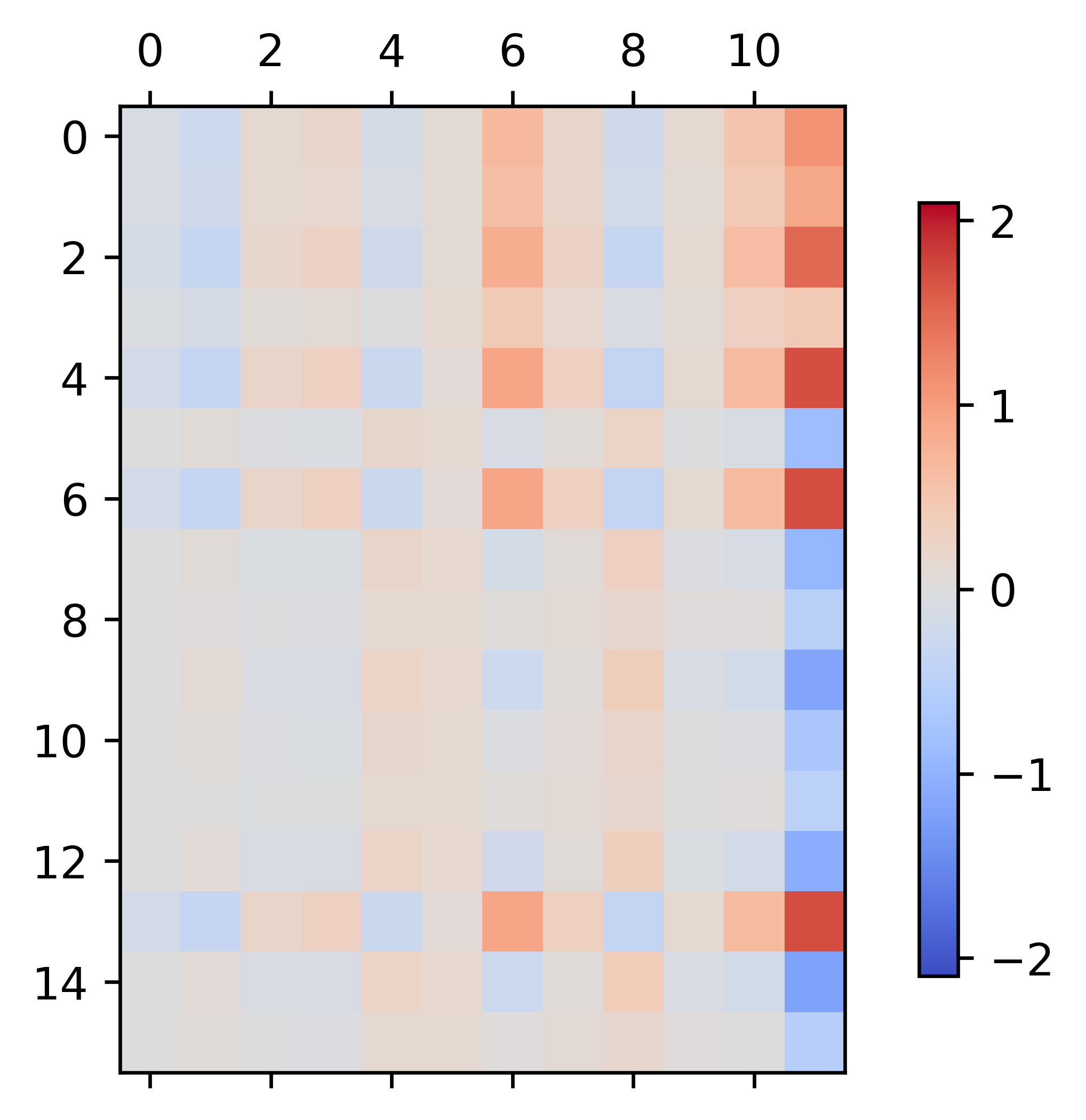} 
		\caption{$d_{r} = 120$}
	\end{subfigure}
	\hspace{0.1cm}
	\begin{subfigure}[t]{0.18\linewidth}
		\centering
		\includegraphics[scale=.32]{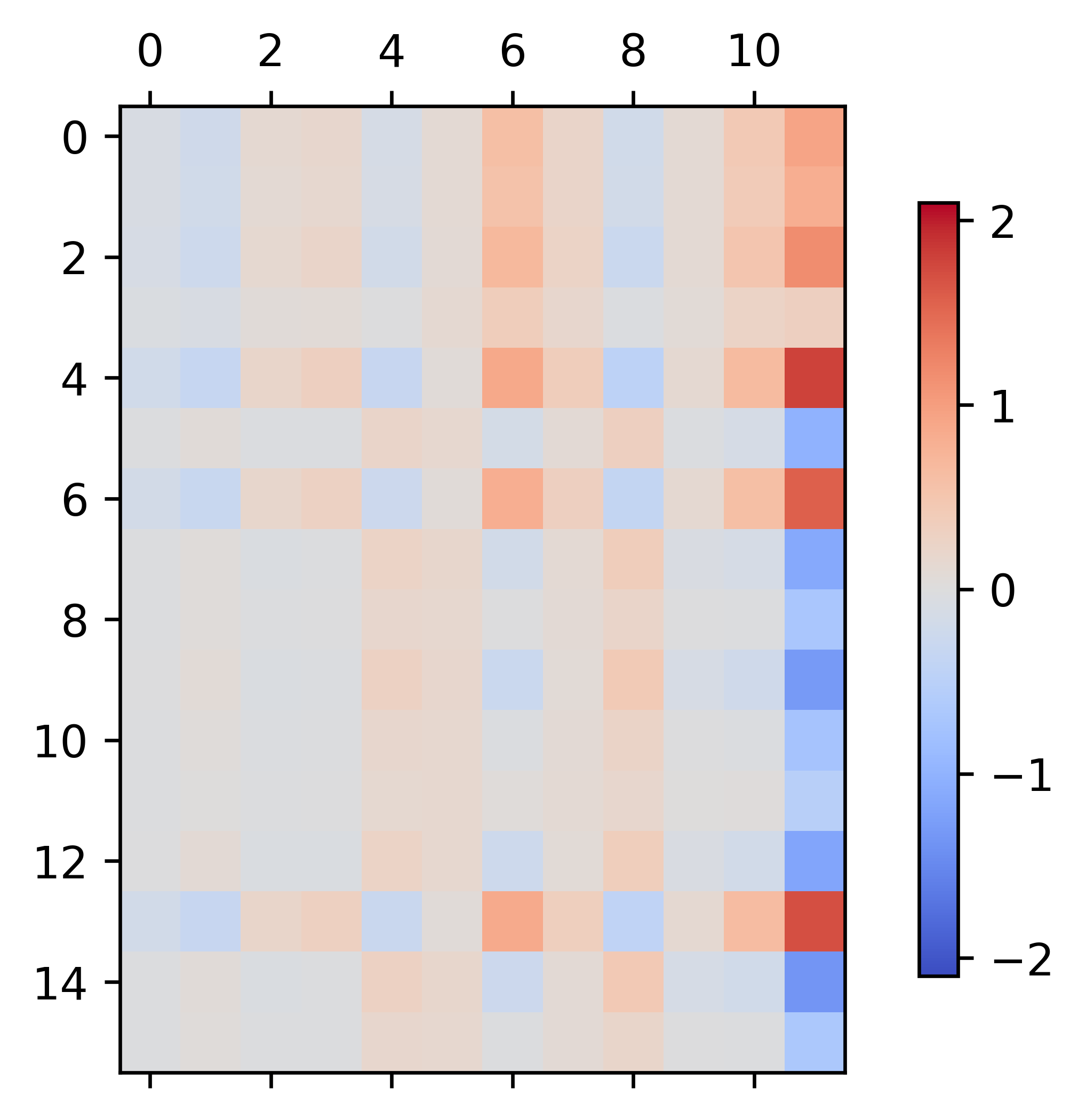} 
		\caption{$d_{r} = 12000$}
	\end{subfigure}
	\hspace{0.1cm}
	\begin{subfigure}[t]{0.18\linewidth}
		\centering
		\includegraphics[scale=.32]{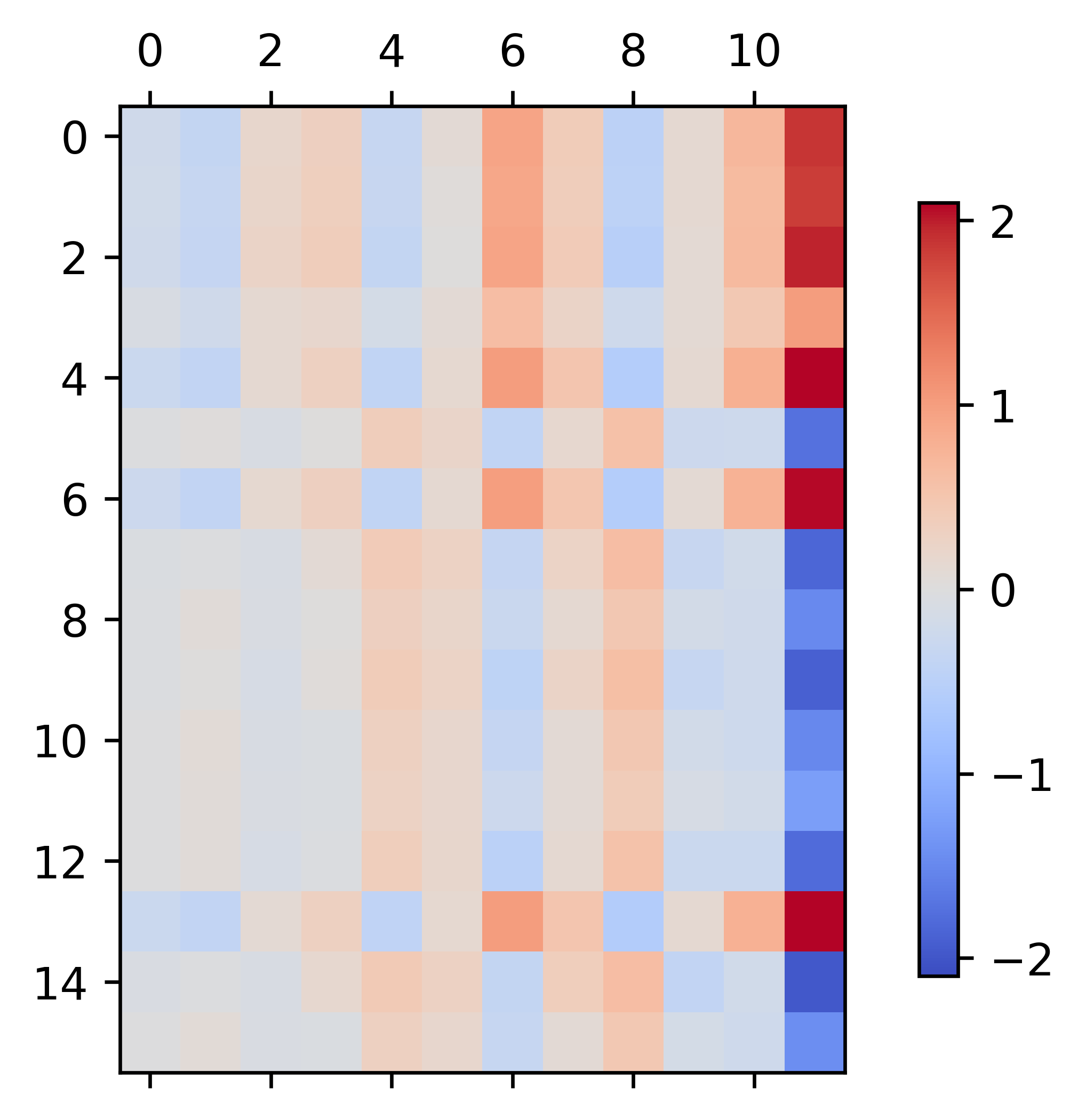} 
		\caption{exact output}
	\end{subfigure}
	\caption{The estimation of the output matrix by positive random features when varying $d_{r}$}
	\label{fig:output-error}
\end{figure*}

It is worth noting that we approximate the attention matrix calculation using random features as kernel mapping function instead of using the traditional softmax function in the self-attention layer \citep{performer}.
The mapping function $\phi: \sR^{d_{o}} \to \sR^{d_{r}}$ has the form of $\phi(\vx) = e^{\vw^T\vx - \| \vx \|^2 /2 } $ where $\vw \sim \mathcal{N}(0, I)$.
Orthogonal random features \citep{yu2016orthogonal, performer} or simplex random features \citep{reid2023simplex} can be chosen to achieve better performance theoretically.
We investigate the impact of changing the dimension of random features $d_{r}$ on the approximation of attention matrices and output, using Mean Squared Error (MSE) and Mean Absolute Error (MAE) as evaluation metrics, where we conduct $50$ repeated experiments and calculated the average values for each value of $d_{r}$, as shown in Figure~\ref{fig:analysis-16}.
It can be observed that as the dimension of random features increases, the approximation performance gradually improves, with both errors reaching a low level in the end.
We visualize the exact attention matrix and compare it with the estimated attention matrices obtained using different values of $d_{r}$, as shown in Figure~\ref{fig:attention-error}. 
Again, it can be seen that as $d_{r}$ increases, the approximation of the true attention matrix improves gradually and similar results can be observed for the analysis of output matrices in Figure~\ref{fig:output-error}.

To obtain a more accurate estimation of the attention matrix, we set the output dimension of the mapping function to be 100 times the input dimension, that is, $d_{r} = 100(d_{s} + d_{t}) = 1200$.
Furthermore, we visualize the exact attention matrix and the output with the approximation results, which are shown in the Figure~\ref{fig:16-attention and output}.
As we can see, although some larger values are not estimated accurately due to the limited dimension of the random features we select, the majority of the information is still estimated comprehensively well.
These findings indicate that our choice of using positive random features as mapping functions to estimate the true softmax attention and conduct experiments is relatively feasible.

\begin{figure*}[t]
	\centering
	\begin{subfigure}[t]{0.33\linewidth}
		\centering
		\includegraphics[scale=.28]{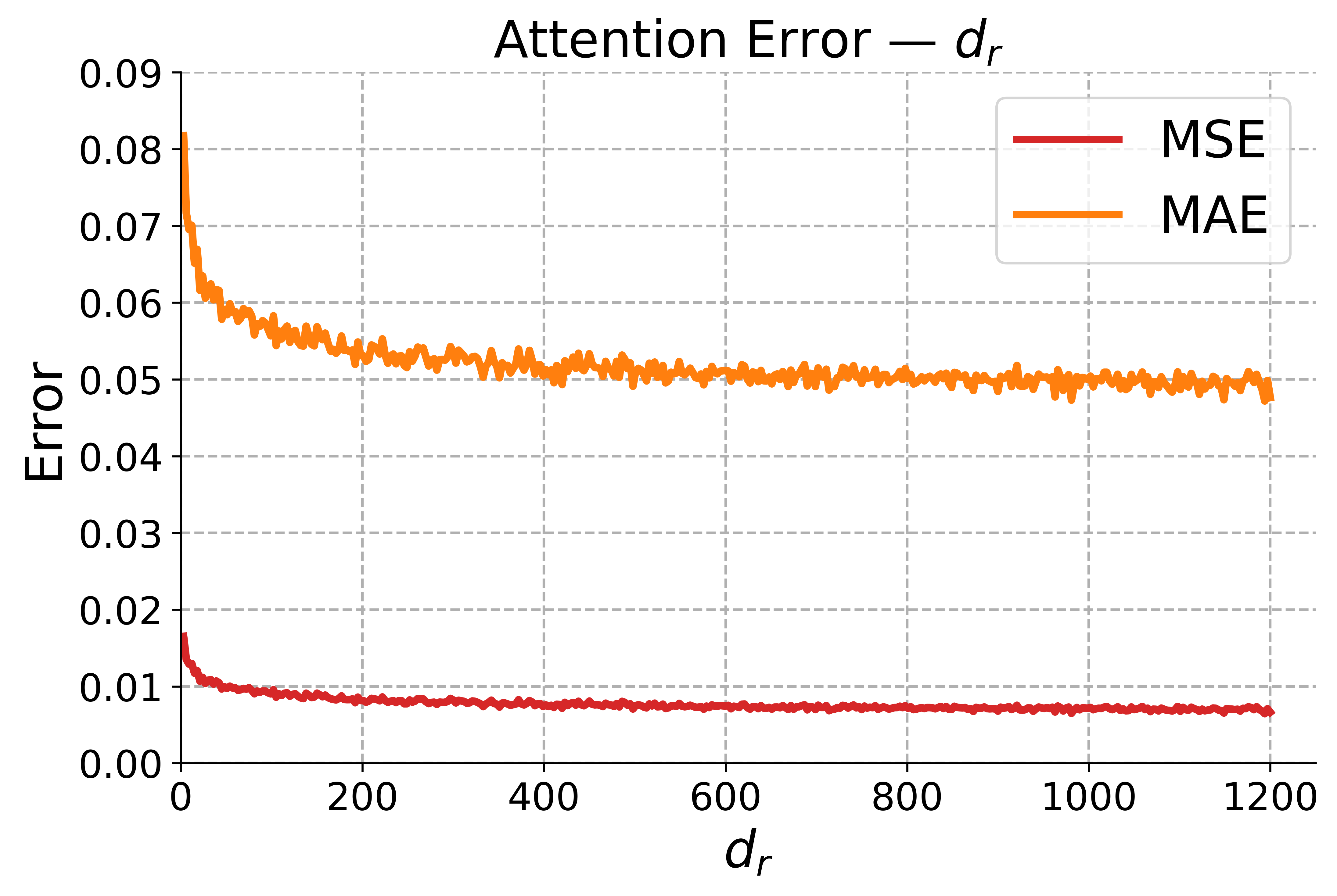}
		\caption{Estimation error of attention matrix vs. $d_{r}$}
		\label{fig:error-attention}
	\end{subfigure}
	\hspace{1cm}
	\begin{subfigure}[t]{0.33\linewidth}
		\centering
		\includegraphics[scale=.28]{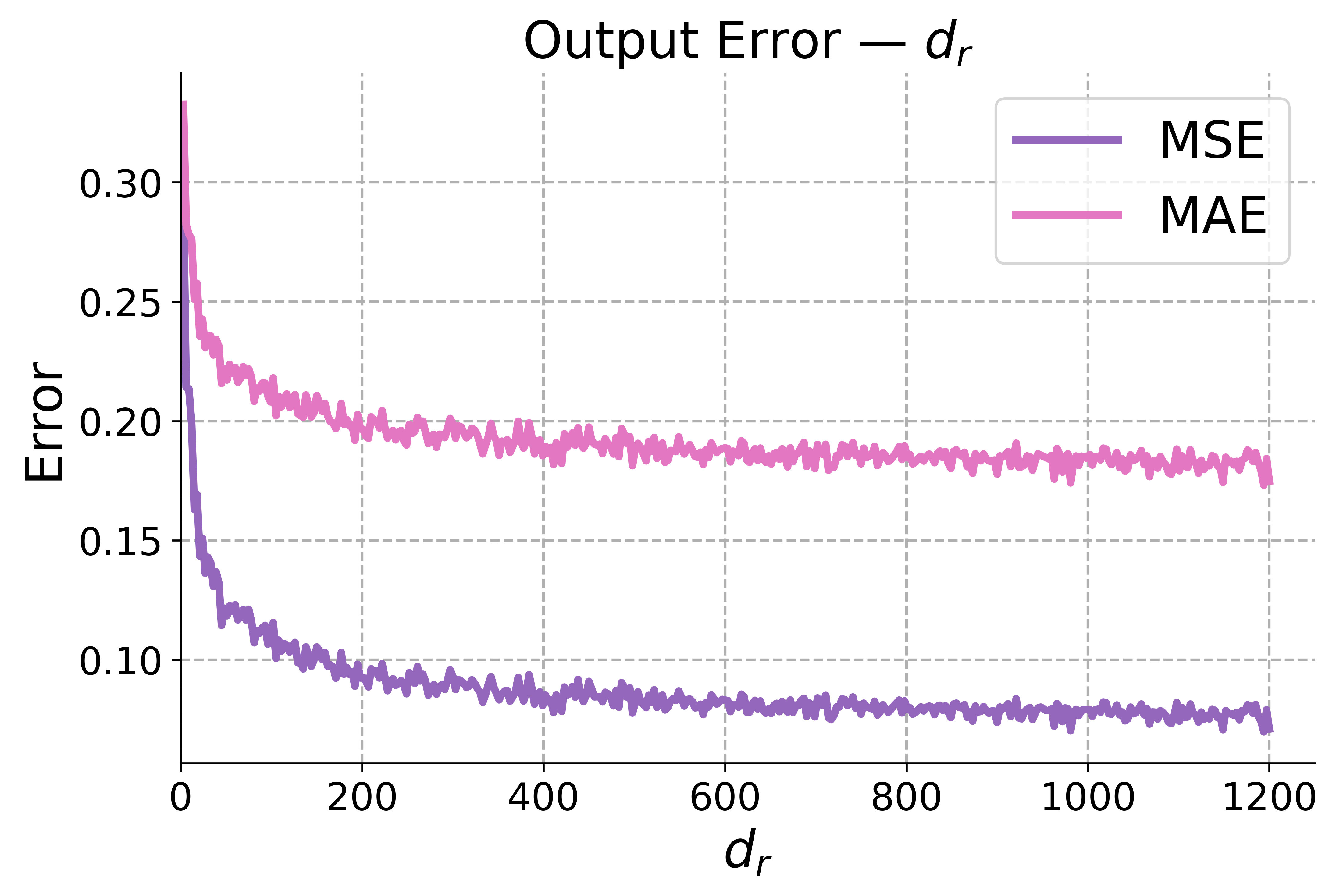}
		\caption{Estimation error of attention matrix vs. $d_{r}$}
		\label{fig:error-ouput}
	\end{subfigure}
	\caption{The error of positive random features in estimating the attention and output matrices as $d_{r}$ varies.}
	\label{fig:analysis-16}
\end{figure*}

\begin{figure*}[t]
	\centering
	\begin{subfigure}[t]{0.4\linewidth}
		\centering
		\includegraphics[height = 1.20in]{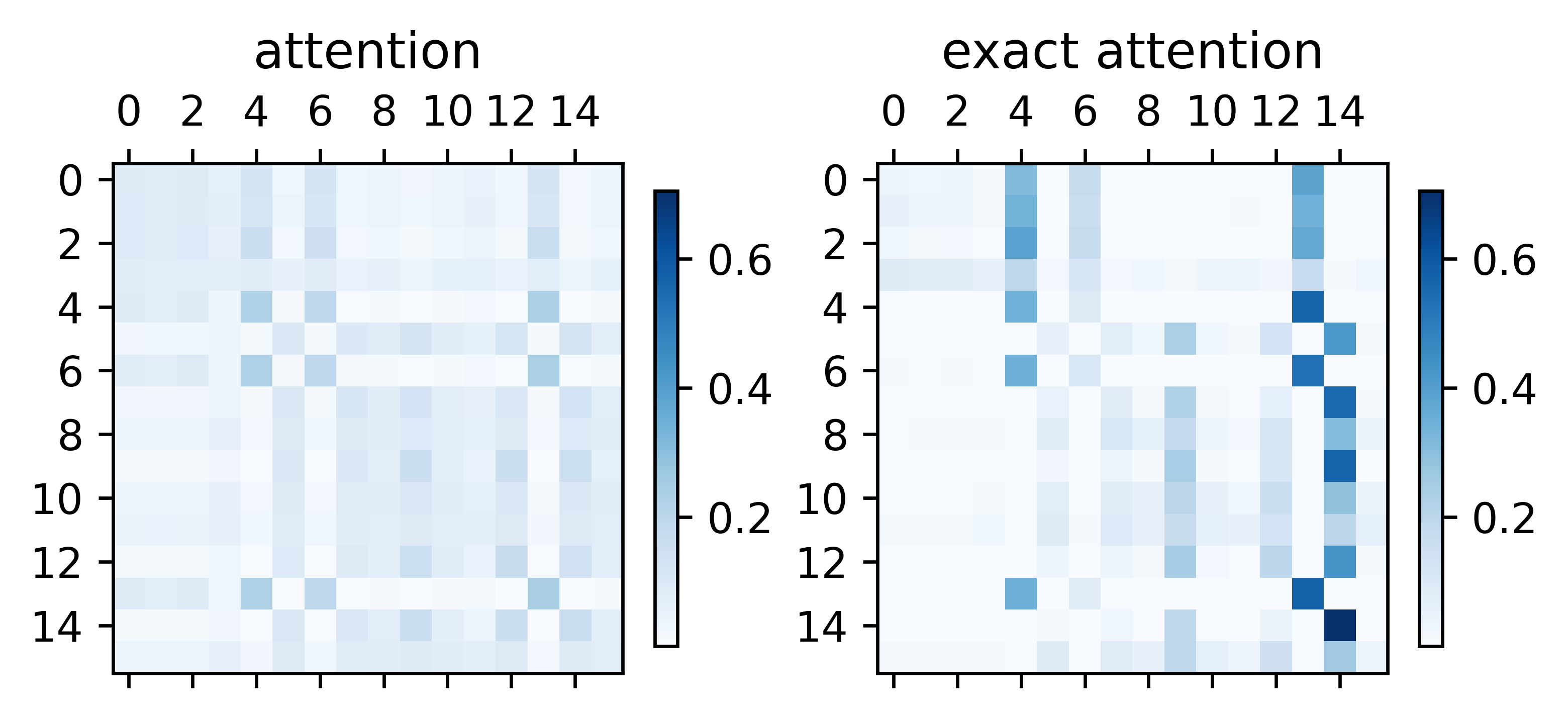} 
		\caption{the exact attention matrix and its approximation}
	\end{subfigure}
	\hspace{0.1cm}
	\begin{subfigure}[t]{0.4\linewidth}
		\centering
		\includegraphics[height = 1.20in]{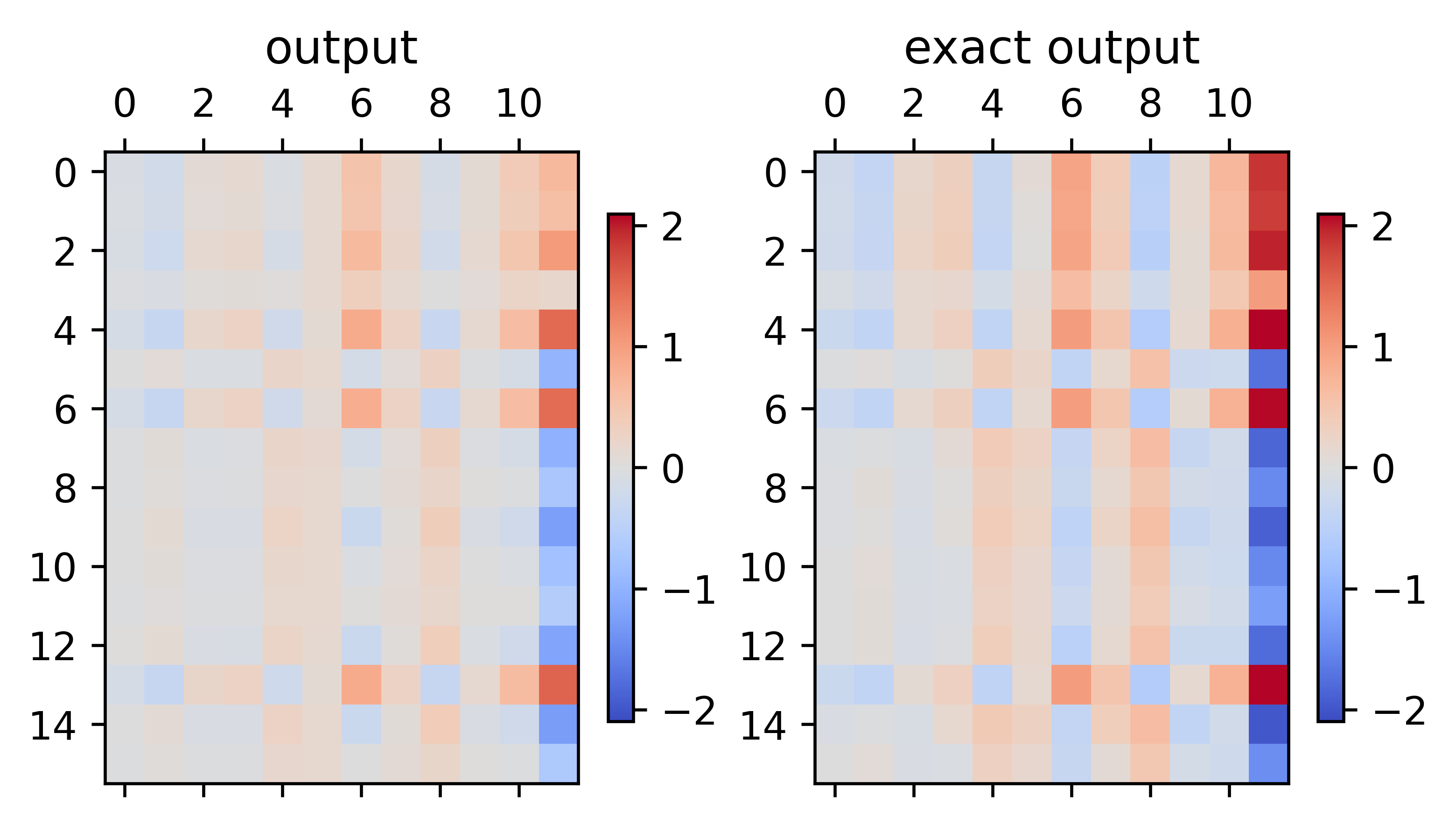} 
		\caption{the exact output and its approximation}
	\end{subfigure}
	\caption{The comparison between the exact attention matrix, output and their estimated approximations using random features under setting $N=16$ and $d_{r} = 1200$.}
	\label{fig:16-attention and output}
\end{figure*}

After the weights $\mW_{Q}$, $\mW_{K}$, $\mW_{V}$ of the attention layer have been determined, we generate test $N+1$ tokens in the same way where the $s$ part of the $(N+1)$-th token is also set to be zero and finally input the test tokens into the attention layer to obtain the corresponding predicted $\hat{\vh}_{N+1} = [\hat{\vt}_{N+1}^{(1)}, \hat{s}_{N+1}^{(1)}]$.
Here, we also use $\vh'_{T+1} = \hat{\vh}_{N+1}$ to maintain the  notation consistency in Section~\ref{sec:2.1}. 

On the other hand, we construct a dual model $f(\vx) = \mW\phi(\vx)$ where $\phi(\cdot)$ is strictly equivalent to the kernel mapping function used in the attention layer.
We transform the first $N$ tokens as the training set according to Theorem \ref{thm1} and train the dual model using the loss formed by Eq~(\ref{lossL}).
In fact, according to Theorem \ref{thm1}, after we perform one step of gradient descent on this training set, the test prediction $\hat{\vy}_{test} = [\hat{\vt}_{N+1}^{(2)},  \hat{s}_{N+1}^{(2)}]$ of the dual model will strictly equal $\vh'_{T+1}$.

We conduct experiments under the same setup using different random seeds to explore the effects of various model modifications. 
The data for all three experiments are generated under identical conditions. 
One set of experimental results is presented in the main text, while the results of the other two sets are shown in the Figure~\ref{fig:linear_more}. 
Similar to the discussion in the main text, we can achieve better performance than the normal model with appropriate parameter settings.

\begin{figure*}[h]
	\centering
	\begin{subfigure}[t]{0.22\linewidth}
		\centering
		\includegraphics[scale=.22]{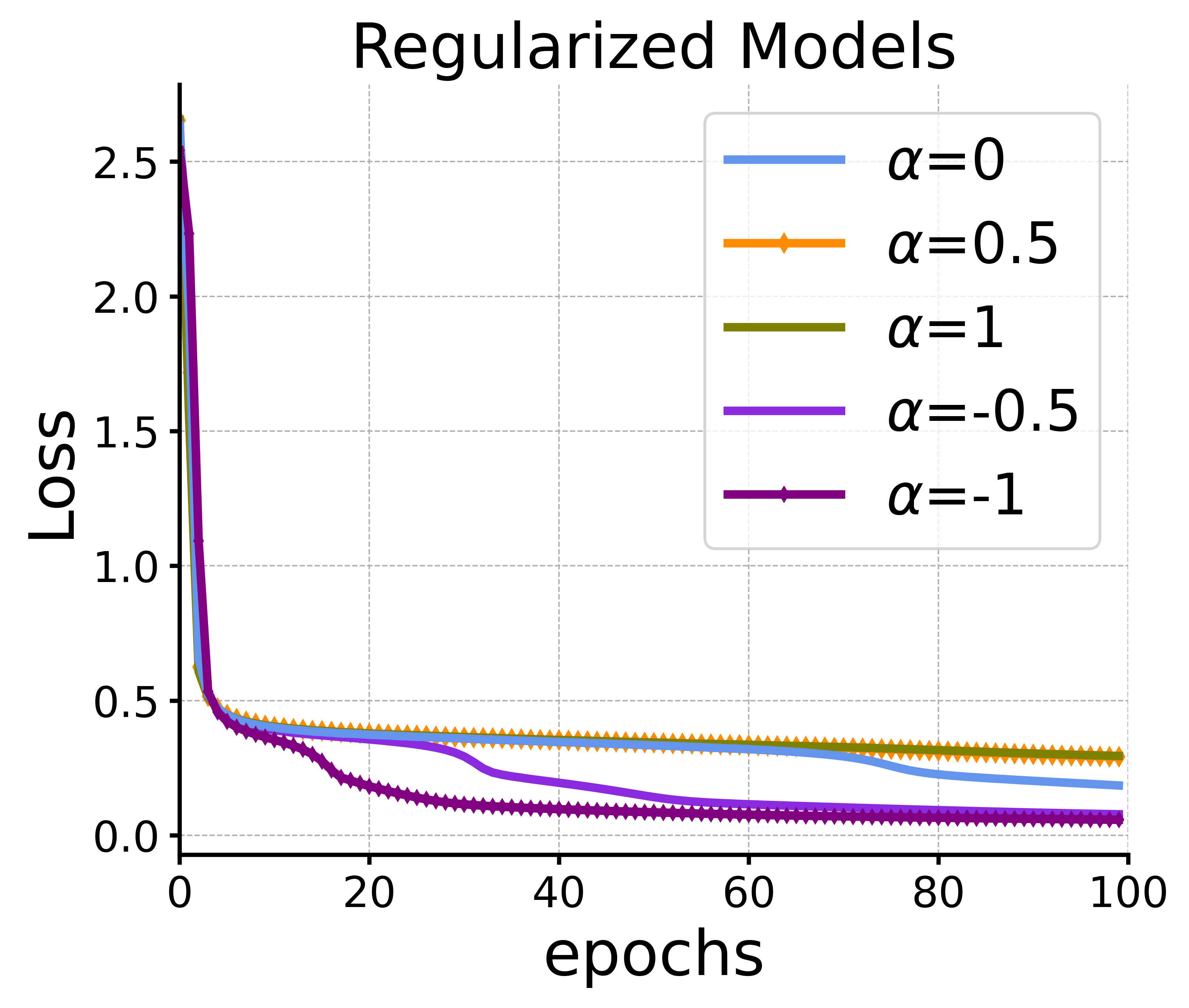}
	\end{subfigure}
	\hspace{0.2cm}
	\begin{subfigure}[t]{0.22\linewidth}
		\centering
		\includegraphics[scale=.22]{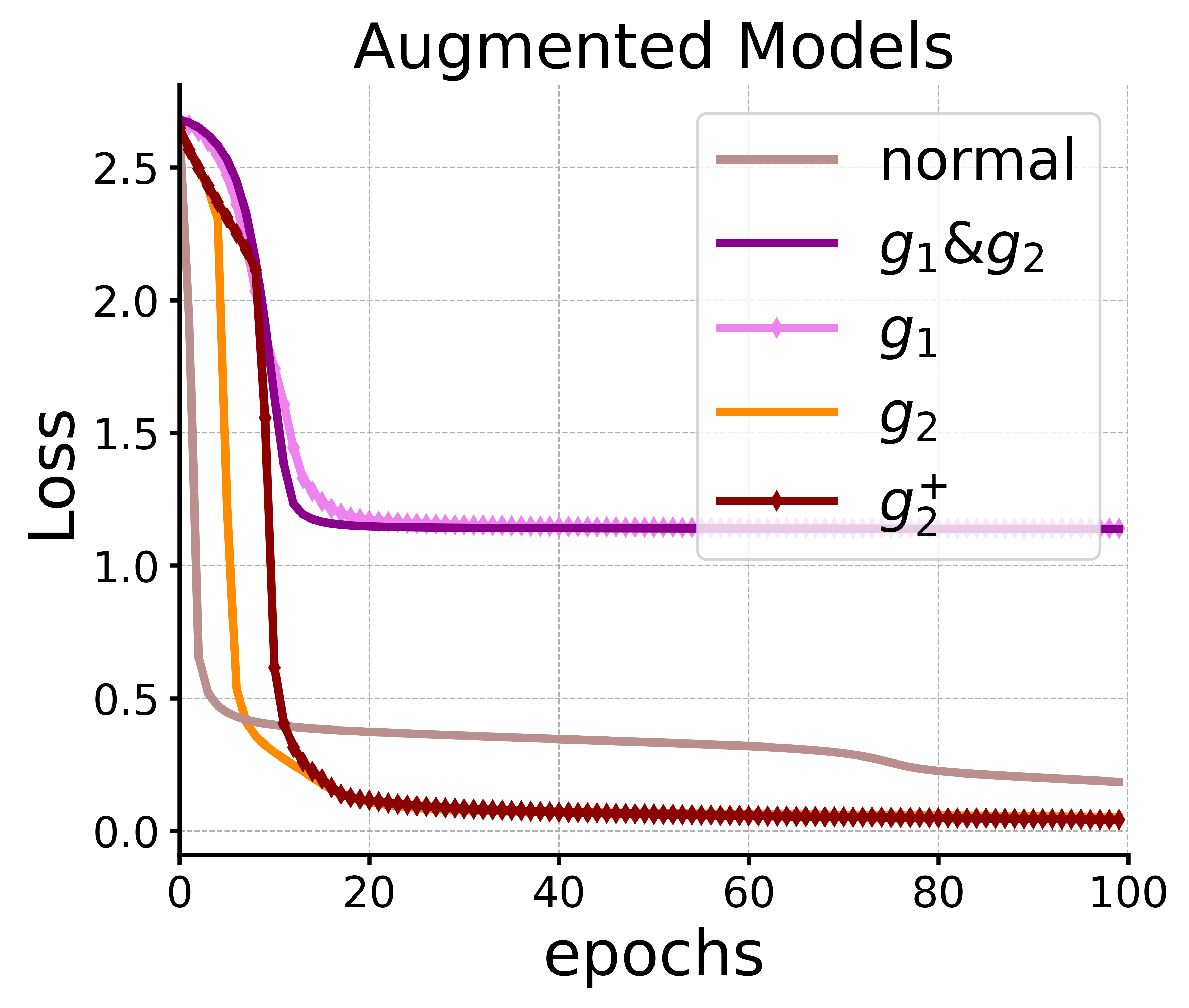}
	\end{subfigure}
	\hspace{0.2cm}
	\begin{subfigure}[t]{0.22\linewidth}
		\centering
		\includegraphics[scale=.22]{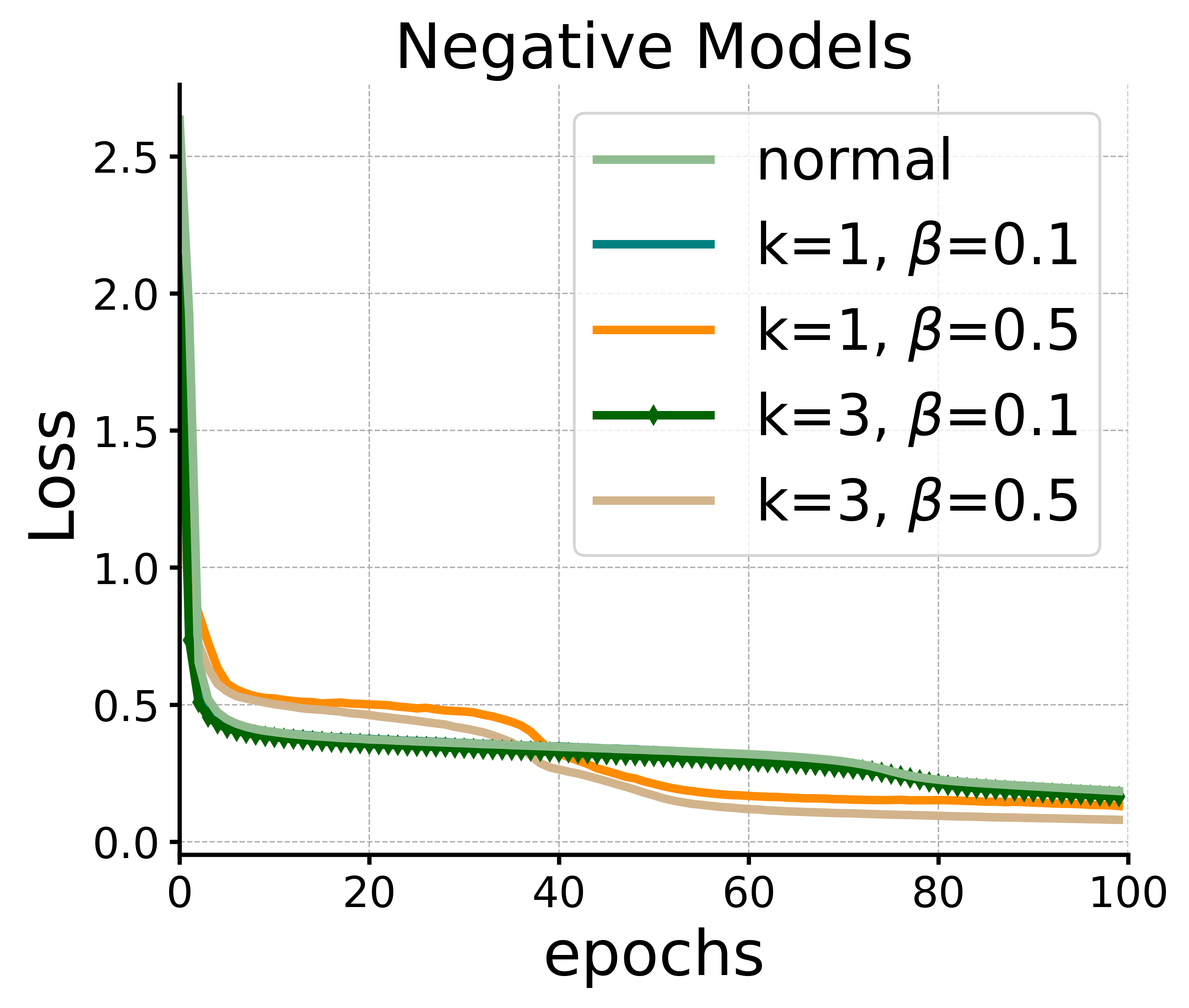}
	\end{subfigure}\\
	\vspace{0.10cm}
	\begin{subfigure}[t]{0.22\linewidth}
		\centering
		\includegraphics[scale=.22]{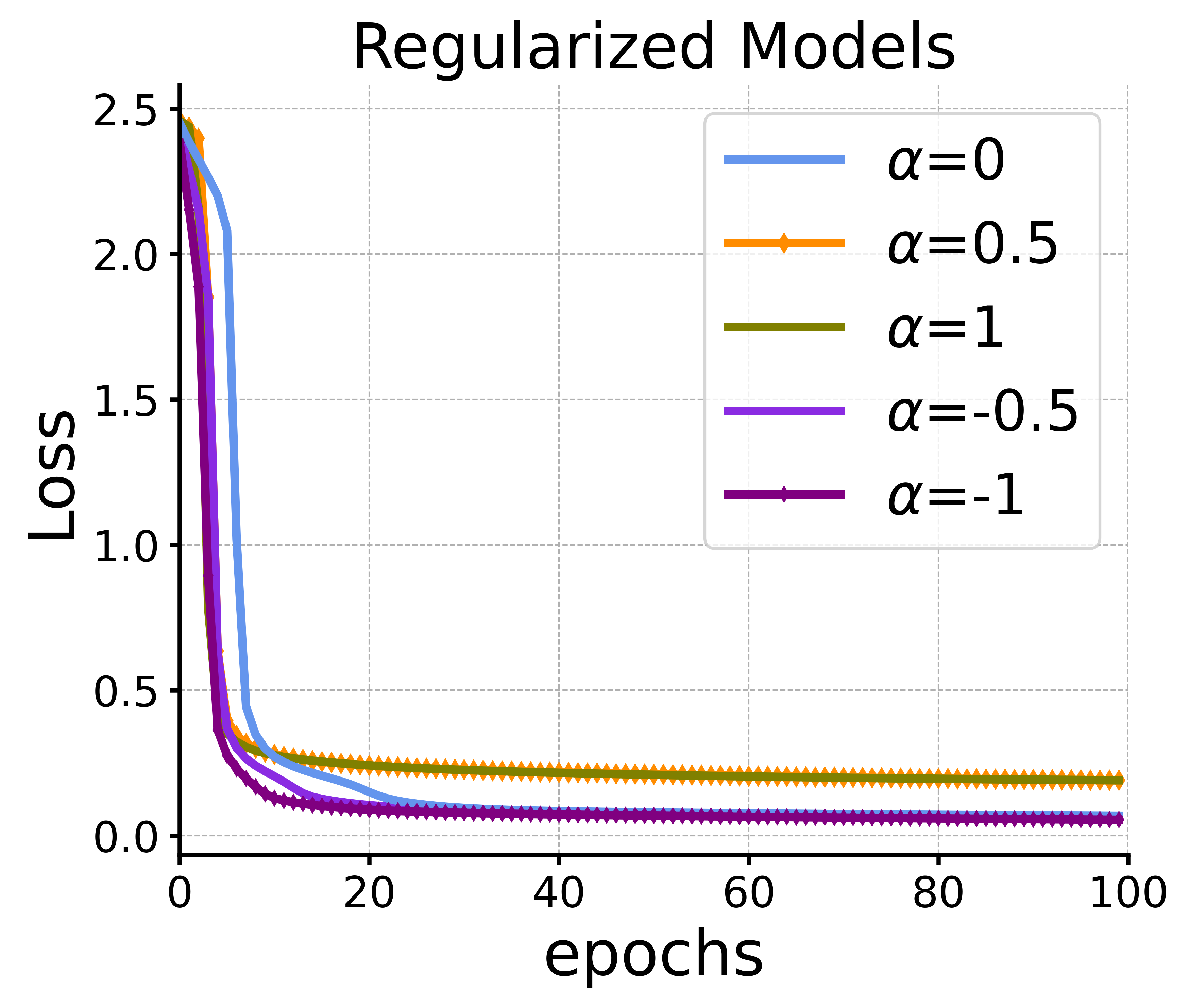}
	\end{subfigure}
	\hspace{0.2cm}
	\begin{subfigure}[t]{0.22\linewidth}
		\centering
		\includegraphics[scale=.22]{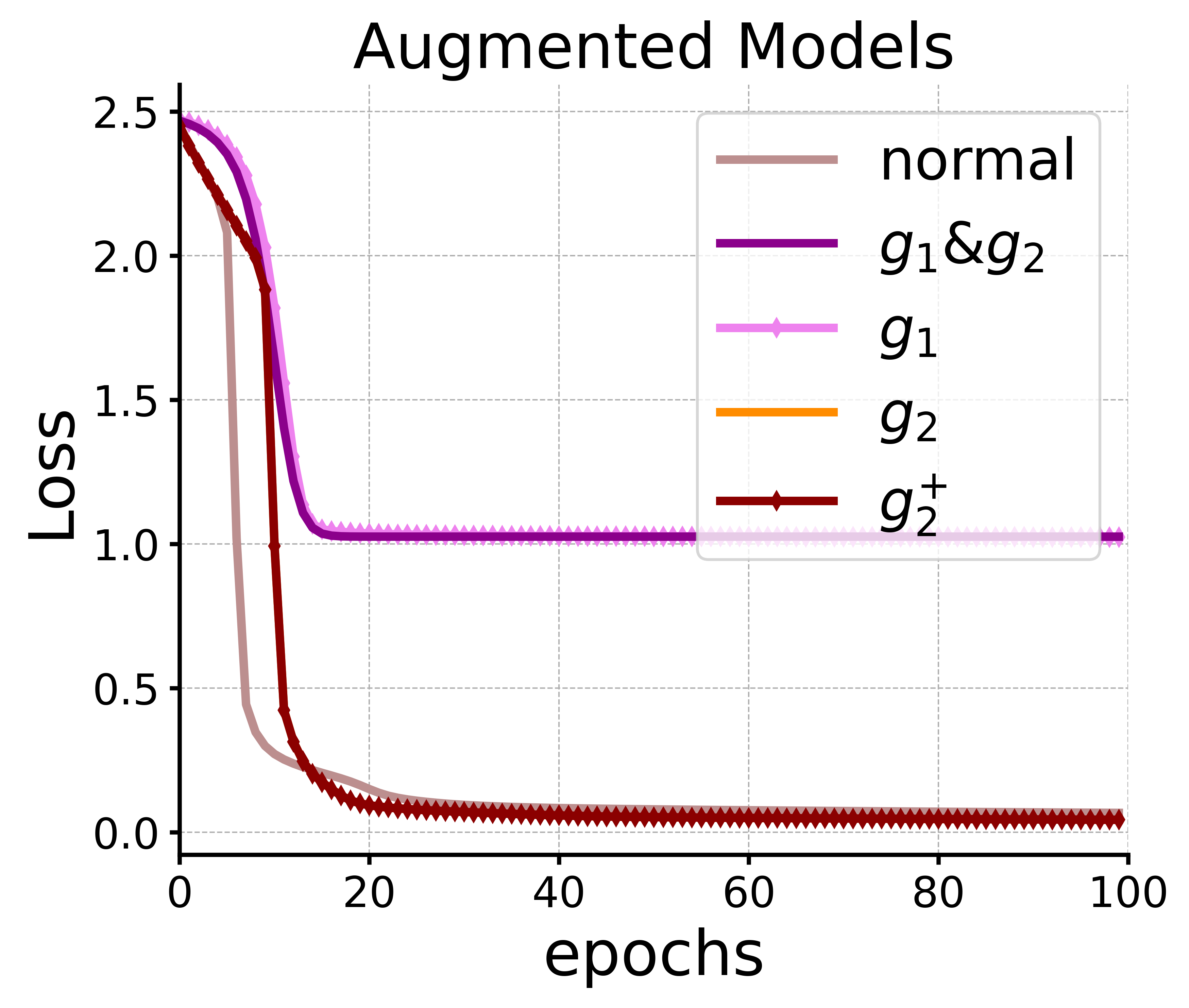}
	\end{subfigure}
	\hspace{0.2cm}
	\begin{subfigure}[t]{0.22\linewidth}
		\centering
		\includegraphics[scale=.22]{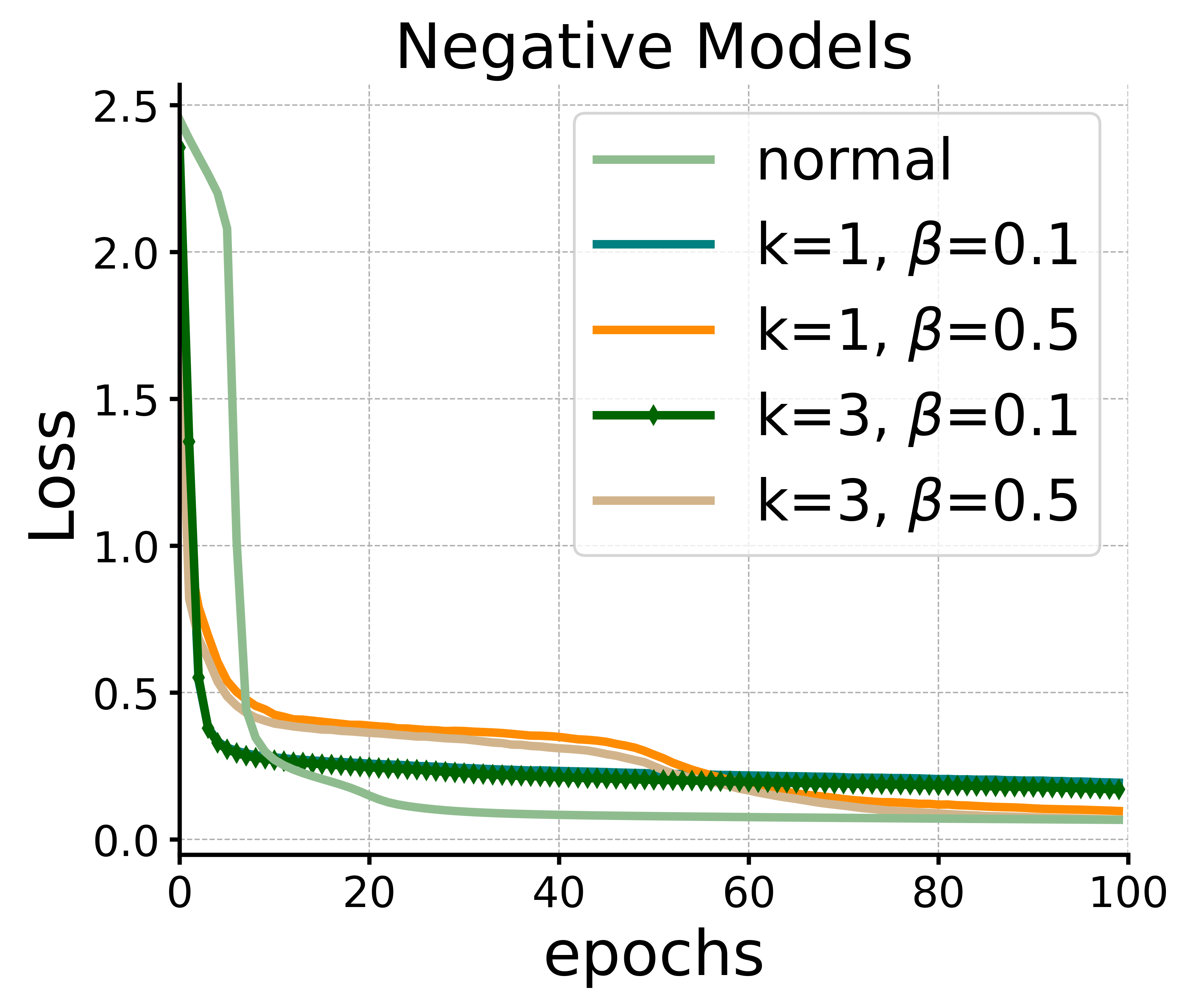}
	\end{subfigure}\\
	\vspace{-0cm}
	\caption{The performance for regularized models (Center Left), augmented models (Center Right) and negative models (Right) with different settings for different random seeds.}
	\label{fig:linear_more}
\end{figure*}

\subsection{More details of Experiments on Different Tasks}\label{app:ex-different}

In addition to conducting experiments on linear regression tasks, we also extended our experiments to involve trigonometric and exponential tasks. 

\subsubsection{More details of Experiments on Trigonometric Tasks}

\begin{figure*}[h]
	\centering
	\begin{subfigure}[t]{0.22\linewidth}
		\centering
		\includegraphics[scale=.22]{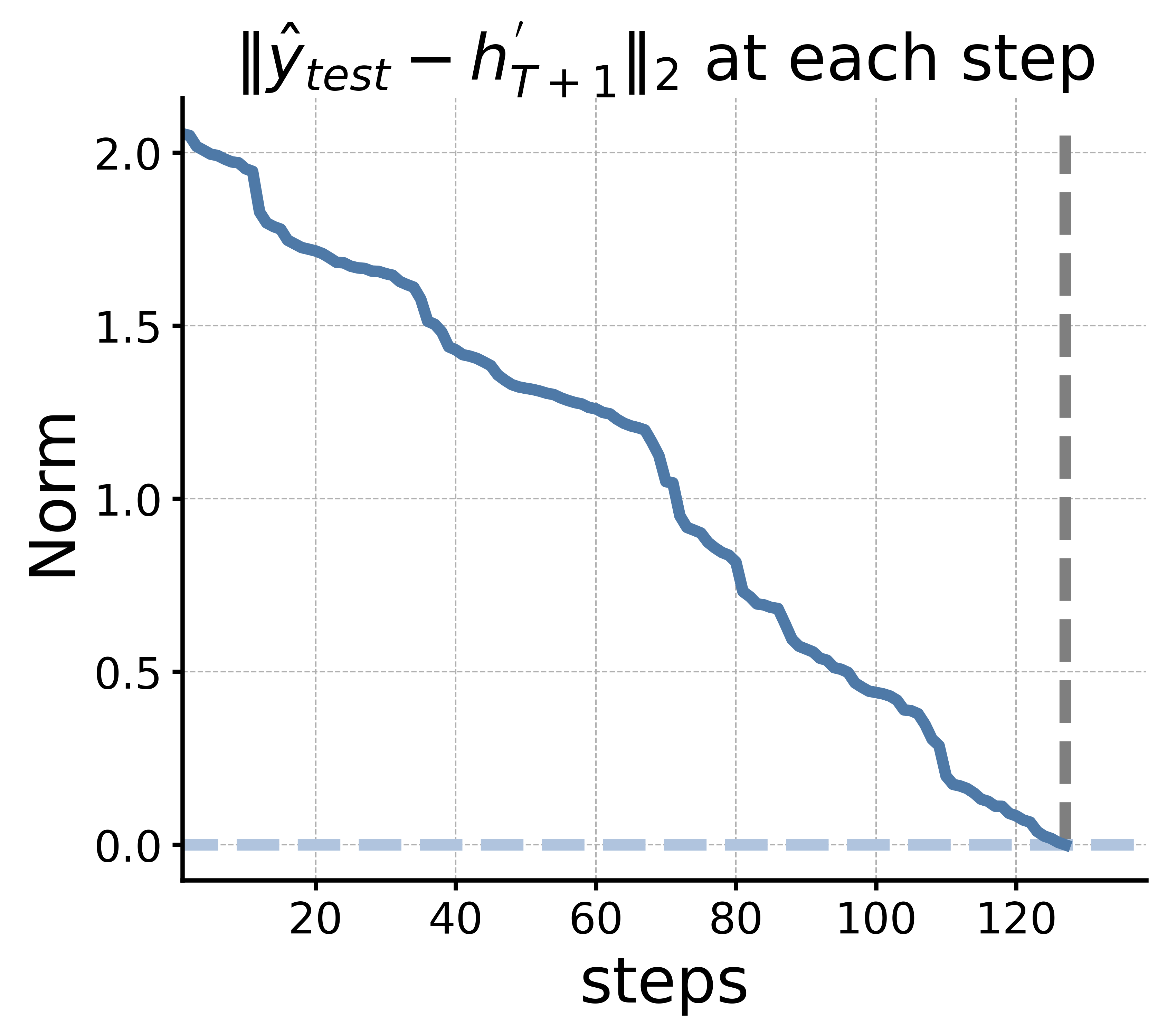}
	\end{subfigure}
	\hspace{-0cm}
	\begin{subfigure}[t]{0.22\linewidth}
		\centering
		\includegraphics[scale=.22]{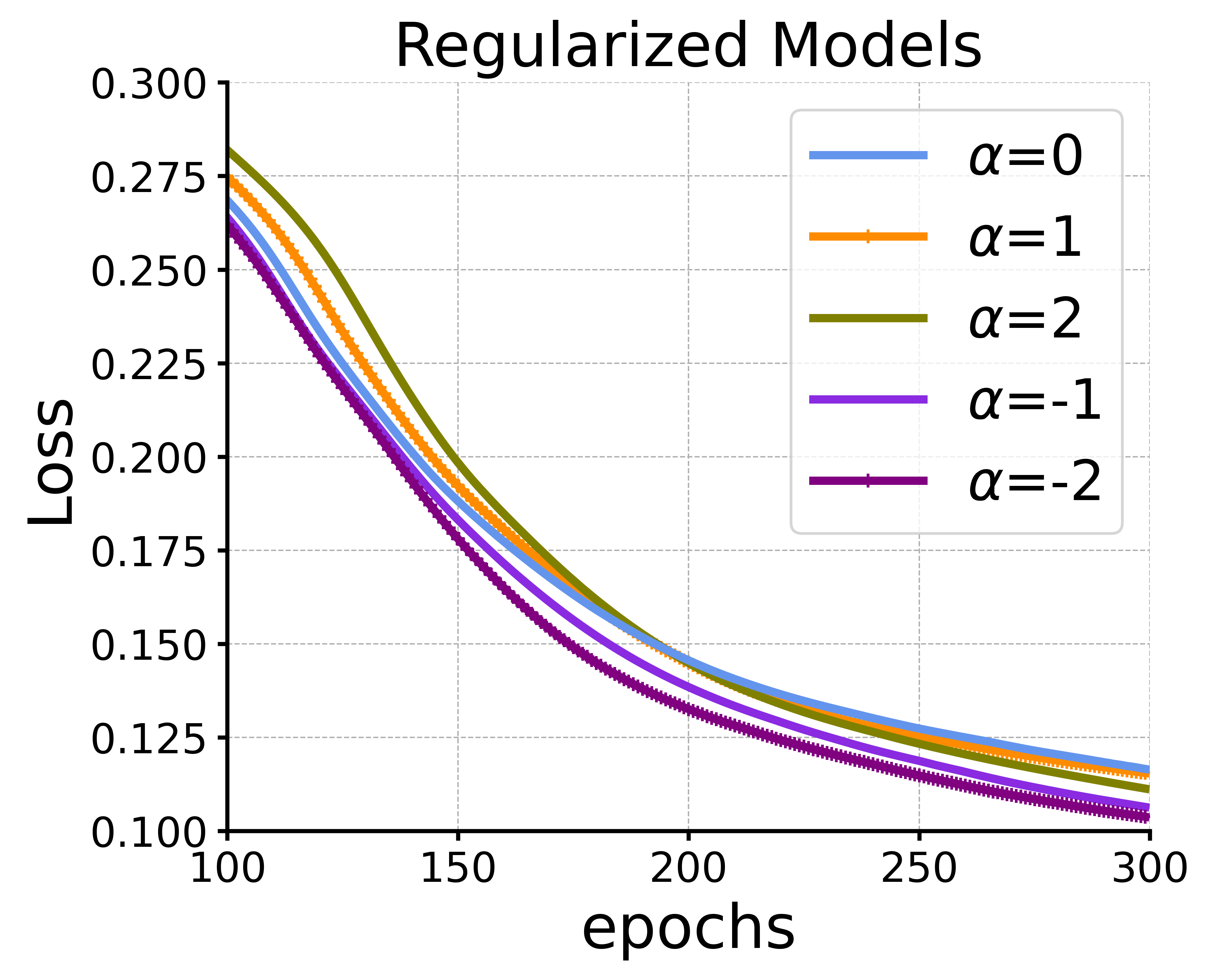}
	\end{subfigure}
	\hspace{-0cm}
	\begin{subfigure}[t]{0.22\linewidth}
		\centering
		\includegraphics[scale=.22]{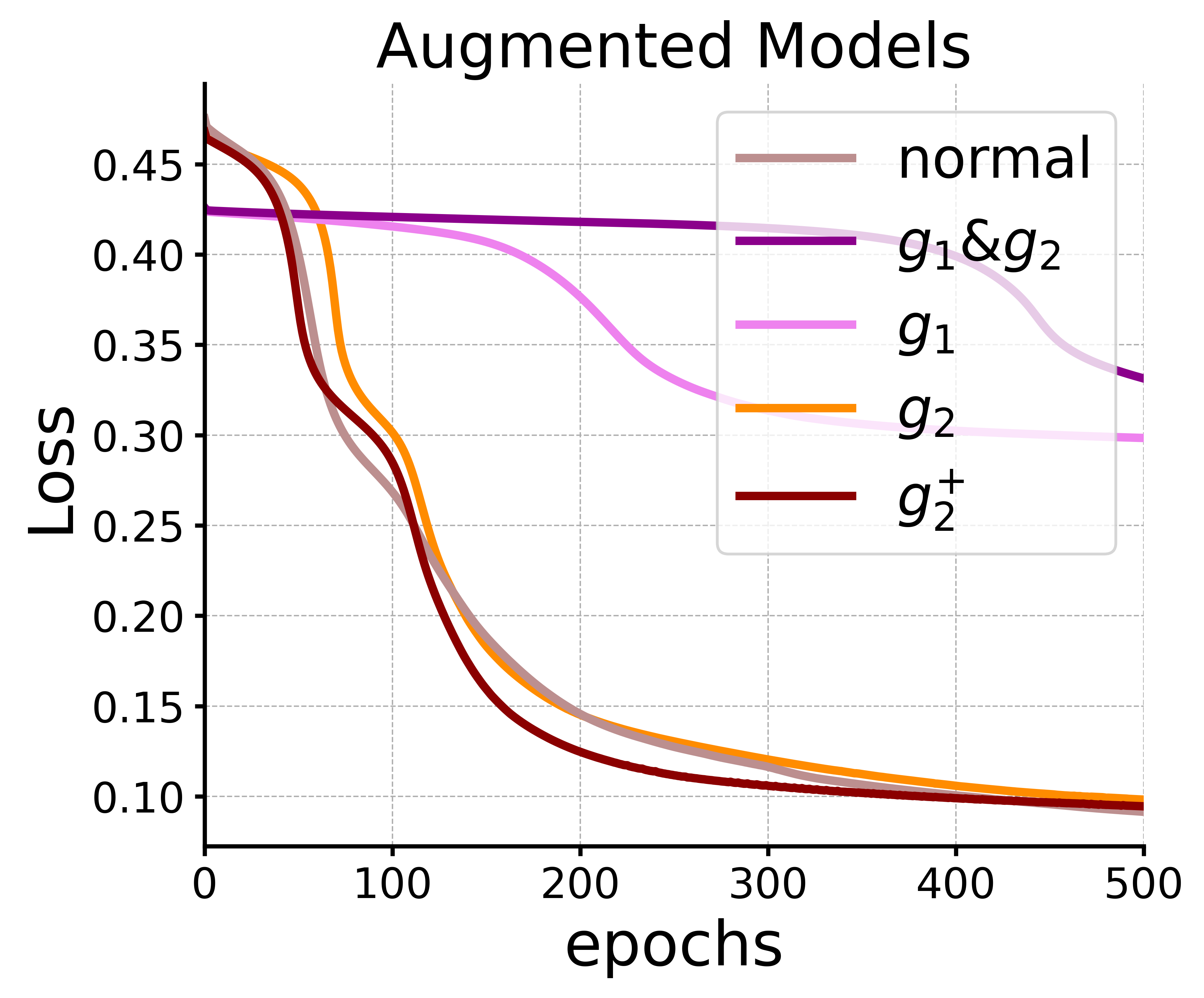}
	\end{subfigure}
	\hspace{-0cm}
	\begin{subfigure}[t]{0.22\linewidth}
		\centering
		\includegraphics[scale=.22]{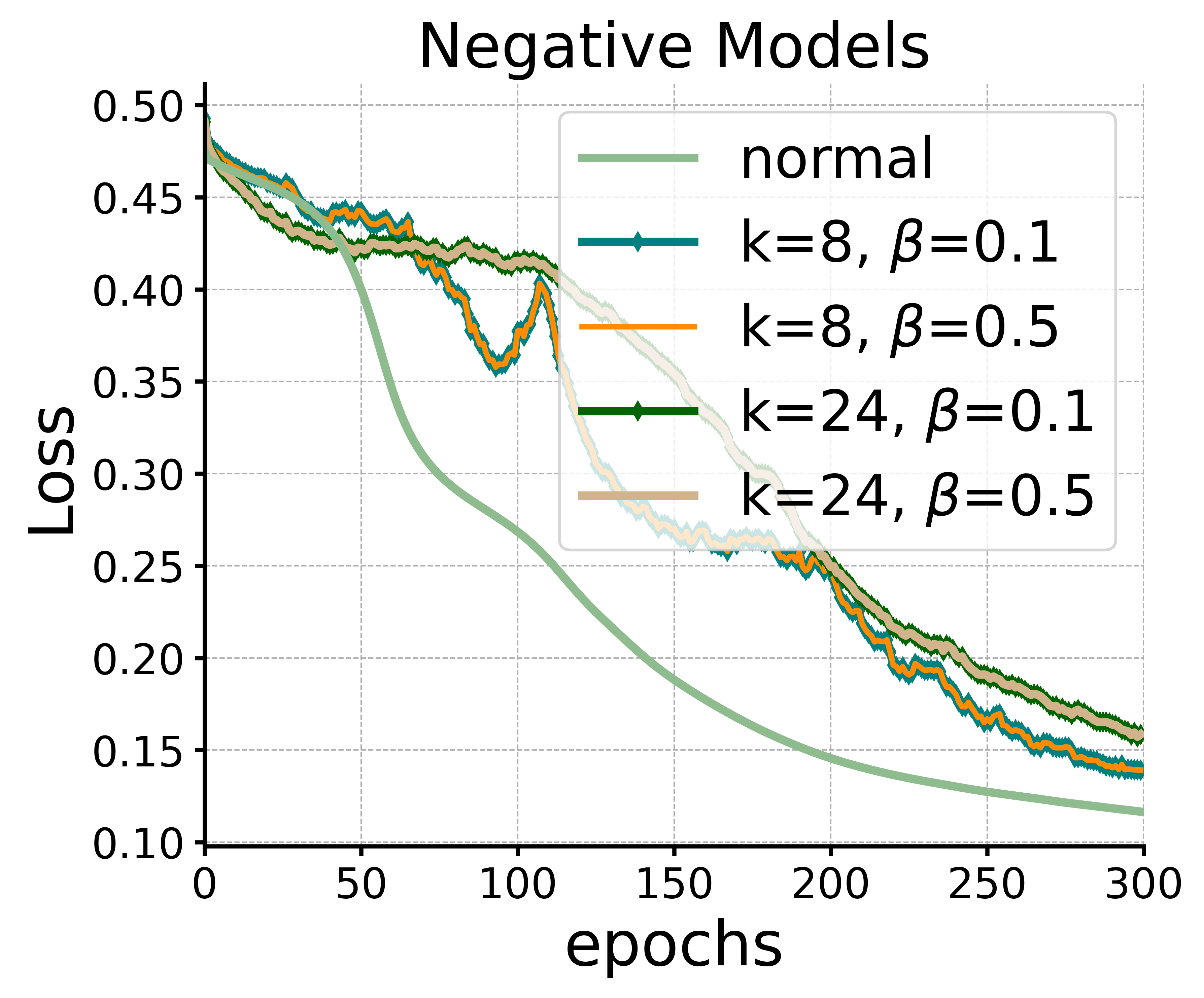}
	\end{subfigure}
	\vspace{-0cm}
	\caption{The equivalence between ICL of one softmax attention layer and gradient descent, along with analysis on different model modifications for trigonometric tasks. \textbf{Left Part:} $\| \hat{\vy}_{test} - \vh'_{T+1} \|_{2}$ as the gradient descent proceeds under setting $N = 127$; \textbf{Remaining Part:} the performance for regularized models (Center Left), augmented models (Center Right) and negative models (Right) with different settings.}
	\label{fig:cos-one-atten}
\end{figure*}

For trigonometric task, we generate the task by $\vs = \mathrm{cos}(\mW \vt)$ where $\mathrm{cos}(\cdot)$ is element-wise, $\mW \in \mathbb{R}^{d_{s} \times d_{t}}$ is sampled from the  normal distribution $\mW_{ij} \sim \mathcal{N}(0, 1)$ while $\vt$ is sampled from the uniform distribution $\vx \sim U(0, \pi)^{d_{t}}$.
In experiments, we found that for one softmax attention layer, learning higher-dimensional tasks is challenging. 
Therefore, we only set $d_{t} = 7$ and $d_{s} = 1$.
At each step, we use $N+1=128$ tokens  $ \{\vx_{i} = [\vt_{i};\vs_{i}]\}^{N+1}_{i=1}$ and the total number of tokens remains unchanged at $16384$.
Compared to the setting $N+1=16$ of linear tasks, we observed that for more complex tasks, the attention layer needs to use more tokens to provide information at each training step.
Similarly, we mask the label part of the last token, that is, $\vs_{N+1} = \vzero$ and use mean square error (MSE) loss to train the attention layer.
We choose SGD as the optimizer and the learning rate is set as 0.005.
The rest of the settings remain consistent with those used in the linear task.
The result for trigonometric regression task is shown in Figure~\ref{fig:cos-one-atten}.

Firstly, as shown in the left part of Figure~\ref{fig:cos-one-atten}, the inference results is of ICL is strictly equivalent to the prediction of the dual model, that is, $\hat{\vh}_{N+1} = \hat{\vy}_{test}$ as well as the label part $\hat{s}_{N+1}^{(1)} = \hat{s}_{N+1}^{(2)}$, aligning with our analysis in Theorem~\ref{thm1}.

The performance of modified model during training process can be seen in the remaining parts of Figure~\ref{fig:cos-one-atten}. 
For regularized models, as seen in the center left part of figure~\ref{fig:cos-one-atten},  the models when $\alpha < 0$ converge slightly faster and reach better final results compared to the normal model ($\alpha = 0$).
For augmented models, we use as the same augmentation functions $g_1$ and $g_2$ as the ones in the linear regression task, that is,  $g_{1}(\vx) =  g_{2}(\vx) = \sigma(\mW \vx)$ where $\sigma(\cdot)$ is $\mathrm{GELU}$ activation function.
However, for $g_2^+$, we use $\mathrm{ELU}$ as the activation function.
We can find from the center right part of Figure~\ref{fig:cos-one-atten}~that, compared to the normal model, using $g_1$ alone and using $g_1$ and $g_2$ simultaneously as data augmentations significantly degrade the model's performance, including convergence speed and final results. However, using $g_2$ alone yields comparable result with the normal model. 
Particularly, when using $g_2^+$, the model accelerates its convergence speed.
However, for negative models, the performance with the selected number of negative samples $k$ and the parameter $\beta$ is worse than the normal model, which suggests that our simple approach of selecting those tokens low attention scores as negative samples is not a reasonable method.
Just as we discussed in Section~\ref{sec:CL}, for different tasks, a more refined strategy for selecting negative samples should be considered.

\subsubsection{More details of Experiments on Exponential Tasks}

\begin{figure*}[h]
	\centering
	\begin{subfigure}[t]{0.22\linewidth}
		\centering
		\includegraphics[scale=.22]{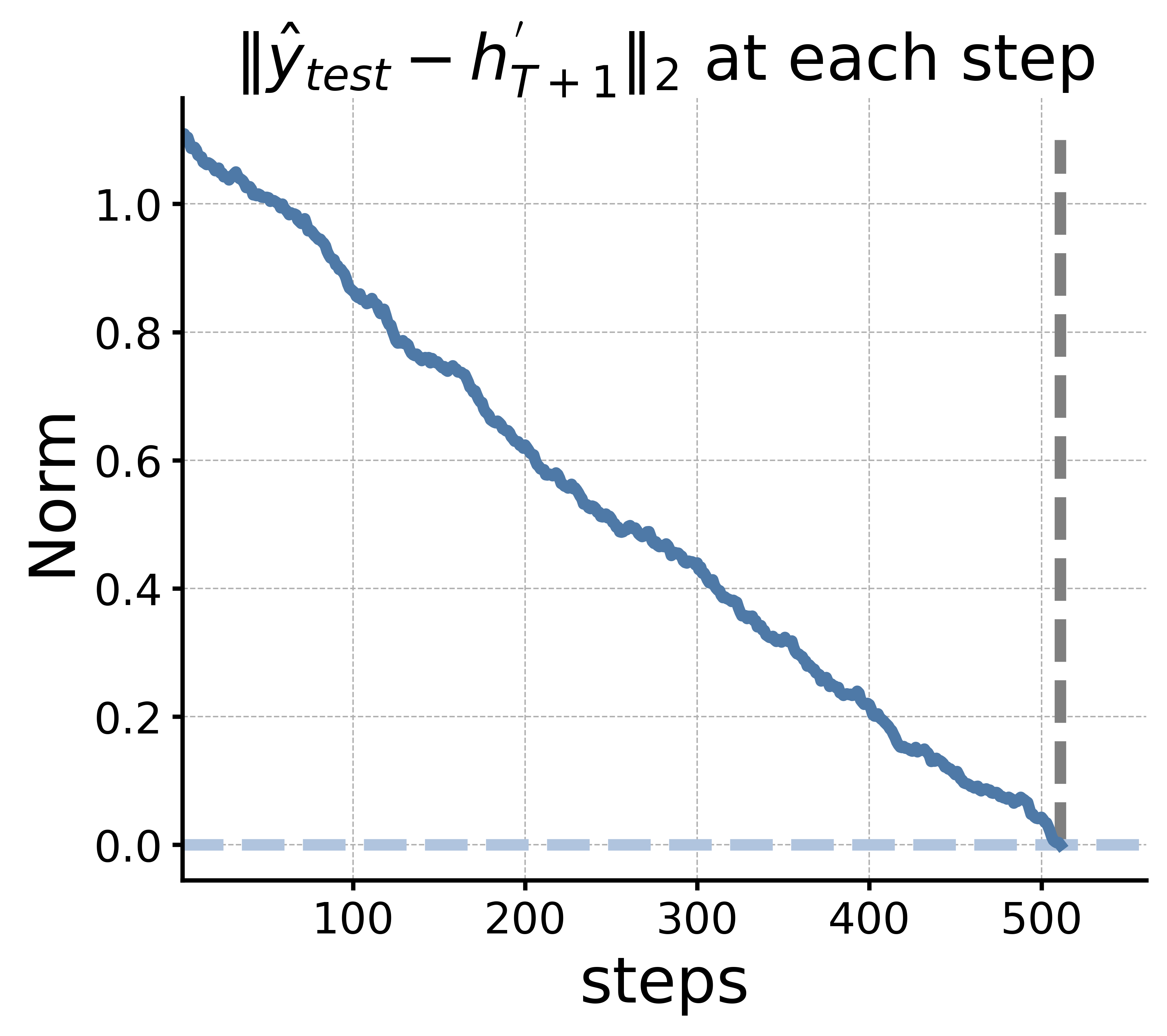}
	\end{subfigure}
	\hspace{-0.0cm}
	\begin{subfigure}[t]{0.22\linewidth}
		\centering
		\includegraphics[scale=.22]{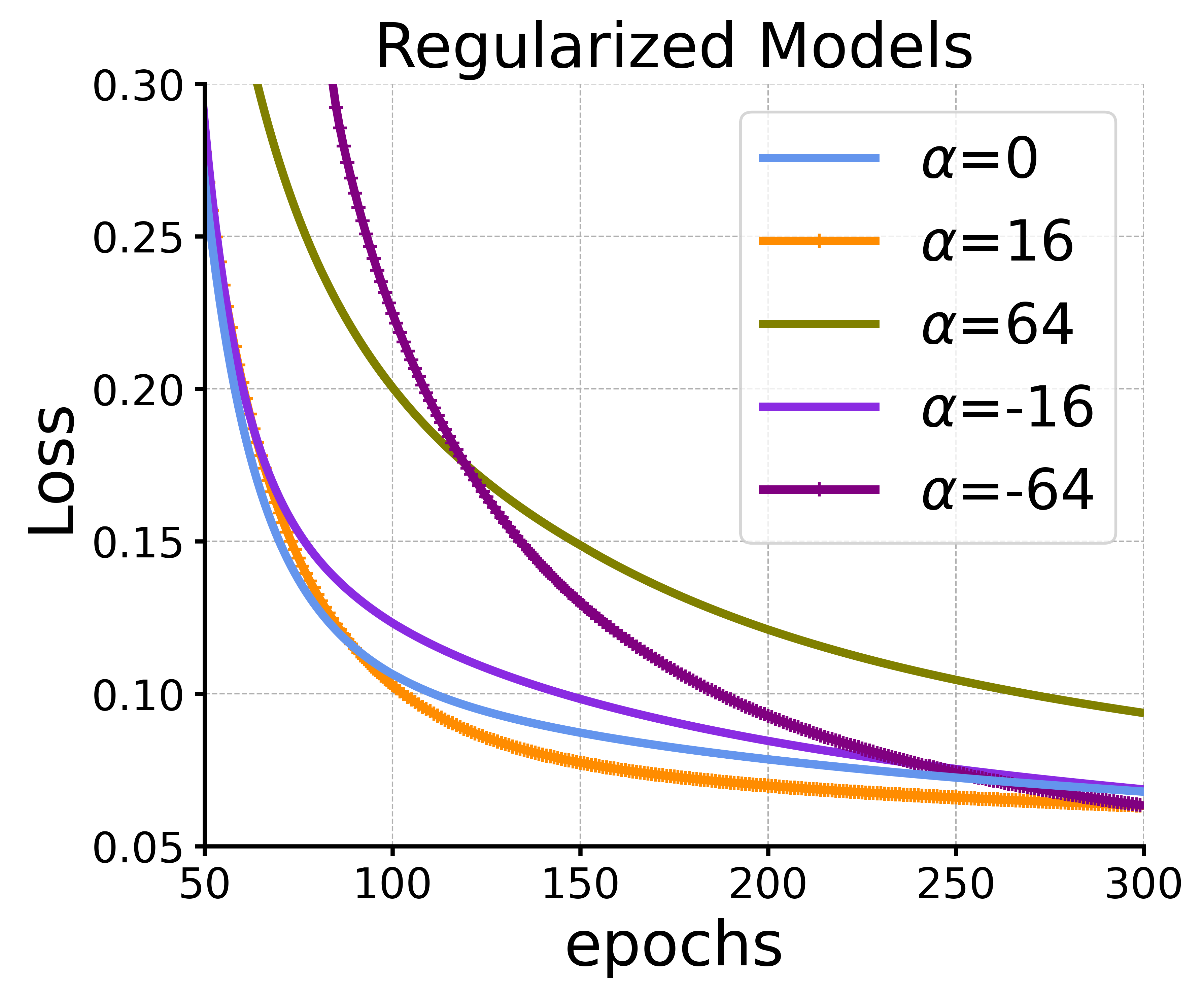}
	\end{subfigure}
	\hspace{-0.0cm}
	\begin{subfigure}[t]{0.22\linewidth}
		\centering
		\includegraphics[scale=.22]{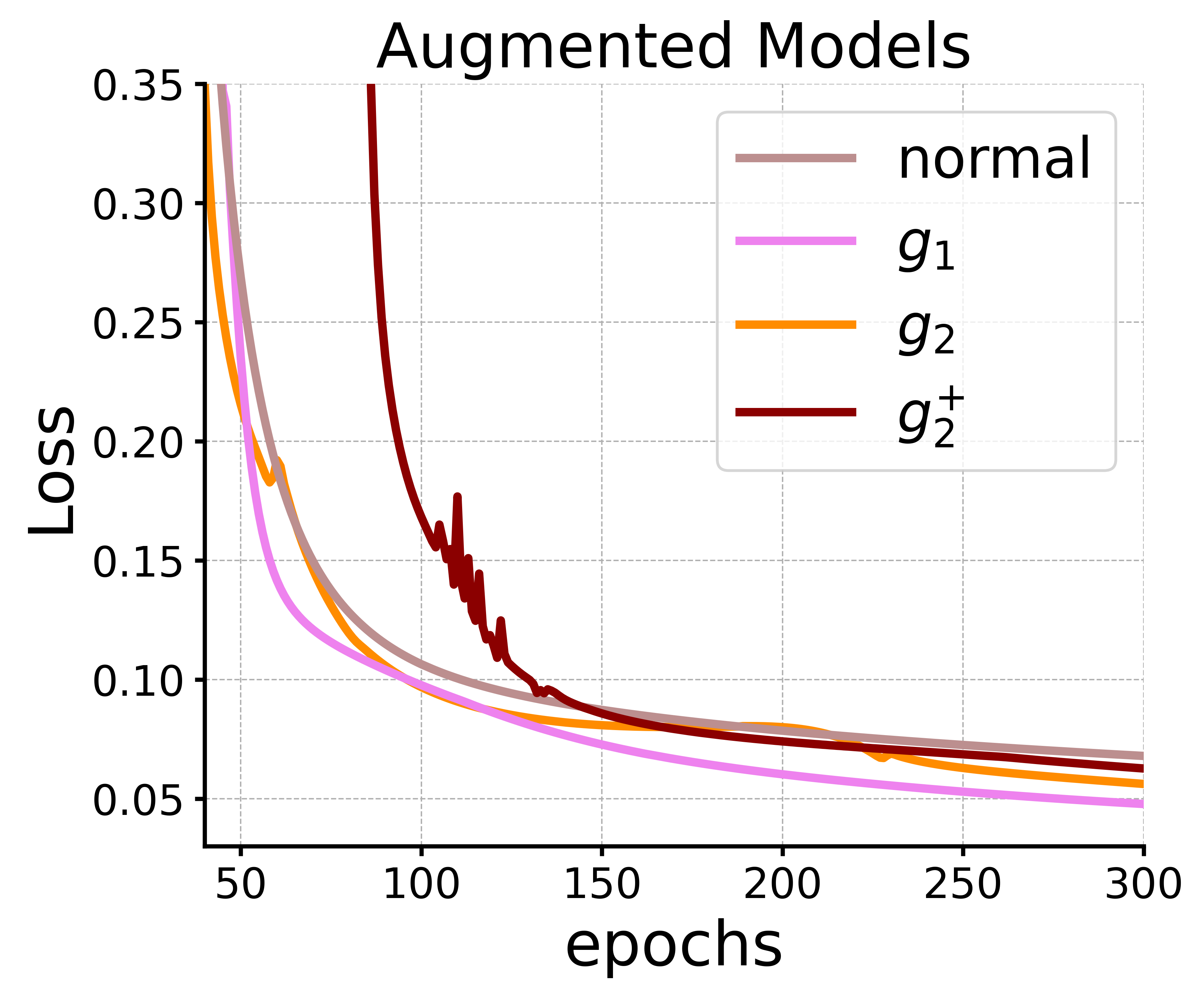}
	\end{subfigure}
	\hspace{-0.0cm}
	\begin{subfigure}[t]{0.22\linewidth}
		\centering
		\includegraphics[scale=.22]{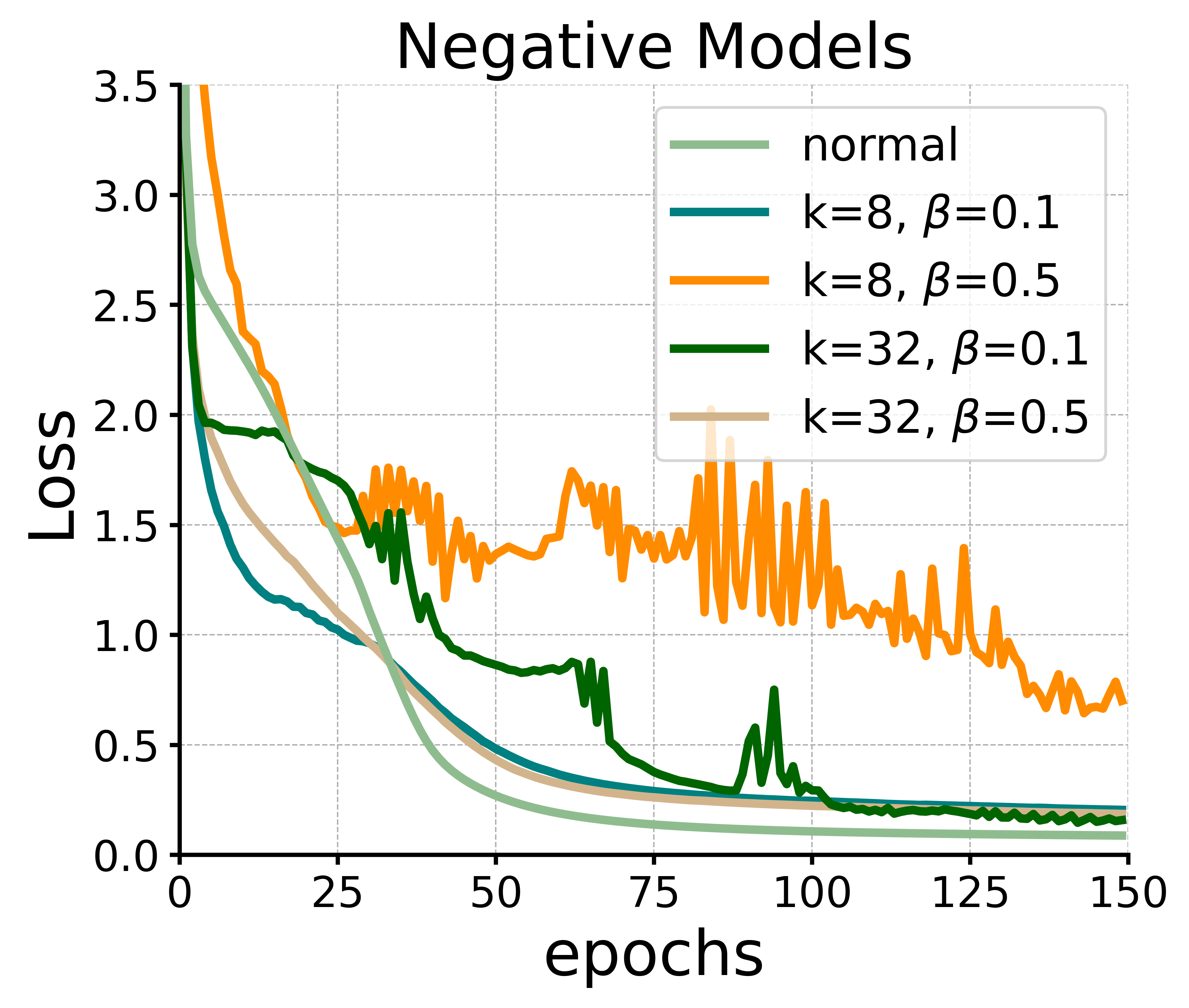}
	\end{subfigure}
	\vspace{-0.0cm}
	\caption{The equivalence between ICL of one softmax attention layer and gradient descent, along with analysis on different model modifications for exponential tasks. \textbf{Left Part:} $\| \hat{\vy}_{test} - \vh'_{T+1} \|_{2}$ as the gradient descent proceeds under setting $N = 511$; \textbf{Remaining Part:} the performance for regularized models (Center Left), augmented models (Center Right) and negative models (Right) with different settings.}
	\label{fig:exp-one-atten}
\end{figure*}

For exponential task, we generate the task by $\vs = \mathrm{exp}(\mW \vt)$ where $\mathrm{exp}(\cdot)$ is also element-wise, $\mW \in \mathbb{R}^{d_{s} \times d_{t}}$ is sampled from the  normal distribution $\mW_{ij} \sim \mathcal{N}(0, 1)$ while $\vt$ is sampled from the uniform distribution $\vx \sim U(-1, 1)^{d_{t}}$.
We only set $d_{t} = 6$ and $d_{s} = 1$ considering the limited learning capacity of one softmax attention layer.
At each training step, we use $N+1=512$ tokens $ \{\vx_{i} = [\vt_{i};\vs_{i}]\}^{N+1}_{i=1}$ and the total number of tokens remains unchanged at $16384$.
Compared to the setting $N+1=16$ of linear tasks and $N+1=128$ of trigonometric tasks, we also find that for exponential tasks, the attention layer needs more tokens to provide in-context  information at each training step.
The rest of the settings remain consistent with those used in the trigonometric task.
The result for exponential regression task is shown in Figure~\ref{fig:exp-one-atten}.

Similarly, as shown in the left part of Figure~\ref{fig:exp-one-atten}, the result $\hat{\vh}_{N+1}$ of ICL inference is equivalent to the test prediction $\hat{\vy}_{test}$ of the dual model after training, just as stated in Theorem~\ref{thm1}.
For regularized models, it can be observed that when $\alpha = 16$, the model converges faster and achieves better result.
For augmented models, using $g_1$ or $g_2$ alone as data augmentations results in better performance. 
However, when both $g_1$ and $g_2$  are used simultaneously, the training process becomes unstable, so we did not show it in the center right part of Figure~\ref{fig:exp-one-atten}.
For negative model, similar to the case in the trigonometric task, the different combinations of negative samples' number $k$ and parameter $\beta$ do not show a significant improvement over the normal model, highlighting the importance of the strategy for selecting negative samples. 
We leave the exploration of a more refined negative sample selection strategy when facing various tasks for future consideration.

\subsection{More Experiments on Combinations}
In addition, we also conduct experiments with their combinations on linear tasks, trigonometric tasks , and exponential tasks. 
The results are shown in Figure~\ref{fig:combinations}. For linear tasks, a combination of regularized and augmented modifications is sufficient. However, for the other two tasks, the results are actually worse than using regularized or augmented modification individually (compared to Figures \ref{fig:cos-one-atten} and \ref{fig:exp-one-atten}). 
We think this may be due to the ineffective selection of negative samples, which is amplified when combined.
Therefore, when the design of augmentation or negative sample improvement methods is not effective, we recommend using a single modification method.

\begin{figure}[h]
	\centering
	\begin{subfigure}[t]{0.25\linewidth} 
		\centering
		\includegraphics[width=\linewidth]{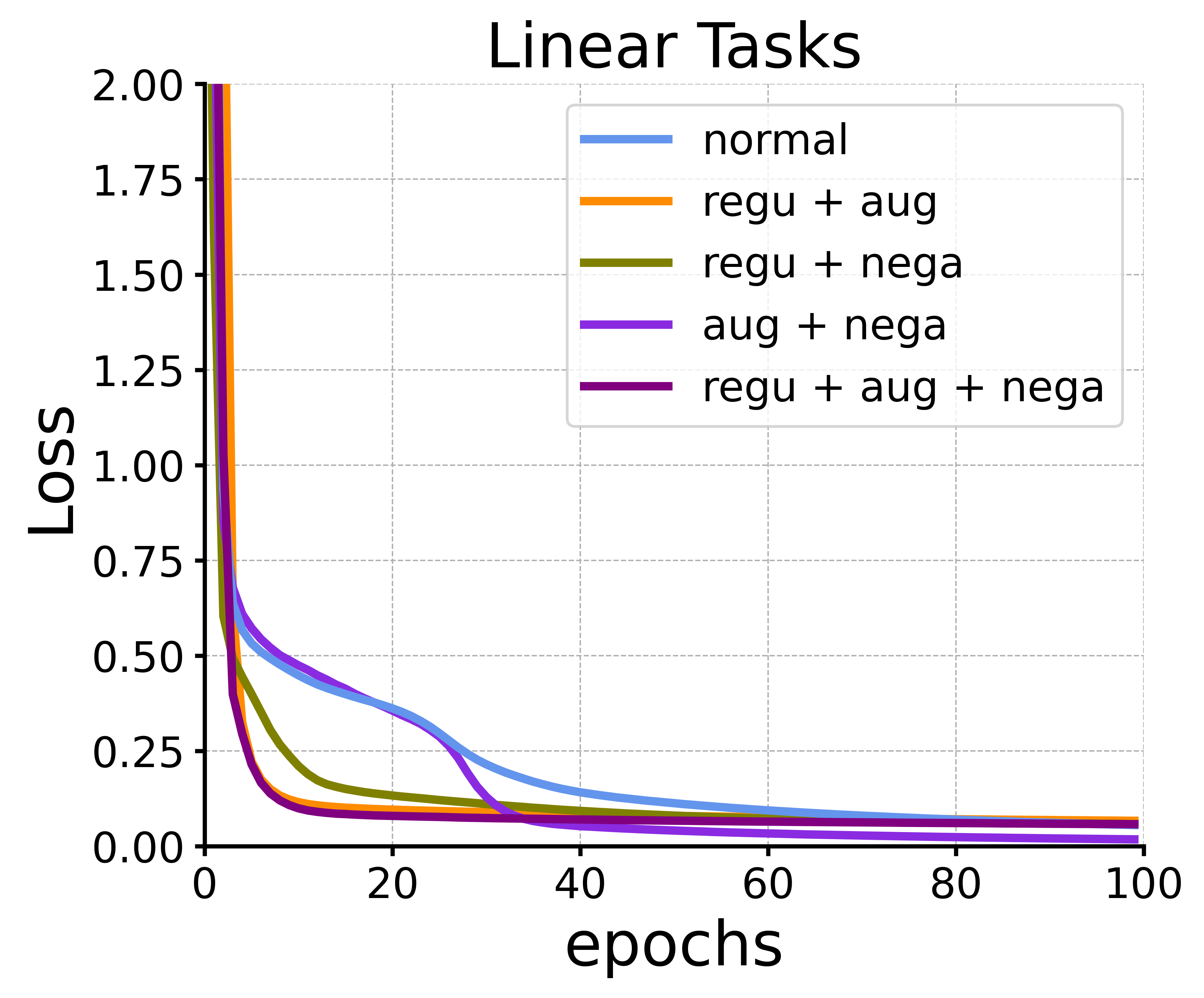}
	\end{subfigure}
	\hspace{0.5cm} 
	\begin{subfigure}[t]{0.25\linewidth}
		\centering
		\includegraphics[width=\linewidth]{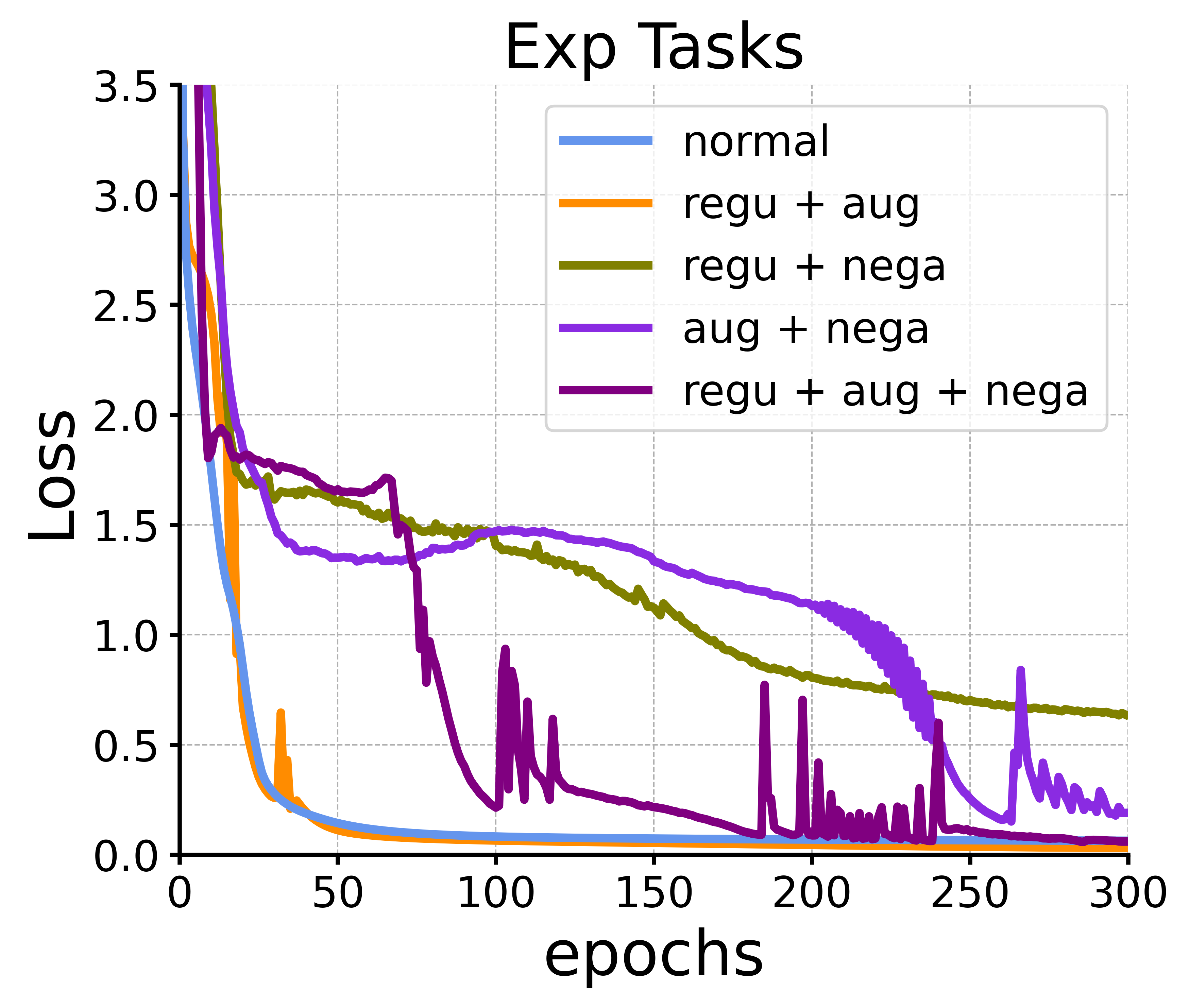}
	\end{subfigure}
	\hspace{0.5cm}
	\begin{subfigure}[t]{0.25\linewidth}
		\centering
		\includegraphics[width=\linewidth]{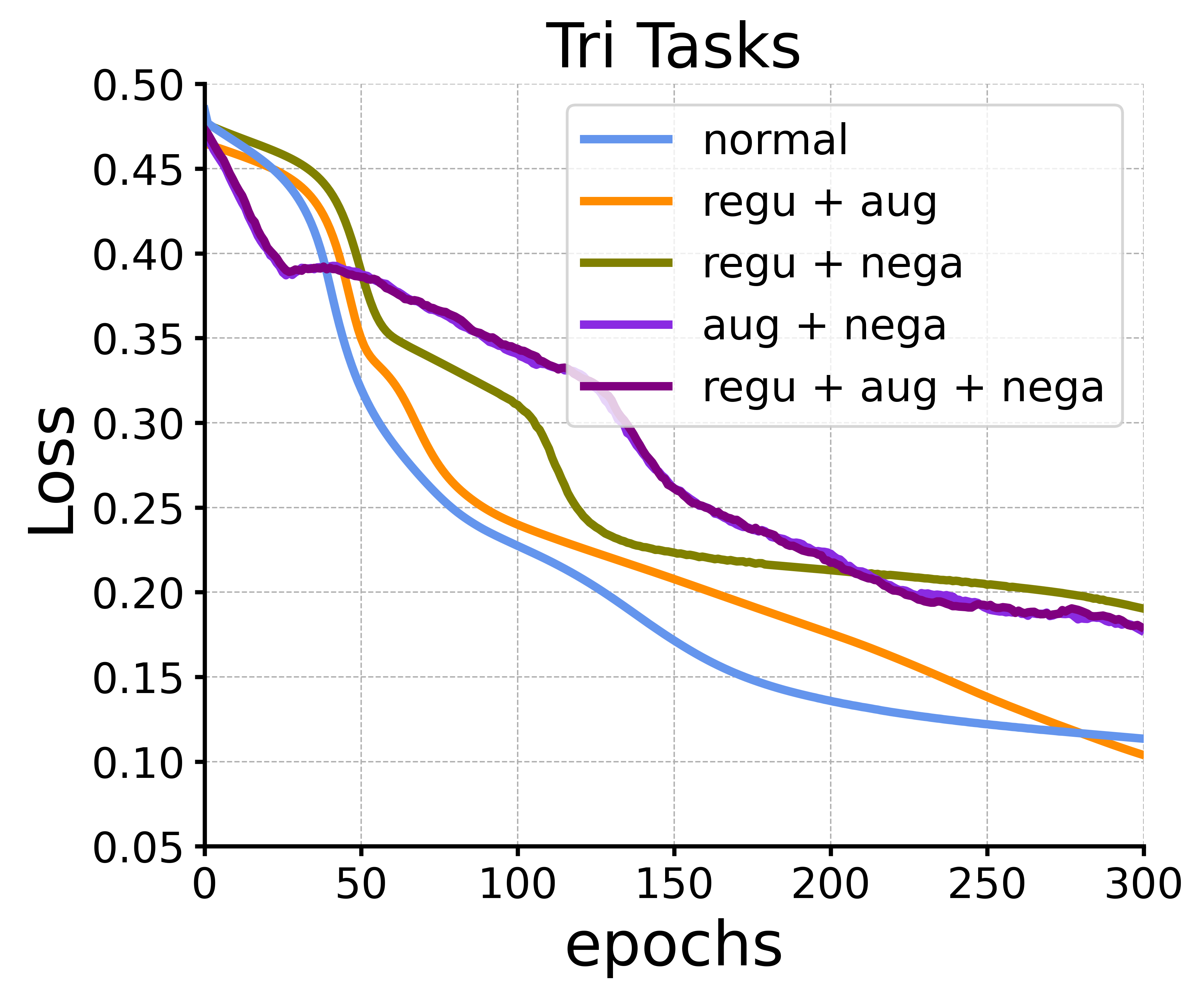}
	\end{subfigure}
	\vspace{-0.0cm}
	\caption{Performance of different combinations on linear~(Left), exponential~(Center), and trigonometric~(Right) tasks.}
	\label{fig:combinations}
\end{figure}

\subsection{More Experiments on One Transformer Layer}

\begin{figure*}[h]
	\centering
	\begin{subfigure}[t]{0.27\linewidth}
		\centering
		\includegraphics[scale=.25]{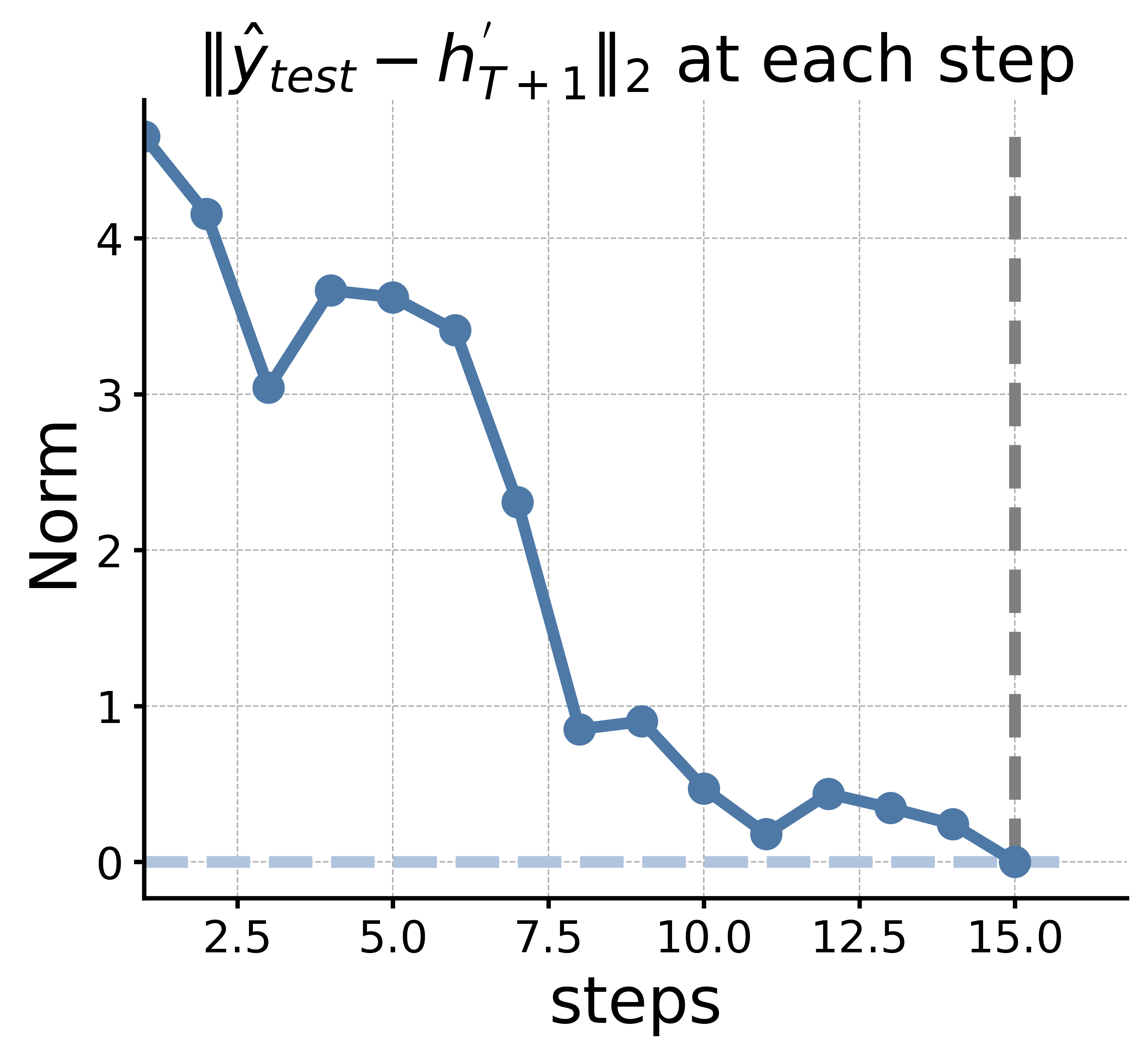}
	\end{subfigure}
	\hspace{-0cm}
	\begin{subfigure}[t]{0.27\linewidth}
		\centering
		\includegraphics[scale=.25]{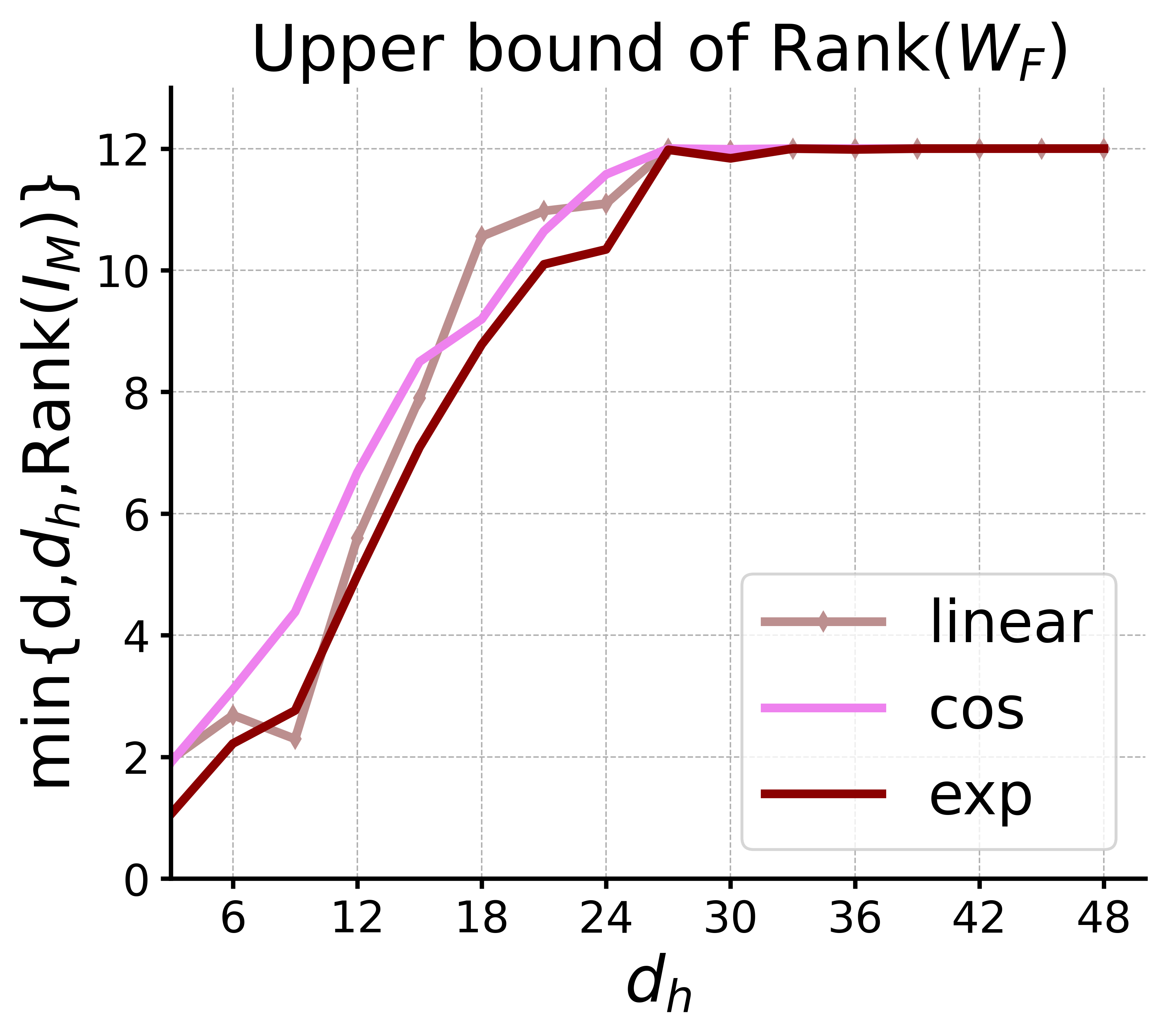}
	\end{subfigure}
	\vspace{0.1cm}
	\caption{The equivalence between ICL of one Transformer layer and gradient descent, along with analysis on upper bound of \textrm{Rank}($\mW_F$). \textbf{Left:} $\| \hat{\vy}_{test} - \vh'_{T+1} \|_{2}$ as the gradient descent proceeds under setting $N = 15$; \textbf{Right:} the upper bound of \textrm{Rank}($\mW_F$) when setting $d = 12$ and varying $d_h$.}
	\label{fig:exp-FFN}
\end{figure*}

Similar to the experiments with one softmax attention layer, we also conduct experiments on a Transformer layer (introducing one FFN layer after the attention layer) and trained its dual model based on Theorem~\ref{thm:TF}. 
As shown in Figure~\ref{fig:exp-FFN}, the inference result $\hat{\vh}_{N+1}$ of ICL remains equivalent to the test prediction $\hat{\vy}_{test}$ of the trained dual model. 
Furthermore, to validate the potential low-rank property of matrix $\mW_{F}$, we explore its upper bound of rank. 
Noting that $\mW_{F} = \mW_{2}\mI_{M}\mW_{1}$ where $\mW_{1} \in \sR^{d_h \times d}$, $\mW_{2} \in \sR^{d \times d_h}$, $\mI_{M} \in \sR^{d_h \times d_h}$, the upper bound of $\mathrm{Rank}(\mW_{F})$ is
\begin{equation*}
	\mathrm{Rank}(\mW_{F}) \le \min \left \{ d, d_h, \mathrm{Rank}(\mI_{M}) \right \},
\end{equation*}
where $\mathrm{Rank}(\mI_{M})$ is equivalent to the number of non-zero elements in $\mI_{M}$. 
We fix $d=12$ while varying the values of $d_h$. 
We generate 1024 sets of $\mX_{test}$ for different tasks and repeat the experiments 5 times. Finally, we calculate the average upper bound of the rank of $\mW_F$. 
The results are shown in the right part of Figure~\ref{fig:exp-FFN}, indicating that when $d_h \ge 2.75 d = 33$, the upper bound remains stable and equals $d=12$. 
Otherwise, when $d_h$ is set to a smaller value, $\mW_F$ exhibits clear low-rank property.

\subsection{More Experiments on More Realistic NLP Tasks}\label{app:ex_glue}

\begin{table}[h!] 
	\renewcommand{\arraystretch}{1.15}
	\centering 
	\vspace{-0.0cm}
	\begin{tabular}{>{\centering\arraybackslash}m{2.0cm}>{\centering\arraybackslash}m{3.3cm}>{\centering\arraybackslash}m{1cm}>{\centering\arraybackslash}m{2cm}>{\centering\arraybackslash}m{2cm}>{\centering\arraybackslash}m{1cm}}
		\Xhline{1.1pt}
		Model Types & Dataset & CoLA & MRPC & STS-B & RTE \\  \Xhline{1pt} 
		\rowcolor{blue!3}
		Normal & Bert-base-uncased & 56.82 & 90.24/86.27 & 88.29/87.96 & 68.23 \\ \Xhline{1pt}
		& $\alpha = -1.0$ & 0.0 & 79.01/68.87 & 57.23/60.16 & 52.71 \\
		& $\alpha = -0.5$ & 61.42 & 83.17/74.02 & 85.28/85.22 & 57.04 \\ 
		& $\alpha = -0.1$ & 58.06 & 89.50/85.05 & 88.71/88.27  & 65.70 \\  
		& $\alpha = 0.1$ & 58.34 & 90.59/86.76 & 88.12/87.81 & 64.98 \\  
		& $\alpha = 0.5$ & 27.01 & 83.56/73.28 & 85.25/85.03 & 59.93 \\
		& $\alpha = 1.0$ & 0.0 & 81.22/68.38 & 52.07/55.60 & 47.29 \\
		\rowcolor{green!5} \multirow{-7}{2.5cm}{Regularized \newline Models}& Local Best & \textbf{61.42} & \textbf{90.59/86.76} & \textbf{88.71/88.27} & 65.70 \\ \cline{2-6} \Xhline{1pt}
		
		\multirow{7}{2.5cm}{Augmented \newline Models} &$\mathrm{g_1}$ / $c = 0.2$ & 59.85 & 88.11/83.33 & 88.56/88.22 & 68.59\\ 
		& $\mathrm{g_1}$ / $c = 1$ & 56.51 & 90.88/87.01 & 88.96/88.60 & 71.12 \\
		& $\mathrm{g_2}$ / $c = 0.2$ & 56.29 & 87.65/82.60 & 88.60/88.24 & 68.59 \\ 
		& $\mathrm{g_2}$ / $c = 1$ & 58.85 & 87.74/82.60 & 88.68/88.32 & 70.40 \\
		& $\mathrm{g_1 ~\&~ g_2}$ / $c = 0.2$ & 57.32 & 89.62/85.29 & 88.48/88.19 & 71.12 \\ 
		& $\mathrm{g_1 ~\&~ g_2}$ / $c = 1$ & 58.30 & 90.40/86.52 & 88.83/88.45 & 68.95 \\ 
		\rowcolor{green!5}&Local Best & \textbf{59.85} & \textbf{90.88/87.01} & \textbf{88.96/88.60} & \textbf{71.12}  \\ \Xhline{1pt}
		
		\multirow{7}{2.5cm}{Negative \newline Models} & $r = 0.1$ / $\beta = 0.1$ & 56.22 & 88.54/83.82 & 88.25/87.91 & 65.34 \\ 
		& $r = 0.2$ / $\beta = 0.1$ & 57.92 & 90.00/85.78 & 88.22/87.84 & 66.06 \\ 
		& $r = 0.3$ / $\beta = 0.1$ & 57.92 & 89.31/84.80 & 88.26/87.90 & 67.15 \\ 
		& $r = 0.1$ / $\beta = 0.2$ & 58.92 & 87.90/83.33 & 88.34/88.11 & 63.54 \\ 
		& $r = 0.2$ / $\beta = 0.2$ & 57.13 & 87.87/83.09 & 88.59/88.27 & 64.98 \\ 
		& $r = 0.3$ / $\beta = 0.2$ & 58.14 & 88.97/84.56 & 88.64/88.33 & 66.79 \\ 
		\rowcolor{green!5} & Local Best & \textbf{58.92} & 90.00/85.78 & \textbf{88.64/88.33} & 67.15 \\ \Xhline{1.1pt}
		
		\multirow{6}{2.5cm}{Combined \newline Models} & Reg \& Aug & 56.56 & 88.54/83.82 & 88.86/88.60 & 68.59 \\ 
		& Reg  \& Neg & 58.11 & 88.19/83.33 & 88.41/88.17 & 69.31 \\
		& Aug  \& Neg  & 59.07 & 90.49/86.76 & 88.59/88.21 & 70.76 \\ 
		& Reg  \& Aug  \& Neg & 58.92 & 88.39/83.58 & 88.32/88.01 & 67.87 \\ 
		\rowcolor{green!5} & Local Best & \textbf{59.07} & \textbf{90.49/86.76} & \textbf{88.86/88.60} & \textbf{70.76} \\ \Xhline{1pt}
		\rowcolor{yellow!10}& Global Best & \textbf{61.42} & \textbf{90.88/87.01} & \textbf{88.96/88.60} & \textbf{71.12} \\ \Xhline{1pt}
	\end{tabular}
	\vspace{0.3cm}
	\caption{Partial GLUE test results of different modifications.
		“Local Best" is used to display the best results for each modification type, where bolded results indicate the performance superior to the original model. “Global Best" is used to showcase the best results among all modifications.  Matthews correlation, F1 scores/accuracy, Pearson/Spearman correlation, accuracy are reported for CoLA, MRPC, STS-B, RTE respectively.} 
	\label{tab:glue}
\end{table}

We supplement our experiments on more more realistic NLP tasks. We choose the BERT-base-uncased model (can be downloaded from Huggingface library\citep{huggingface}, hereafter referred to as BERT\citep{bert}) to validate the effectiveness of modifications to the attention mechanism and select four relatively smaller GLUE datasets (CoLA, MRPC, STS-B, RTE) \citep{glue}.
We load the checkpoint of the pre-trained BERT model, where 'classifier.bias' and 'classifier.weight' are newly initialized, and then we fine-tune the model to explore the performance of three attention modifications as well as their combinations. 
In terms of more detailed experiment settings, we set the batch size to 32, the learning rate to 2e-5, and the number of epochs to 5 for all datasets. 
All experiments are conducted on a single 24GB NVIDIA GeForce RTX 3090. All experimental results are presented in Table~\ref{tab:glue}. Below, we discuss the various modifications and their performance.

\textbf{For the regularized modification}, we consider different values of $\alpha$, specifically selected from $\{-0.5,-0.1,0.1,0.5\}$. 
As can be observed in Table \ref{tab:glue}, except for RTE, the best regularized models outperform the original model on the other three datasets. 
However, we also note that when the absolute value of $\alpha$  is too large, the model's performance declines significantly, so we recommend using smaller absolute values for $\alpha$.

\textbf{For the augmented modification}, we also consider applying more complex ``augmentation'' functions to the linear key/value mappings. 
However, unlike the previous methods used in simulation tasks, we do not simply select $g_1$ and $g_2$ as MLPs, i.e., $g_1(\mW_V\vx)=\mW_2\sigma(\mW_1\mW_V\vx)$. 
This design is avoided because it could undermine the effort made during pre-training to learn the weights $\mW_V$ and $\mW_K$, leading to difficulties in training and challenges in comparison. 
Instead, we adopt a parallel approach, i.e., $g_1(\mW_Vx) =  \mW_Vx  +  c \mW_2\sigma(\mW_1x)$, where $c$ is a hyperparameter to control the influence of the new branch, $\sigma$ is the GELU activation function and the hidden layer dimension is set to twice the original size of $\mW_Vx$. $g_2(\mW_Kx) =  \mW_Kx  +  c \mW_2\sigma(\mW_1x)$ follows the same format.

Experimental results show that the best augmented models achieve better performance than the original model across all four datasets. Notably, augmentation on the value mapping (i.e., using $g_1$ alone) proves to be more effective than other methods, both in terms of performance and the amount of additional parameters introduced. 
Using both $g_1$ and $g_2$ introduces more parameters, which is particularly undesirable for larger models. 
Thus, under the augmentation methods and experimental settings we selected, using $g_1$ alone is recommended. 

In addition, we do not rule out the possibility of more powerful and efficient augmentation methods. Our choice of $g_1$ and $g_2$ as parallel MLPs is primarily motivated by the desire to make better use of the pre-trained weights $\mW_K$ and $\mW_V$. 
We have also noticed that this specific augmentation function design is structurally similar to the Parallel Adapter \citep{parallel_adapter}. 
However, we would like to emphasize that our parallel design is just a specific case within this broader augmented modification framework and this is a new perspective for understanding the Parallel Adapter.
As for practical implementation, the Parallel Adapter method focuses more on efficient training, so it uses fewer parameters, and the original $\mW_V$ and $\mW_K$ are freezed—only the newly introduced parameters are trained. 
In contrast, our approach aims to validate the benefits of introducing stronger nonlinear augmentation functions into the linear value/key mappings. 
Therefore, we set a higher hidden layer dimension (twice that of $\mW_V\vx$ or $\mW_K\vx$) and also train $\mW_V$ and $\mW_K$ simultaneously.
This design is relatively general and does not take into account the specific characteristics of individual tasks.
We still encourage the development of more task-specific augmentation strategies tailored to different tasks.

\textbf{For the negative modification}, we continue to select tokens with lower attention scores as negative samples. The parameter $r$ represents the proportion of tokens used as negative samples, while $\beta$ indicates the overall reduction in attention scores. We choose $r$ from $\{0.1,0.2,0.3\}$ and $\beta$ from $\{0.1,0.2\}$. 
Under these combinations, the best negative models only outperform the original model on CoLA and STS-B, whereas their performance on MRPC and RTE is worse than the original one. 
This suggests that our simple approach of considering tokens with low attention scores as negative samples might be too coarse. 
A more effective method for constructing negative samples should be designed, which is a direction worth exploring in the future.

We also consider \textbf{combining different modification methods}. 
Specifically, we choose $\alpha = 0.1$, $g_1 / c = 1$ and $r = 0.2/ \beta = 0.1$ respectively as the basis for combining the three types of modifications, considering their overall performance across all datasets. 
The results indicate that under our settings, the combination of augmented and negative modification achieves the best performance on CoLA, MRPC, and RTE, while the combination of regularized and augmented modification achieves the best performance on STS-B. However, their optimal performance is slightly inferior to the best performance achieved with augmented models alone. Therefore, we conclude that using all three modifications simultaneously is not necessary. 
With appropriate hyperparameter choices, using augmented modification alone or in combination with one other modification is sufficient.

Overall, the experimental results show that our modifications inspired by the representation learning process are helpful in enhancing performance. 
This further validates the potential of our approach of thinking about and improving the attention mechanism from a representation learning perspective.
In addition, we would like to reiterate that more validation across additional tasks and models, and the development of task-specific augmentation and negative sampling methods are all interesting directions worth exploring in the future.

\section{More Details about Related Work}\label{app:related_work}

In this section, we provide additional details about the related work in Section~
\ref{sec:related_work}, especially those that involve formalization.
\citet{dai2022can} interpret ICL as implicit fine-tuning: More specifically, let $\mX = [\mX_D, \mX_T]$ where $\mX_D = [\vx_1, \vx_2, \dots, \vx_N]$ denotes the demonstration tokens and $\mX_T= [\vx'_1, \vx'_2, \dots, \vx'_{T}]$ be query tokens. On the one hand, for ICL, they consider the output of $\vq = \mW_Q\vx'_{T+1}$ under the linear attention setting as
\begin{equation*}
\begin{aligned}
	\tilde{F}_{\mathrm{ICL}} (\vq) &= \mW_{V}[\mX_D, \mX_T](\mW_K[\mX_D; \mX_T])^T \vq \\
	&= \mW_{V}\mX_{T}(\mW_K\mX_T)^Tq + \mW_{V}\mX_{D}(\mW_K\mX_D)^T\vq \\
	&= \mW_{\mathrm{ZSL}}\vq +  \mathrm{LinearAtten}(\mW_V\mX_{D}, \mW_{K}\mX_{D}, \vq) \\
	&= \mW_{\mathrm{ZSL}}\vq + \sum_{i}\left( (\mW_{V}\vx_{i}) \odot (\mW_{K}\vx_{i}) \right)^T\vq \\
	&= \mW_{\mathrm{ZSL}}\vq + \Delta \mW_{\mathrm{ICL}}\vq ,
\end{aligned}
\end{equation*}
where $\mW_{\mathrm{ZSL}}\vq$ is interpreted as the output in the zero-shot learning (ZSL) where no demonstrations are given. 
On the other hand, they consider a specific fine-tuning setting, which updates only the parameters for the key and value projection, that is, 
\begin{equation*}
\begin{aligned}
	\tilde{F}_{\mathrm{FT}}(\vq) &= (\mW_{V} + \Delta \mW_{V})\mX\mX^T(\mW_K + \Delta \mW_K)^T\vq \\
	&= (\mW_{\mathrm{ZSL}} + \Delta \mW_{\mathrm{FT}})\vq
\end{aligned}	
\end{equation*}
where $\Delta \mW_K$ and $\Delta \mW_V$ denote the parameter updates and they are acquired by back-propagation from task-specific training objectives \citep{dai2022can}, which is a supervised learning process of the original model. Considering the similarity in form between $\tilde{F}_{\mathrm{ICL}}$ and $\tilde{F}_{FT}$, their focus is on establishing a connection between ICL and implicit fine-tuning on the original model.

As a comparison, we turn our attention to establish a connection between ICL and the gradient descent process of the dual model, rather than the original model. 
More specifically, we consider the dual model $f(\vx)= \mW\phi(\vx)$ of the nonlinear attention layer, where the weight $\mW$ are updated according to the following loss (presented as Eq~(\ref{lossL}) in Section \ref{sec:3.2}):
$$
\mathcal{L} = -\frac{1}{\eta D}\sum_{i=1}^{N}(\mW_V\vx_i)^T\mW\phi(\mW_K\vx_i),
$$
where $\vx_i$ is the $i$-th demonstration token. 
The prediction output of the trained dual model will be consistent with the ICL output of the attention layer. The gradient descent process of the dual model using this loss can be viewed from a self-supervised learning lens: unlike in supervised fine-tuning, where the original model is instructed to perform gradient descent using a given objective (loss), this loss formed as Eq~(\ref{lossL}) is determined (derived) by the attention mechanism itself and it also does not require additional "true label" to supervise each token $\vx_i$ (so called self-supervised). Therefore, modifications to this self-supervised learning loss will in turn cause modifications in the attention mechanism correspondingly, as we discussed in our work in Section \ref{sec:CL}.
We believe this perspective offers several benefits:
\begin{itemize}
	\item By analyzing from the dual perspective, we can transform the forward inference process into an optimization process. Since optimization processes are well-known and have established theoretical tools (for example, generalization error as mentioned in Section~\ref{sec:3_3}), this transformation can provide reverse insights into analyzing the model mechanisms.
	\item It can clearly observed that the dual model involves a self-supervised representation learning process from the dual perspective. 
	Considering that there are lots of mature works in this area, we can draw on these works to reflect on the attention mechanism, which has also inspired attention modifications as illustrated in Section~\ref{sec:CL}.
	\item Intuitively, this explanation might be also reasonable as the original model is not explicitly instructed to provide the  answer under some given objective (e.g., minimizing cross-entropy) during ICL inference process. Instead, the underlying criterion should be determined by the model's own structure (self-supervised) as we mentioned above.
\end{itemize}

In addition, although we do not target specific tasks like linear regression as previous works mentioned in Section \ref{sec:related_work}, we would like to point out that under those specific weight and input settings, an intuitive explanation can also be provided from a representation learning perspective.
Here, we take the linear regression task as well as the weight constructions considered by \citet{svon2023transformer} as an example.
Specifically, it assumes that the structured input is $\mH = [\vh_{i}]_{i=1}^{N} \in \mathbb{R}^{(d+1) \times (N)}$ where $\vh_{i} = [\vx_{i},y_{i}]$ is sampled from some linear task $y = \vw^T \vx$ and the query token will be $\vh_{N+1} = [\vx_{N+1}, -\vw_0^T\vx_{N+1}]$. And the considered linear self-attention layer will take the constructed weights and query output as:
\begin{equation}\label{eq:consturction}
	\begin{aligned}
		\mW_K = &\mW_Q = \begin{bmatrix}
			\mI_{d \times d}	& 0 \\
			0	&  0
		\end{bmatrix}, \mW_V = \begin{bmatrix}
			0_{d \times d} & 0 \\
			\vw_0^T & -1
		\end{bmatrix},
		\mP = \frac{\eta}{N}\mI, \\
		& \tilde{\vh}_{N+1} = \vh_{N+1} + \mP (\mW_{V}\mH)(\mW_{K}\mH)^{T}\mW_{Q}\vh_{N+1},
	\end{aligned}
\end{equation}
where $\vw_0$ is the underlying initial matrix.
Then the label part of $\tilde{\vh}_{N+1}$ will has the form as $\tilde{y}_{N+1} = -\vw_0^T\vx_{N+1} + \Delta \vw^T \vx_{N+1} = -(\vw_0^T - \frac{\eta}{N}\sum_{i=1}^{N}(\vw_0^T\vx_{i} - y_{i})\vx_{i}^{T}) \vx_{N+1} = - \hat{\vy}_{N+1}$, which is equivalent to the output $ - \hat{y}_{N+1}$ (multiplied by $-1$) of the linear layer $y = \vw^T \vx$ where $\vw$ is initialized as $\vw_0$ after performing one step of gradient descent under mean squared loss $\gL = \frac{1}{2N}\sum_{i=1}^N \| \vw_0^T \vx_i - y_i \|^2$.

In practice, the underlying initial weight matrix $\vw_0$ is set to be approximately $\vzero$ thus the test input  can be formed as $\vh_{N+1} = [\vx_i, \vzero]$ \citep{svon2023transformer}.
In addition, when reading out the label $\hat{y}_{N+1}$, the test prediction $\tilde{y}_{N+1}$ will be multiplied again by $-1$, which can be done by a final projection matrix (or equivalently, $\mP = - \frac{\eta}{N}\mI$).
In this case, we first note that the dual model of the linear attention layer can be written as $f(\vz) = \mW \vz$ where $\mW \in \sR^{(d+1)\times(d+1)}$ and similar to Eq (\ref{lossL}), it will be trained under the loss below:
\begin{equation}
	\min_{\mW} \gL = - \frac{1}{\eta} \sum_{i=1}^{N} \left( \mP \mW_V \vh_i \right)^T \mW \mW_K \vh_i.
\end{equation}
By substituting the corresponding weights in Eq (\ref{eq:consturction}) where we replace $\mP = - \frac{\eta}{N}\mI$ for the readout, the loss can be reformulated as:
\begin{equation}\label{eq:special_problem}
	\min_{\mW} \gL = - \frac{1}{N}\sum_{i=1}^{N} \begin{bmatrix}
		0,~y_i
	\end{bmatrix}
	\mW
	\begin{bmatrix}
		\vx_i \\
		0
	\end{bmatrix}.
\end{equation}
Recalling that $\vh_i = [\vx_i, y_i]$ is sampled from some linear task $y = \vw^T \vx$, we assume that $\| \mW \|_F \le \|\vw\|_2$, it can then be easily seen that the optimal solution for Eq~(\ref{eq:special_problem}) will be
\begin{equation}
	\mW^* = \begin{bmatrix}
		\vzero  &  \vzero \\
		\vw^T  &  \vzero
	\end{bmatrix}.
\end{equation}
Furthermore, similar to Section~\ref{sec:3.2}, we take $\mW_Q\vh_{N+1}$ as the input where $\mW_Q$ is constructed as Eq (\ref{eq:consturction}) and $\vh_{N+1} = [\vx_{N+1}, 0]$, the optimal dual model will output the result $f(\mW_Q \vh_{N+1}) = \mW^* \mW_Q \vh_{N+1} = [\vzero, \vw^T\vx_{N+1}] = [ \vzero, y_{N+1}]$ where the label part will be just the answer for the test query.
Additionally, it would also be interesting to explore how these weights converge to the constructed form in Eq~(\ref{eq:consturction}) or other forms under this special setting as previous works illustrated from the perspective of the dual model. 
Investigating this issue goes beyond the scope of this paper, and we will leave it for future exploration.

\end{document}